\documentclass{article}
\usepackage[utf8]{inputenc}
\usepackage[T1]{fontenc}    

\usepackage[margin=1in]{geometry}
\usepackage[numbers]{natbib}

\usepackage{microtype}
\usepackage{graphicx,epstopdf}
\usepackage{url}            

\usepackage{threeparttable}
\usepackage{booktabs}
\usepackage{multirow}
\usepackage{array}
\usepackage{makecell}
\usepackage{colortbl}
\usepackage{tabularray}

\usepackage{amsmath}
\usepackage{amsthm}
\usepackage{amssymb}
\usepackage{amsfonts}
\usepackage{mathtools}
\usepackage{physics}
\usepackage{bm}
\usepackage{dsfont}
\usepackage{pifont}
\usepackage{wasysym}
\mathtoolsset{showonlyrefs=true}

\usepackage{xcolor}
\usepackage[table]{xcolor}
\definecolor{lightblue}{RGB}{230,246,253}
\usepackage{tocvsec2}
\usepackage{titletoc}
\usepackage{xspace}
\usepackage{enumitem}
\usepackage{wrapfig}
\usepackage{float}
\usepackage{multicol}
\usepackage{subcaption}
\usepackage{diagbox}

\usepackage{algorithm}
\usepackage{algpseudocode}
\usepackage[skins,listings]{tcolorbox}
\tcbuselibrary{breakable}
\usepackage{quiver}

\restylefloat{algorithm} 
\allowdisplaybreaks

\newtheorem{definition}{Definition}
\newtheorem{assumption}{Assumption}
\newtheorem{theorem}{Theorem}
\newtheorem{remark}{Remark}
\newtheorem{lemma}{Lemma}

\newtheorem{corollary}{Corollary}

\newcommand{\EE}{\mathbb{E}}

\newcommand{\PP}{\mathbb{P}}

\newcommand{\R}{\mathbb{R}}

\definecolor{drj}{RGB}{67,151,143}

\definecolor{lyx}{RGB}{221,160,221}

\definecolor{gray}{RGB}{169,169,169}

\newcommand{\softmax}{\text{softmax}}

\newcommand{\diag}{\text{diag}}

\newcommand{\Rot}[1]{\operatorname{Rot}(#1)}
\renewcommand{\abs}{\text{abs}}

\renewcommand{\paragraph}[1]{\par\medskip\noindent\textbf{#1.}}

\providecommand{\inner}[2]{\langle #1, #2 \rangle}
\newcommand{\OmegaSpike}{\Omega_{\tau}}
\newcommand{\OmegaBg}{\Omega_{\tau}^{c}}

\newtcblisting{tikzexample}{
    sidebyside, center lower, bicolor, colbacklower=white,
    sharp corners, frame engine=empty
}

\newtcolorbox{limbox}{
  enhanced, breakable, colback=orange!5, frame hidden, boxrule=0pt,
  borderline west={1.2pt}{0pt}{orange!55!black}, sharp corners,
  left=6pt, right=2pt, top=3pt, bottom=3pt, before skip=5pt, after skip=5pt
}
\newtcolorbox{limbox_blue}{
  enhanced, breakable, colback=blue!5, frame hidden, boxrule=0pt,
  borderline west={1.2pt}{0pt}{blue!55!black}, sharp corners,
  left=6pt, right=2pt, top=3pt, bottom=3pt, before skip=5pt, after skip=5pt
}

\usepackage[pdfusetitle,colorlinks,linkcolor=blue,filecolor=blue,citecolor=black,urlcolor=blue]{hyperref}
\hypersetup{colorlinks,urlcolor=blue}
\title{RoPeSLR: 3D RoPE-driven Sparse-LowRank Attention for Efficient Diffusion Transformers}

\author{
  Yuxi Liu\texorpdfstring{$^*$}{*} \\Peking University\\
  \texttt{yuxiliu666@stu.pku.edu.cn} \\
   \and
  Zekun Zhang\texorpdfstring{$^*$}{*} \\
University of Electronic Science and Technology of China \\
\texttt{2023090909020@std.uestc.edu.cn} \\
  \and
  Yixiang Cai\texorpdfstring{$^*$}{*} \\
Beijing University of Posts and Telecommunications \\
\texttt{caiyixiang@bupt.edu.cn}\\
  \and
    Renjia Deng \\
    Peking University \\
\texttt{2501210078@stu.pku.edu.cn}\\
\and
    Yutong He \\
    Peking University \\
\texttt{yutonghe@pku.edu.cn}\\
\and
    Kun Yuan\texorpdfstring{$^\dagger$}{†} \\
    Peking University \\
    \texttt{kunyuan@pku.edu.cn}
}

\allowdisplaybreaks

\begin{document}
\maketitle
\def\thefootnote{$^*$}\footnotetext{Equal contribution.}
\def\thefootnote{$^\dagger$}\footnotetext{Corresponding author.}
\setcounter{footnote}{0}
\renewcommand{\thefootnote}{\arabic{footnote}}
\setlength{\parindent}{0pt}

\vspace{-1.1cm}
\begin{figure}[H]
    \centering
    \includegraphics[width=0.9\textwidth]{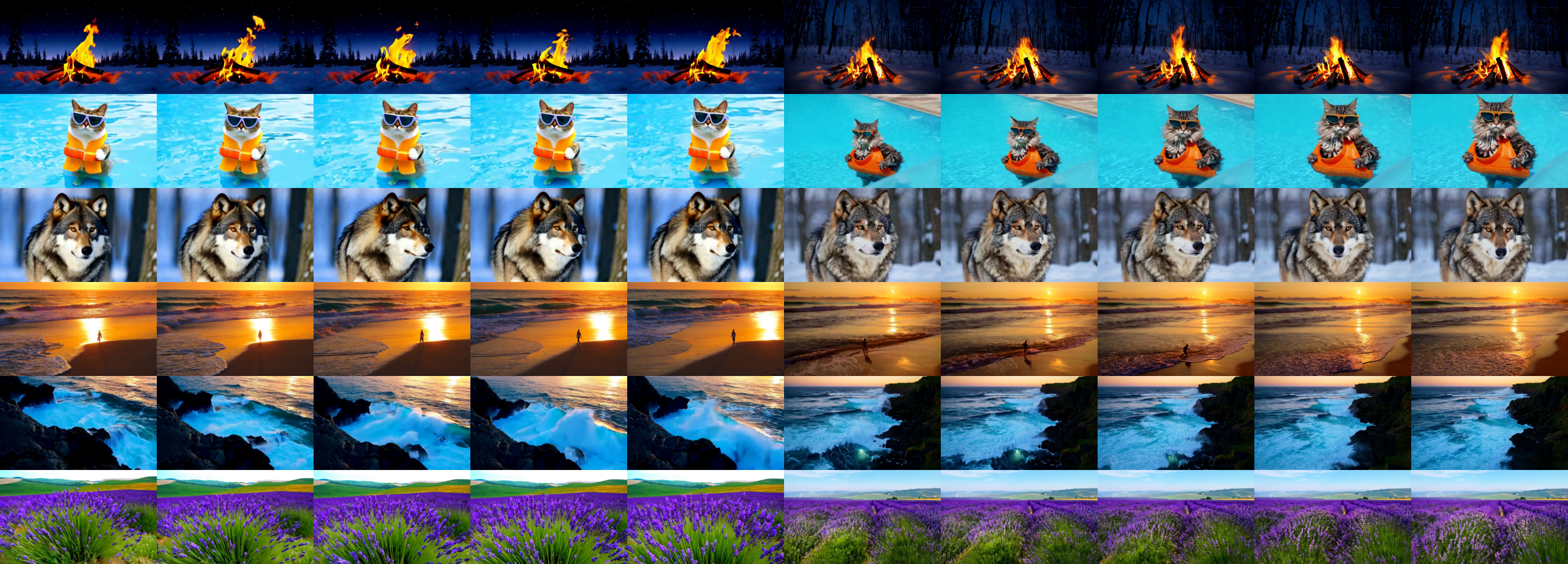}
    \caption{Video generation examples on Wan2.1-T2V-1.3B-480P. Each row shows the full attention baseline sequence followed by the corresponding RoPeSLR output generated at 90\% sparsity. RoPeSLR maintains generation quality on par with full attention even at this high sparsity.}
    \label{fig:appendix_video_samples}
\end{figure}
\vspace{-0.2cm}

\begin{abstract}
Diffusion Transformers (DiTs) have revolutionized high-fidelity video generation, yet their $\mathcal{O}(L^2)$ attention complexity poses a formidable bottleneck for long-sequence synthesis. While recent sparse-linear attention hybrids aim to mitigate this, their performance severely degrades at extreme sparsity due to the "RoPE Dilemma": standard linear attention fails to preserve the orthogonal relative-position structure of 3D Rotary Position Embeddings (RoPE), neutralizing vital distance awareness. To address this, we propose \textbf{RoPeSLR}, a 3D RoPE-guided Sparse-LowRank attention framework. We establish that under empirically validated assumptions, the DiT attention manifold admits a decoupling into a high-frequency semantic spike set (bounded by $\mathcal{O}(L^{3/2})$ sparsity) and an extreme low-rank ($\mathcal{O}(d_h \log L)$) background continuum. Guided by this structural prior, RoPeSLR eschews standard linear attention for a head-wise low-rank parameterization equipped with a learnable 3D Absolute Positional Embedding (PE) injection, seamlessly synthesizing long-range relative distance decay. By guaranteeing sub-quadratic sparsity and sub-linear rank growth, RoPeSLR is exceptionally suited for scaling to ultra-long video inference. Extensive evaluations validate this scalable superiority: at 90\% sparsity, RoPeSLR achieves up to $10\times$ fewer FLOPs on Wan2.1-1.3B and delivers a $2.26\times$ end-to-end inference speedup on the ultra-long 100K+ token sequences of HunyuanVideo-13B, all while maintaining near-lossless generation fidelity (less than 1.3\% average VBench degradation).
\end{abstract}

\vspace{-0.2cm}
\section{Introduction}
Diffusion Transformers (DiTs) \cite{peebles2023scalable} have revolutionized high-fidelity video generation, emerging as the dominant architecture for state-of-the-art models. However, processing the full spatiotemporal token sequence introduces a prohibitive computational bottleneck. The core of this inefficiency lies in the $\mathcal{O}(L^2)$ complexity of the mechanism, which scales quadratically with the sequence length $L$.

To mitigate this bottleneck, sparse attention mechanisms have been widely adopted to restrict queries to localized neighborhoods\cite{xi2025sparse,chen2025sparsevdit,yang2025sparse}. While efficient, pushing sparsity to extreme levels rigidly masks out global receptive fields, fundamentally truncating long-range semantic dependencies. This aggressive pruning often leads to semantic misalignment and severe degradation of fine-grained details in generated videos, presenting a strict ceiling on the allowable sparsity.

Recent advancements attempt to recover this lost global context by fusing sparse attention with a linear attention branch \cite{zhang2025sla, zhang2026sla2, fang2026salad}. While empirically successful at moderate sparsity, these sparse-plus-linear hybrids suffer from two fundamental disadvantages that bottleneck their performance at extreme sparsity: 
(1) \textbf{The RoPE Dilemma:} To maintain matrix associativity, these hybrid baselines rely on standard positive feature maps that \textbf{fail to preserve the exact orthogonal relative-position structure of 3D RoPE.} Consequently, the global aggregation of context neutralizes the explicit relative spatiotemporal offsets, causing the model to lose vital distance awareness.
(2) \textbf{Lack of Guarantees:} These architectures heavily rely on heuristic designs and lack a justification explaining \textit{why} a linear combination can faithfully approximate the complex, non-linear softmax manifold without severe distribution shifts.

This brings us to a pivotal question:
\begin{limbox}
\textbf{Question.} \emph{Does there exist an efficient attention paradigm that can closely approximate full attention at extreme sparsity, guided by theoretical structural analysis, while delivering superior generation quality and computational efficiency in downstream tasks?}
\end{limbox}

To answer this, we propose \textbf{RoPeSLR}, a 3D RoPE-guided Sparse-LowRank attention framework. Unlike prior heuristics, our design is grounded in an existence result (\textbf{Theorem \ref{thm:main_text_informal}}), which suggests that under empirically supported assumptions, the post-softmax attention matrix in DiTs admits a sparse-low-rank decomposition. We show that under the exact Fourier series expansion provided by 3D RoPE, pre-trained DiTs empirically exhibit a spectral concentration that structurally decouples the attention manifold into an $\mathcal{O}(L^{3/2})$ high-frequency semantic spike set and an extreme low-rank ($\mathcal{O}(d_h \log L)$) background continuum, where $d_h$ is the feature dimension per head.

\textbf{Crucially, we establish why this sparse-low-rank paradigm is uniquely powerful for efficient long-sequence inference in DiTs.} While sparse-low-rank designs exist in LLMs, their scaling efficiency is fundamentally amplified in DiTs: the 3D axis-splitting of RoPE accelerates spectral decay, while spatiotemporal locality yields a much tighter sparsity bound. This ensures that RoPeSLR remains exceptionally compact—maintaining extreme sparsity and a minimal rank requirement \textbf{even as sequence length $L$ scales}—while providing a nearly lossless approximation of full attention.

RoPeSLR systematically addresses the flaws of prior methods by eschewing standard linear attention in favor of a \textbf{non-linear low-rank MLP compensator}. This architectural shift is necessary to overcome the limitations of linear branches via three high-level advantages: (1) \textbf{RoPE Compatibility}: Combined with a novel learnable 3D Absolute Positional Embedding (PE) injection, our low-rank bottleneck synthesizes the relative distance decay that linear feature maps inherently destroy. (2) \textbf{Non-linear Expressivity}: It breaks free from the restrictive positive feature maps required by linear attention, unlocking robust non-linear representation capacity. (3) \textbf{Theoretically-Inspired Amortization}: While our existence theorem establishes the global low-rank manifold of the background continuum, executing explicit $\mathcal{O}(L)$ linear routing still compromises RoPE structures. Instead, we implement the low-rank compensator as an \textbf{amortized token-wise MLP proxy}. Conditioned on globally-modulated token features and explicit 3D absolute coordinates, this independent head-wise parameterization efficiently decodes the smooth, low-frequency background field. It translates our theoretical insights into a hardware-efficient framework that entirely circumvents sequence-level aggregation, harmonizing beautifully with the dynamically allocated sparsity budgets.

Our main contributions are summarized as follows:
\vspace{-0.2cm}

\begin{itemize}[leftmargin=*, itemsep=0em]
    \item \textbf{Theoretical Foundation.} We provide an analytical existence result demonstrating that the post-softmax attention matrix in DiTs exhibits a sparse-low-rank structure. We derive asymptotic bounds for both branches and prove that this decomposition is more pronounced in DiTs than in LLMs.
    \item \textbf{Synergistic Architecture \& Explicit Credit Assignment.} We introduce RoPeSLR, a 3D RoPE-driven architecture that explicitly decouples computational responsibilities. We adopt block-sparse attention (e.g., VMoBA) as a hardware-efficient engine to deliver the \textbf{primary inference speedup}. In contrast, our \textbf{core novelties}—the non-linear low-rank compensator, absolute 3D PE injection, and distribution-aware gated fusion—are specifically designed to synthesize global context and recover relative positional decay, delivering the \textbf{critical quality restoration}.
    \item \textbf{Excellent Performance on Ultra-Long Sequences.}  Extensive experiments across large-scale DiTs (e.g., Wan2.1-1.3B/14B, Hunyuan13B) demonstrate that RoPeSLR excels in the quality-efficiency trade-off, particularly as sequence length scales. It retains the hardware efficiency of extreme sparse attention—achieving up to \textbf{$10\times$ FLOPs reduction and a substantial $2.26\times$ end-to-end latency speedup} on ultra-long videos exceeding 100K tokens—while accurately restoring the fine-grained details of full attention with less than 1.3\% average VBench degradation.
    
\end{itemize}

\section{Related work}
\label{sec:Related work}
\textbf{Diffusion Transformers and Efficiency Challenges.}
Diffusion Transformers (DiTs) \cite{peebles2023scalable} have emerged as the dominant architecture for high-fidelity video generation. Earlier decoupled approaches \cite{ma2024latte,lu2023vdt,zheng2024opensora} split video into separate spatial and temporal attention branches, trading joint spatio-temporal modeling capacity for efficiency. Recent unified 3D DiTs \cite{zheng2024opensora,yang2025cogvideox,kong2024hunyuanvideo,klingteam2025klingomni} process the full token sequence jointly, achieving superior quality but suffering from prohibitive $O(N^2)$ self-attention complexity that scales poorly with increasing video resolution and length.

\textbf{Sparse Attention for Video DiTs.}
Sparse attention is the most widely adopted approach to address the quadratic complexity, leveraging the sparsity of video attention patterns where each token only attends to a small subset of relevant tokens.
Dynamic sparse attention methods adapt attention patterns to input content: Radial Attention \cite{li2025radial} introduces global energy decay-based radial receptive fields, Vmoba \cite{wu2025vmoba} proposes mixture-of-block attention for video diffusion.
Sampling-based dynamic sparse attention methods choose optimal attention patterns during inference: Sparse-vDiT \cite{chen2025sparsevdit}, Sparse VideoGen \cite{xi2025sparse} and its successor Sparse VideoGen2 \cite{yang2025sparse} explore spatial-temporal sparsity and semantic-aware permutation, while VSA \cite{zhang2025vsa} introduces fully trainable sparse attention.
Training-free sparse methods offer plug-and-play acceleration: SpargeAttention \cite{zhang2025spargeattention} and the training-free adaptive sparse attention approach \cite{xia2025trainingfree} dynamically adjust sparsity without retraining. Mixture of Distributions Attention (MOD-DiT) \cite{liu2026mixture} realizes sparse attention via online inference-time prediction of attention patterns. Re-ttention \cite{chen2025retention} achieves ultra-sparsity via attention statistical reshaping, LiteAttention \cite{shmilovich2025liteattention} focuses on temporal sparsity, and DSV \cite{tan2026dsv} exploits dynamic sparsity for training acceleration.

\textbf{Sparse-Linear Attention Fusion.}
While sparse attention delivers significant speedups, it suffers from quality degradation at high sparsity levels. To overcome this, recent works combine sparse attention with linear attention, which has linear complexity but complementary approximation properties.
SLA \cite{zhang2025sla} first proposed sparse-linear attention fusion for diffusion transformers, followed by SLA2 \cite{zhang2026sla2} which adds learnable routing and quantization-aware training. SALAD \cite{fang2026salad} extends this paradigm to video DiTs, and Sana \cite{xie2024sana} explores pure linear diffusion transformers for efficient image synthesis.

\textbf{Low-Rank and Hardware Optimizations.}
Low-rank approximation is a well-established efficiency technique in LLMs. SLTrain \cite{han2024sltrain} and CR-Net \cite{kong2025crnet} demonstrate the effectiveness of sparse-plus-low-rank approaches for LLM pre-training. However, these methods have not been widely adapted to diffusion transformers, and the "low-rank dilemma" of linear attention \cite{fan2024breaking} remains a key challenge.
In parallel, hardware-optimized attention implementations such as FlashAttention-2 \cite{dao2023flashattention2} and SageAttention \cite{zhang2025sageattention} provide speedups by optimizing memory access and numerical precision.

\section{Preliminary}
\label{sec:preliminary}

\textbf{Notation and Attention Paradigms.} Let $X \in \R^{L \times d_{\text{model}}}$ denote a video sequence flattened from a spatiotemporal grid $p = (t, x, y) \in [T] \times [H] \times [W]$. For each attention head, the input is projected into $Q, K, V \in \R^{L \times d_h}$. Token-specific representations at position $p$ are denoted as $q_p, k_p, v_p \in \R^{d_h}$.

Standard attention computes $O = \softmax(QK^\top / \sqrt{d_h})V$, incurring an intractable $\mathcal{O}(L^2)$ cost for video generation. \textit{Block-sparse attention} circumvents this by restricting queries to a localized neighborhood $\mathcal{N}(p)$:
\begin{equation}
    O_p = \sum_{q \in \mathcal{N}(p)} \frac{\exp\left( q_p^\top k_q / \sqrt{d_h} \right)}{\sum_{r \in \mathcal{N}(p)} \exp\left( q_p^\top k_r / \sqrt{d_h} \right)} v_q
\end{equation}

\textbf{3D RoPE and Shift-Invariance.} 3D RoPE imparts geometric priors via orthogonal rotation. The head dimension is partitioned across axes ($d_h = d_t + d_x + d_y$), with rotational frequencies decaying exponentially:
\begin{equation}
    \theta_m^k = 10000^{-2(m-1)/d_k}, \quad k \in \{t, x, y\}, \quad m \in [1, d_k/2]
\end{equation}
The composite rotation matrix $R(p) \in \R^{d_h \times d_h}$ is block-diagonal. Crucially, the pre-softmax logit preserves shift-invariance, elegantly modeling distance decay:
\begin{equation}
    s_{p,q} = \frac{1}{\sqrt{d_h}} \inner{R(p)q_p}{R(q)k_q} = \frac{1}{\sqrt{d_h}} \inner{q_p}{R(q-p)k_q}
\end{equation}

\textbf{Linear Attention and The RoPE Dilemma.} Linear attention achieves $\mathcal{O}(L)$ complexity by linearizing the exponential kernel with a feature map $\phi(\cdot): \R^{d_h} \to \R^r$, exploiting matrix associativity:
\begin{equation}
    O_p = \phi(q_p)^\top \underbrace{\sum_{q=1}^L \phi(k_q) v_q^\top}_{\text{Global Context } C}
\end{equation}

\textbf{The Dilemma.}\label{The Dilemma of linear attention} The global aggregation of context $C$ neutralizes the explicit relative offset $(p-q)$. Because the \textbf{standard positive feature maps required for matrix associativity cannot preserve the exact orthogonal rotation} ($\phi(R(p)q_p)^\top \phi(R(q)k_q) \neq \phi(q_p)^\top \phi(k_q) \cdot f(p-q)$), the exact RoPE structure cannot be factored out of the summation. This incompatibility forces a severe trade-off between $\mathcal{O}(L)$ efficiency and critical relative positional cues, dictating our absolute-to-relative recovery mechanism in Section \ref{sec:method}.

\section{Method}
\label{sec:method}
\vspace{-0.15cm}
Our proposed \textbf{RoPeSLR} is designed to exploit the sparse-low-rank nature of attention maps in Diffusion Transformers (DiT). 

\subsection{Theoretical Foundation: The Sparse-Low-Rank Theorem}
\label{sec:theoretical_foundation}

Our architecture is motivated by a structural analysis demonstrating that the post-softmax attention map $A$ in 3D RoPE-equipped DiTs intrinsically admits a sparse-low-rank decomposition. We formalize this in the following Theorem (Main Theorem in \ref{theorem: main}, with full proof in Appendix \ref{sec:theory}).

\begin{limbox}
\begin{theorem}[Informal]
\label{thm:main_text_informal} 
    Under Assumptions \ref{assumption 1} and \ref{assumption 2} (empirically validated bounded projection weights and the exponential spectral concentration of 3D RoPE, justified in \textbf{Remark \ref{remark 1}}), for any sequence length $L$, choosing the energy threshold $\tau$ and error tolerance $\mathcal{E}$ as:
\[
\tau = \frac{c}{ L^{\frac{1}{2}}}, \quad \mathcal{E} = \frac{\tau}{2} = \frac{c}{ 2L^{\frac{1}{2}}}
\]
where $c > 0$ is a constant independent of $L$, the post-softmax attention matrix $A$ inherently decomposes into a high-energy sparse target ($A_{\text{sparse}}$) and a smooth background target ($A_{\text{bg}}$). Consequently, there exists a sparse-low-rank reconstructed matrix $\hat{A}_{\text{final}} = \tilde{A}_{\text{sparse}} + \hat{A}_{\text{lowrank}}$ satisfying the following properties:
\begin{enumerate}
 \item \textbf{Sub-quadratic Sparsity:} The residual sparse branch $\tilde{A}_{\text{sparse}}$, which acts as an exact error compensator on the spike set ($\tilde{A}_{\text{sparse}}(p,q) = A(p,q) - \hat{A}_{\text{lowrank}}(p,q)$ for $(p,q) \in \Omega_\tau$), has total non-zero entries bounded by $\text{NNZ}(\tilde{A}_{\text{sparse}}) = \mathcal{O}(L^{\frac{3}{2}})$.
    \item \textbf{Sub-linear Rank:} The globally unmasked low-rank branch $\hat{A}_{\text{lowrank}}$, which acts as a dense approximator for the background target $A_{\text{bg}}$, requires a bottleneck rank of $R = \mathcal{O}\left( d_h \cdot \log L \right)$.
    \item \textbf{Asymptotic Error Bound:} With high probability $1 - \delta_{\text{fail}}$, the global reconstruction error is uniformly bounded by $\max_{p,q} |\hat{A}_{\text{final}}(p,q) - A(p,q)| = \mathcal{O}\left( L^{-\frac{1}{2}} \right)$.
\end{enumerate}
\end{theorem}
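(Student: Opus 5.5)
We first fix a concrete reading of the two assumptions that the informal statement refers to. Assumption~\ref{assumption 1} gives uniformly bounded projections: $\|q_p\|,\|k_q\|\le B$ and hence $|s_{p,q}|\le B^2/\sqrt{d_h}$ for all $p,q$. Assumption~\ref{assumption 2} is the exponential spectral concentration of 3D RoPE. By the shift-invariance identity $s_{p,q}=\frac{1}{\sqrt{d_h}}\inner{q_p}{R(q-p)k_q}$, the logit is an exact trigonometric polynomial in the offset $q-p$, with frequencies $\theta_m^k$ on each axis. Concentration then means the coefficient mass on the $m$-th frequency pair decays like $e^{-\gamma m}$.

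The proof has three steps.

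\textbf{Step 1 (sparsity).} Define the spike set $\Omega_\tau=\{(p,q): A(p,q)\ge\tau\}$ and set $A_{\text{sparse}}=A\odot\mathbf 1_{\Omega_\tau}$ and $A_{\text{bg}}=A-A_{\text{sparse}}$. Each row of $A$ sums to one, so each row contains at most $1/\tau$ entries that are at least $\tau$. Summing over rows gives
\[
|\Omega_\tau|\le L/\tau=L^{3/2}/c .
\]
This counting argument is elementary. Its rigidity is exactly why the choice $\tau=cL^{-1/2}$ yields the $\mathcal{O}(L^{3/2})$ bound. We then define $\tilde A_{\text{sparse}}$ on $\Omega_\tau$ as the exact residual $A-\hat A_{\text{lowrank}}$. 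Consequently it contributes zero error on $\Omega_\tau$, and item~1 follows as soon as $\hat A_{\text{lowrank}}$ is constructed.

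\textbf{Step 2 (low rank).} We build $\hat A_{\text{lowrank}}$ from a low-rank approximation of the unnormalized kernel $E(p,q)=\exp(s_{p,q})$. The construction has two truncations.

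\begin{itemize}
\item \emph{Frequency truncation.} On each axis keep only the first $M$ frequency pairs. Using Assumption~\ref{assumption 2}, the dropped logit mass is $\mathcal{O}(e^{-\gamma M})$. Choosing $M=\mathcal{O}(\log L)$ makes this perturbation $\mathcal{O}(L^{-1/2})$.
\item \emph{Taylor truncation.} Write the truncated logit as an inner product of separable features $\psi(p,q_p)\in\R^{\mathcal{O}(d_h)}$ and $\psi'(q,k_q)$; the cosine/sine of $\theta(q-p)$ factor via the angle-addition formula. The logit is bounded, so expanding $\exp$ to degree $D=\mathcal{O}(\log L/\log\log L)$ gives entrywise error $\mathcal{O}(L^{-1/2})$.
\end{itemize}

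A naive tensor-power rank count would be $d_h^D$, which is far too large. To obtain $R=\mathcal{O}(d_h\log L)$ we instead take a random-feature / Johnson--Lindenstrauss sketch of the truncated kernel. Boundedness of the entries gives uniform concentration over all $L^2$ pairs via a union bound, with sketch dimension $\mathcal{O}(d_h\log(L/\delta_{\text{fail}}))$. This is the source of the ``with high probability'' clause in item~3.

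Row normalization is handled by dividing by the low-rank row sums $\hat E\mathbf 1$. The rescaling is a diagonal scaling and so preserves rank. Its error stays relative, because the row sums are $\Theta(L)$ by the bounded-logit assumption.

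\textbf{Step 3 (error).} On $\Omega_\tau$ the error is zero by construction. Off $\Omega_\tau$, the error is $|\hat A_{\text{lowrank}}-A_{\text{bg}}|$, which is bounded by the combined normalized truncation and sketch errors. Each entry of $A$ is $\Theta(1/L)$ under bounded logits, so the relative kernel error translates into entrywise error well below $\mathcal{E}=\tau/2$. This yields $\max_{p,q}|\hat A_{\text{final}}-A|=\mathcal{O}(L^{-1/2})$.

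\textbf{Main obstacle.} The hard part is Step~2's rank bound, specifically obtaining a rank linear in $d_h$ and only logarithmic in $L$ rather than exponential in the Taylor degree. The first attempt will be the random-feature route: positive random features for $\exp\inner{\cdot}{\cdot}$ applied to the frequency-truncated RoPE features, with the $\log L$ coming only from the union bound and the frequency cutoff. If the variance of those features is not controlled by $B$ alone, the fallback is to use Assumption~\ref{assumption 2} more strongly, so that the effective rank of the logit matrix itself is $\mathcal{O}(d_h)$ per retained frequency, and then to bound the numerical rank of the entrywise exponential using the bounded dynamic range.
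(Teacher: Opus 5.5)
Your plan is essentially the paper's proof: the row-stochastic counting bound $|\Omega_\tau|\le L/\tau$, truncation of the exact RoPE Fourier expansion (a sum of rank-$\le 2$ frequency terms) via Assumption~\ref{assumption 2}, positive random features (FAVOR+) with a union bound over the $L^2$ pairs supplying the $\log L$, a rank-preserving diagonal normalization, and the exact residual $\tilde A_{\text{sparse}}=A-\hat A_{\text{lowrank}}$ on $\Omega_\tau$. The differences are bookkeeping. The paper normalizes by the true partition functions $Z_p$, which is legitimate for an existence claim and removes any need to control the truncation on $\Omega_\tau$, where its Assumption~\ref{assumption 2} gives none. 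It also turns constant tolerances $\delta=\delta_{\text{rel}}=\mathcal{E}/(4\tau)=1/8$ into the absolute bound $|\hat A_{\text{lowrank}}-A|\le(\mathcal{E}/\tau)A\le\mathcal{E}$ using $A\le\tau$ on $\Omega_\tau^c$ rather than $A=\Theta(1/L)$, so your $\mathcal{O}(L^{-1/2})$ logit targets, the $\mathcal{O}(\log L)$ frequency cutoff, and the Taylor step are unnecessary.
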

\end{limbox}

\textbf{Mathematical Anatomy of 3D RoPE.}
By exploiting orthogonal shift-invariance ($R(p)^\top R(q) = R(q-p)$), the pre-softmax logit admits an exact Fourier series expansion over the relative spatiotemporal offset $\Delta = p - q$. (See lemma \ref{lemma 3D RoPE Induces Exact Fourier Series})
\begin{equation}
\label{eq:rope_fourier_expansion}
s_{p,q} = \sum_{k \in \{t,x,y\}} \sum_{m=1}^{d_k/2} \left( a_{p,m}^{k} \cos(\theta_{m}^{k} \Delta_k) + b_{p,m}^{k} \sin(\theta_{m}^{k} \Delta_k) \right)
\end{equation}
Under this explicit Fourier basis, the background interaction coefficients of pre-trained DiTs exhibit a rapid exponential energy decay. As formally justified in \textbf{Remark \ref{remark 1}} and validated in Appendix \ref{sec:empirical_validation}, this spectral concentration serves as a structural prior that fundamentally decouples the attention manifold into high-frequency semantic spikes and an $\mathcal{O}(d_h \log L)$ low-frequency continuum.

\begin{figure}[htbp]
\centering
\includegraphics[width=0.7\textwidth]{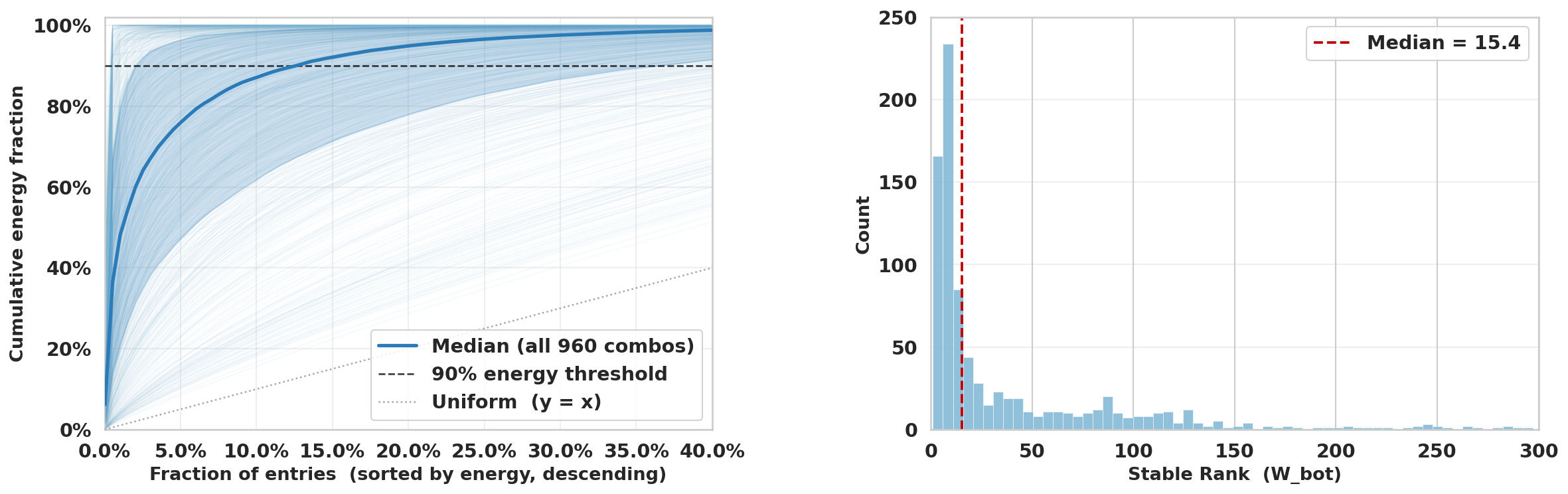}
\caption{
Analysis of HunyuanVideo post-attention matrices across 960 diverse layer-head combinations, verifying their inherent sparse and low-rank structure under a 90\% energy threshold.
}
\label{fig:attn_decomp_hunyuan}
\end{figure}

\textbf{Sparse-Low-Rank Structure of DiTs and LLMs.}
Our framework formally establishes why the sparse-low-rank structure is fundamentally more pronounced in DiTs than in LLMs. This superiority is rooted in two mathematical properties. First, the 3D axis-splitting of RoPE in DiTs induces a \textbf{cubic acceleration in spectral decay} compared to the 1D RoPE standard in LLMs, dictating a drastically lower rank requirement for the background continuum. Second, the inherent spatiotemporal locality of video heavily concentrates semantic spikes, yielding a \textbf{significantly tighter sparsity bound} by avoiding the dense, long-range token retrieval typical of LLMs' causal attention. We defer the formal proofs, including the rank inequalities and sparsity bounds, to Appendix \ref{sec:theory}, Remark \ref{remark: LLM and DIT}.

\begin{limbox_blue}
    \textbf{Empirical Validation.} Analysis of pre-trained DiT attention maps confirms the widespread existence of the sparse-low-rank structure across various layers and denoising steps (Figure \ref{fig:attn_decomp_hunyuan}). As sequence length $L$ scales, the stable rank of the background matrix maintains sublinear while its sparsity increases (Figure \ref{fig:rank_scaling}), directly corroborating the results in Theorem \ref{thm:main_text_informal}. Furthermore, the comparison between Figure \ref{fig:attn_decomp_hunyuan} and Figure \ref{fig:attn_decomp_qwen} validates that this structural decomposition is \textbf{significantly more pronounced in DiTs than in LLMs} (as theoretically justified in Remark \ref{remark: LLM and DIT}).
\end{limbox_blue}
\begin{figure}[htbp]
  \centering
  \includegraphics[width=0.9\textwidth]{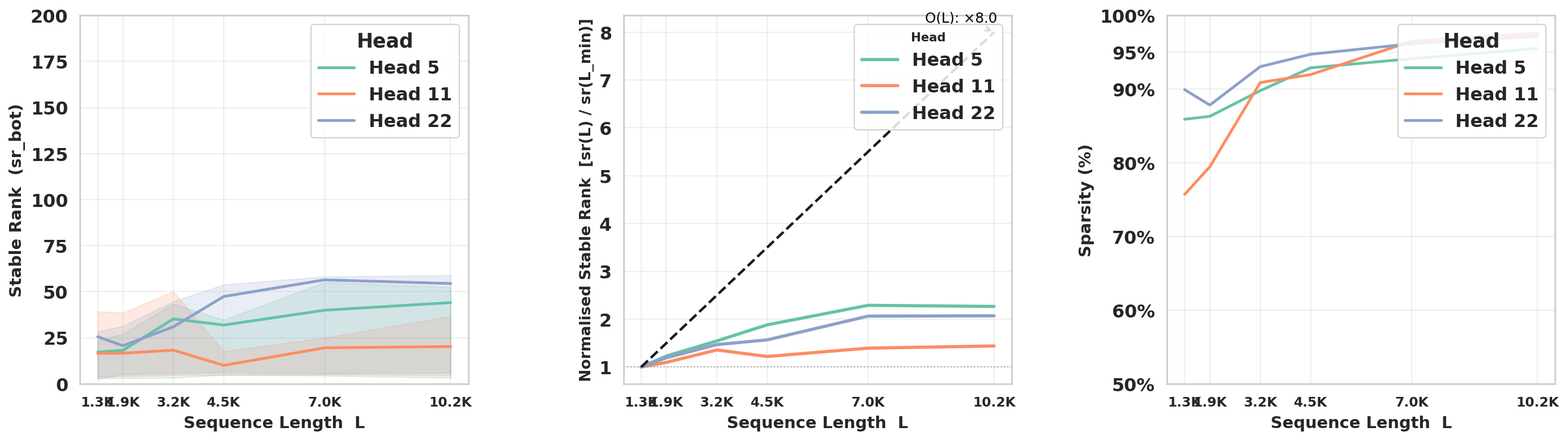} 
  \caption{HunyuanVideo post-softmax matrices are decomposed using a 90\% per-row energy threshold. Across sequence lengths from 1.3K to 10.2K, the low-energy residual maintains bounded stable rank with sublinear growth, aligning with the theoretical bounds established in Theorem \ref{theorem: main}.}
  \label{fig:rank_scaling}
\end{figure}

\subsection{Positional Context Recovery via Absolute PE Injection}
\label{sec:pos_info}

Standard linear attention neutralizes the relative positional offsets essential to 3D RoPE, fundamentally destroying distance awareness (Section \ref{sec:preliminary}). To recover this without reintroducing the $\mathcal{O}(L^2)$ overhead, we leverage our theoretical findings in \textbf{Lemma \ref{lemma 3D RoPE Induces Exact Fourier Series}}. 

Because the 3D RoPE pre-softmax logit expands as a Fourier series over the relative offset $\Delta = p - q$, trigonometric identities allow us to decouple pairwise interactions into products of absolute spatial bases, which allows the global background context to be factorized into query-independent semantic descriptors and query-dependent positional frequencies. This transformation enables a transition from $\mathcal{O}(L^2)$ cross-token routing to an efficient $\mathcal{O}(L)$ point-wise decoding paradigm. We provide the algebraic insight and the structural justification for this transition in \textbf{Remark \ref{remark:token_wise_derivation}}.

We instantiate this decoding via \textbf{Implicit Neural Representations (INRs)} \cite{sitzmann2020implicit}. While standard MLPs suffer from \textit{spectral bias} \cite{rahaman2019spectral} and struggle to capture high-frequency spatial variations, projecting coordinates into periodic bases empowers them to learn high-frequency, translation-invariant mappings \cite{tancik2020fourier}. Accordingly, we inject a learnable 3D Absolute Positional Embedding ($\text{PE}_{\text{3D}}$)—constructed from the trigonometric frequencies ($\theta_m^k$) of our RoPE expansion—into the input $X \in \mathbb{R}^{L \times d_{\text{model}}}$ before routing:
\begin{equation}
    \hat{X} = X + \alpha \odot \text{PE}_{\text{3D}}
    \label{eq:3D PE+X}
\end{equation}
where $\alpha \in \mathbb{R}^{d_{\text{model}}}$ is a learnable gate. Furnished with these absolute geometric bases, our low-rank compensator is equipped to synthesize the long-range relative distance attenuation that linear baselines destroy (\textbf{Remark \ref{remark:token_wise_derivation}}).

\begin{limbox_blue}
    \textbf{Empirical Validation.} Macroscopically, RoPeSLR accurately \textbf{reproduces the exact periodic distance-decay} of full attention (Figure \ref{fig:cosine_similarity}). Mechanistically, Gram spectral analysis (\textbf{Appendix \ref{subsec:svd_mechanistic_analysis}}) proves RoPeSLR successfully decodes absolute coordinates into pure geometric standing waves, which corroborates the insights in \textbf{Remarks \ref{remark:3D ROPE}} and \textbf{\ref{remark:token_wise_derivation}}.
\end{limbox_blue}
\begin{figure}[htbp]
  \centering
  \includegraphics[width=\textwidth]{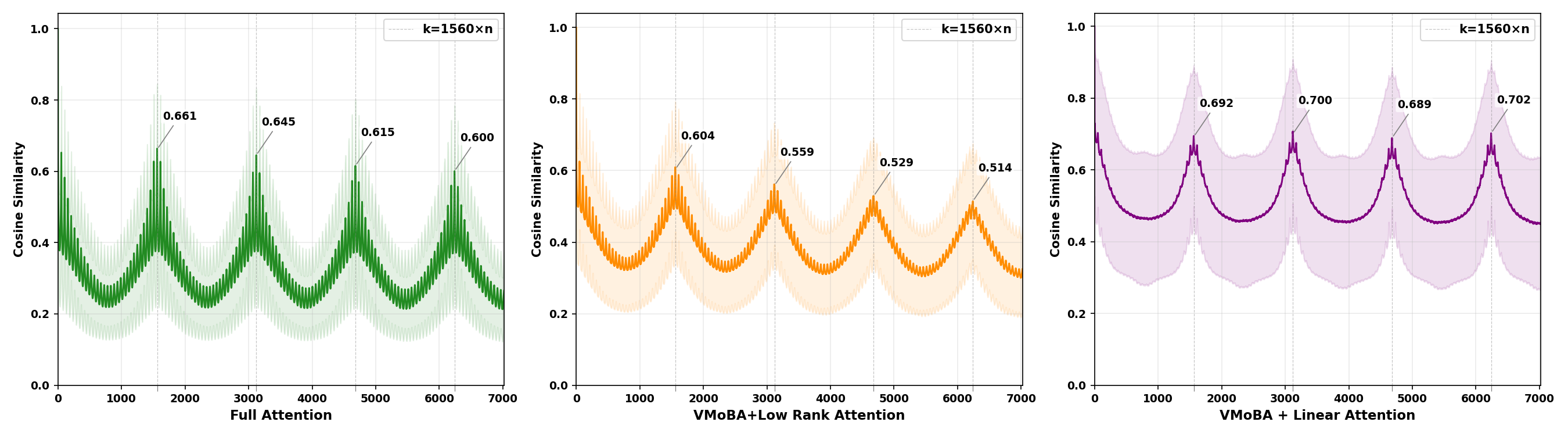}
  \vspace{-0.2cm}
  \caption{
    Statistical analysis of cosine similarity between outputs from three distinct attention variants revealing empirically observed periodic patterns across different token distances.
  }
  \label{fig:cosine_similarity}
\end{figure}

\subsection{Head-wise Sparse-LowRank Decomposition}
\label{sec:decomposition}

To accommodate \textit{head heterogeneity}—the phenomenon where distinct attention heads naturally specialize in varying semantic scales—we eschew monolithic context modeling. Instead, we propose a decoupled architecture: the sparse branch dynamically allocates active token budgets via a continuous energy threshold, while the low-rank compensator employs independent head-wise parameterizations to learn head-specific semantic mappings under a fixed structural rank prior.

\textbf{Energy-Driven Adaptive Sparse Branch.} 
To explicitly capture the non-linear, high-energy ``spikes'' that resist low-rank compression (Theorem \ref{thm:main_text_informal}), we instantiate this branch using the VMoBA framework \cite{wu2025vmoba} governed by a fixed energy threshold $\tau$. We must clarify that we adopt it as a \textbf{practical, hardware-efficient proxy} for our sparse branch. It complements our framework by:
\begin{itemize}[leftmargin=*, itemsep=0em]
    \item \textbf{Empirical Alignment via Energy Thresholding.} Rather than enforcing a rigid, predefined sparsity ratio, the fixed threshold $\tau$ dynamically determines the active block count. This threshold-driven routing inherently yields an adaptive sparsity pattern.
    \item \textbf{Head-wise Adaptivity.} It allocates sparsity budgets independently per head, aligning with the heterogeneous spatiotemporal focus of our decoupled architecture.
    \item \textbf{Hardware Synergy.} It enables high-throughput parallel acceleration on GPUs, ensuring that our extremely high sparsity settings can be effectively exploited for computational efficiency.
\end{itemize}

\textbf{Head-wise Low-Rank Context Compensator.} 
Explicit cross-token aggregation poses a fundamental dilemma: computing exact relative distances incurs an intractable $\mathcal{O}(L^2)$ pairwise routing cost, whereas $\mathcal{O}(L)$ linear approximations neutralize 3D RoPE priors. We resolve this by reframing global context computation as a \textit{conditioned implicit function approximation}. Since Theorem \ref{theorem: main} establishes that the background target $A_{\text{bg}}$ is exceptionally low-rank, the exact context aggregation $(A_{\text{bg}}V)$ intrinsically forms a smooth, low-frequency spatial manifold \cite{wang2020linformer}. 

Rather than explicitly computing sequential aggregation, we introduce a token-wise \textbf{learned non-linear compensator} to act as an amortized proxy. This design exploits the fact that in deep DiT layers, each token $X_p$ has already integrated macroscopic semantic states from preceding layers and global modulations (e.g., AdaLN). Exploiting this dense state encapsulation, our compensator dynamically decodes the smooth global background directly from the modulated token $X_p$ conditioned on its absolute spatial coordinates, bypassing the routing bottleneck entirely (\textbf{see Remark \ref{remark:token_wise_derivation}} for detailed insight). To accommodate head heterogeneity, we instantiate this compensator \textit{independently for each attention head $h$} via a coordinate-guided MLP:
\begin{equation}
    O_{\text{lowrank}}^{(h)} = \sigma\!\left(\sigma(\hat{X} W_A^{(h)}) W_B^{(h)}\right)
    \label{eq:low rank+sigmoid}
\end{equation}
where $\sigma(\cdot)$ is the sigmoid activation, $W_A^{(h)} \in \mathbb{R}^{d_h \times r}$ and $W_B^{(h)} \in \mathbb{R}^{r \times d_h}$ are head-specific projection weights, and $d_h$ is the head dimension. 
Since adaptive sparsity yields greater benefits than adaptive rank, we employ a uniform rank $r$ for implementation convenience, relying on the head-wise parameterization to capture diverse semantics. This decoupled design solves the RoPE dilemma by point-wise decoding the global context, preserving robust non-linear expressivity without the $\mathcal{O}(L^2)$ routing overhead. \textbf{As mechanistically proven via Gram spectral analysis in Appendix \ref{subsec:svd_mechanistic_analysis}}, this explicit PE injection prevents representational collapse, allowing the MLP to decode absolute coordinates into pure, low-frequency geometric standing waves without explicit pairwise computation.

\subsection{Feature Alignment and Gated Fusion}
\label{sec:fusion}

Fusing highly localized sparse features with dense global contexts introduces severe numerical imbalances. We resolve this via a distribution-aware fusion protocol.

\textbf{Distribution Alignment.} We apply independent RMSNorm modules to stabilize the variance of both outputs. This projects the sparse spikes and the dense low-rank approximations onto a shared numerical manifold, guaranteeing balanced optimization:
\begin{equation}
    \tilde{O}_{\text{sparse}}^{(h)} = \text{RMSNorm}\left(O_{\text{sparse}}^{(h)}\right), \quad \tilde{O}_{\text{lowrank}}^{(h)} = \text{RMSNorm}\left(O_{\text{lowrank}}^{(h)}\right)
\end{equation}

\textbf{Input-Dependent Routing.} Since our low-rank compensator functions as a feature-conditioned decoder, the fusion mechanism must dynamically adapt to the input. We compute a token-wise scalar gate $g \in (0, 1)$ via a lightweight projection:
\begin{equation}
    g = \sigma(\hat{X} W_g + b_g)
    \label{eq:gate}
\end{equation}
where $W_g \in \mathbb{R}^{d_{\text{model}} \times 1}, b_g\in \mathbb{R}$. We deliberately bottleneck this gate to a single scalar per token. This ensures $g$ acts strictly as a magnitude regulator, preventing the directional distortion of the aligned semantic feature space that a high-dimensional vector gate would inevitably introduce. 

The final representation is synthesized via a gated residual combination:
\begin{equation}
   O^{(h)} = \tilde{O}_{\text{sparse}}^{(h)} + g \odot \tilde{O}_{\text{lowrank}}^{(h)} 
   \label{eq: output combination}
\end{equation}

\subsection{Post-training Strategy}
\label{sec:post_training}

RoPeSLR is integrated into pre-trained DiTs via a lightweight two-stage post-training pipeline.

\textbf{Stage I: Component Alignment.} We freeze the pre-trained backbone and exclusively optimize the newly introduced parameters $\Theta_{\text{new}} = \{W_A^{(h)}, W_B^{(h)}, \alpha, W_g, b_g\}$. To warm-start the structural decomposition without distorting the pre-trained latent manifold, we minimize the Mean Squared Error (MSE) between our sparse-low-rank output $O$ and the exact full attention output $O_{\text{full}}$.
\begin{equation}
    \mathcal{L}_{\text{align}} = \frac{1}{L \cdot d_{\text{model}}} \| O - O_{\text{full}} \|_F^2
    \label{eq:Stage I loss}
\end{equation}

\textbf{Stage II: Full Fine-tuning.} After alignment, we unfreeze the full backbone and jointly optimize all parameters with the task-specific diffusion objective $L_{\text{task}}$. This stage uses the full model’s representation power to recover end-to-end generation quality while preserving RoPeSLR’s efficiency.

\textbf{Output Alignment Analysis.}
The Stage-I alignment training loss curves (Figure \ref{fig:training_loss}) demonstrate that RoPeSLR achieves faster convergence and a significantly lower loss floor compared to linear baselines. This validates the architectural advantage of our sparse-low-rank design in \textbf{capturing precise relative interactions}, ensuring a robust initialization. Additionally, comprehensive ablation studies on the training schedule (Appendix \ref{sec:Ablation Study on Training Schedule}) confirm that our \textbf{two-stage pipeline is robust}, seamlessly converging to full fine-tuning regardless of the specific Stage-I duration.

\begin{figure}[!htbp]
\centering
\includegraphics[width=0.5\textwidth]{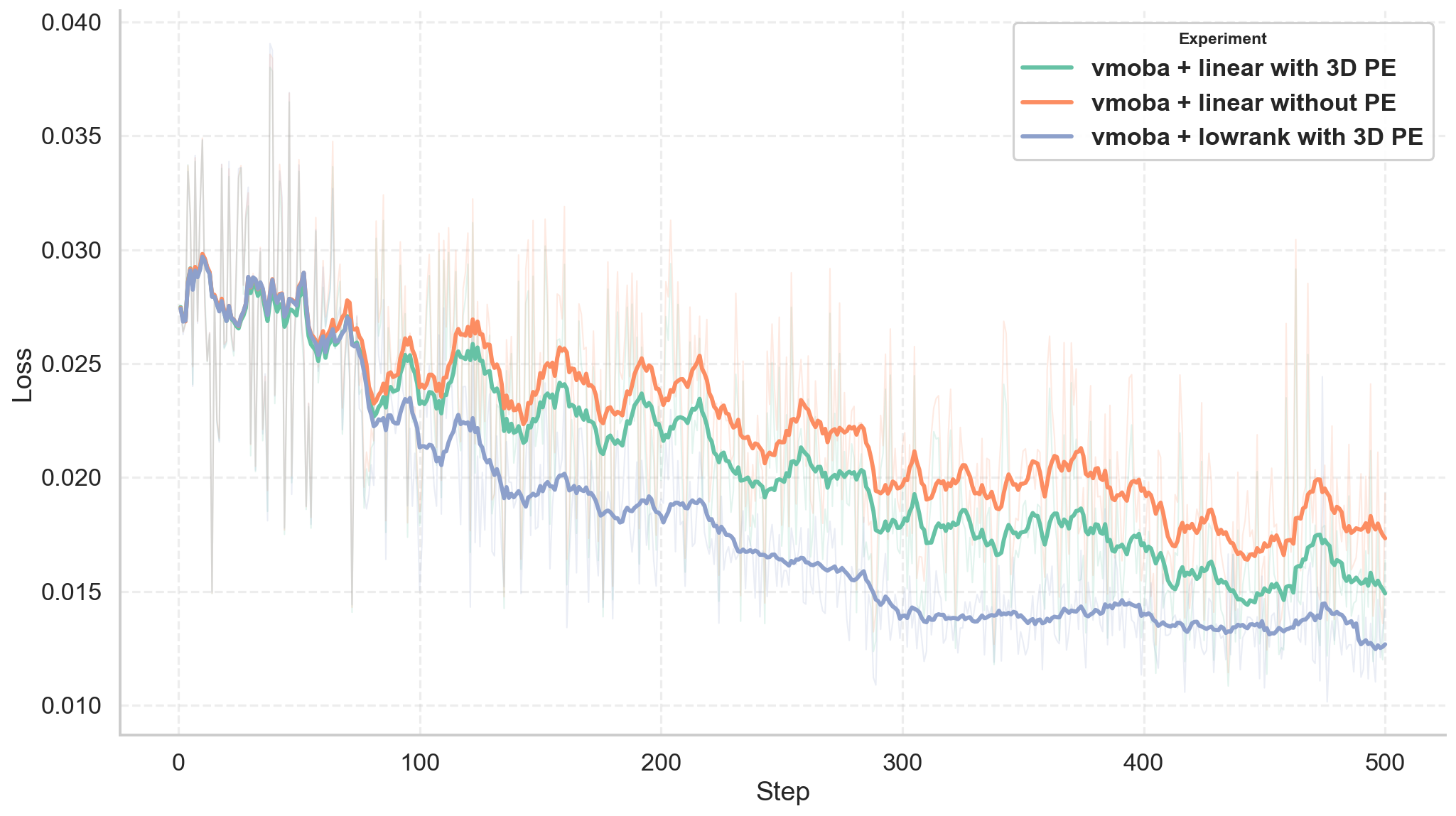} 
\caption{Stage I training loss curves over 500 steps for three configurations: VMoBA with linear attention (no position embedding), VMoBA with linear attention (with 3D learnable position embedding), and our proposed VMoBA + low-rank compensator (with 3D learnable position embedding).
}
\label{fig:training_loss}
\end{figure}

\subsection{Engineering Approaches}
We adopt the official implementation of VMoBA for the sparse attention module. It leverages the kernel to pack tokens into \textbf{FlashAttention}'s varlen layout and \textbf{merges chunk-wise outputs} via log-sum-exp to avoid padding overhead. The low-rank compensator fuses computations with a \textbf{customized Triton kernel} for inference, and uses \textbf{torch.compile for operator fusion} during training. Lazy parameter initialization ensures compatibility with \textbf{gradient checkpointing}.

\section{Experiments}

\subsection{Experimental Settings}
\textbf{Model and Datasets.}
We evaluate RoPeSLR on multiple video generation models (Wan2.1-1.3B, Wan2.1-14B \cite{wan2025wanopenadvancedlargescale}, and Hunyuan13B\cite{kong2024hunyuanvideo}) to validate its cross-scale and cross-architecture generalizability. For fine-tuning, we construct model-specific curated video-text subsets from OpenSora\cite{zheng2024opensora} and OpenVid-1M\cite{nan2024openvid}: Wan2.1-1.3B uses an OpenSora subset, while HunyuanVideo-13B and Wan2.1-14B use OpenVid-1M subsets. All subsets contain high-quality clips with diverse motions and scenes, preprocessed to match the corresponding base model’s resolution and frame configuration.\footnote{The sparsity ratios of baseline methods follow the default settings of their official native implementations. Since some sparse methods cannot support precise sparsity control, all reported sparsity values in this work are within a deviation of 1\%.}

\textbf{Baselines.} We further compare RoPeSLR with latest sparse and sparse-linear attention approaches,all evaluated using official implementations. For baselines incompatible with HunyuanVideo, we migrate them following their original design:
\begin{itemize}[noitemsep, topsep=0pt, parsep=0pt]
    \item \textbf{SLA\cite{zhang2025sla}}: A fine-tunable sparse-linear attention framework that fuses sparse and linear attention to accelerate DiTs while preserving generation quality. 
    \item \textbf{VSA\cite{zhang2025vsa}}: A trainable sparse attention paradigm designed to accelerate video diffusion Transformers by learning adaptive sparse patterns over spatio-temporal tokens.
    \item \textbf{SVG2\cite{yang2025sparse}}: The improved version of Sparse VideoGen\cite{xi2025sparse}, which accelerates video generation with spatial-temporal sparse attention via semantic-aware permutation.

\end{itemize}

\textbf{Metrics.} For video quality, we evaluate on the full VBench\cite{huang2023vbenchcomprehensivebenchmarksuite} suite and report key evaluation dimensions: Subject Consistency (SC), Motion Smoothness (MS), Background Consistency (BC), Imaging Quality (IQ) and Aesthetic Quality (AQ). We adopt FLOPs to quantify the computational cost of the attention module. For fair comparison, we report only the FLOPs associated with attention computation per denoising step, excluding text tokens, QKV projections and MLP blocks.

\subsection{Experimental Results Analysis}
\vspace{-0.25cm}

\begin{table*}[htbp]
  \centering
  \fontsize{7}{8}\selectfont  
  \setlength{\tabcolsep}{2.1pt} 
  \caption{Quantitative results on Wan2.1-1.3B and Hunyuan13B. We evaluate Wan2.1-1.3B on 81-frame $480 \times 832$ videos and Hunyuan13B on 129-frame $720 \times 1280$ videos, all with NVIDIA A100 80G GPUs.}
  \label{tab:main_benchmark}
  \renewcommand{\arraystretch}{1.0}
  \resizebox{\textwidth}{!}{
  \begin{tabular}{llccccccc}
    \toprule
    \multirow{2}{*}{\textbf{Model}} &
    \multirow{2}{*}{\textbf{Method}} &
    \multicolumn{5}{c}{\textbf{VBench Quality}} &
    \multicolumn{2}{c}{\textbf{Efficiency}} \\
    \cmidrule(lr){3-7} \cmidrule(lr){8-9}
    & &
    \textbf{SC↑} &
    \textbf{MS↑} &
    \textbf{BC↑} &
    \textbf{IQ↑} &
    \textbf{AQ↑} &
    \textbf{FLOPs↓} &
    \textbf{Sparsity↑} \\
    \midrule

    \multirow{5}{*}{Wan2.1-1.3B}
    & Full   & 0.9426 & 0.9553 & 0.9474 &  \textbf{0.6506} & \textbf{0.6117} & 197.82 TFLOPs & 0\%  \\
    & SVG2   & 0.9133 & 0.9579 & 0.9287 & 0.6369 & 0.5625 & 54.38 TFLOPs & 72\% \\
    & VSA    & 0.9042 & 0.9862 & 0.9425 & 0.6389 & 0.5542 & \textbf{19.98 TFLOPs} & 89\%  \\
    & SLA    & 0.9377 & 0.9776 & 0.9485 & 0.6258 & 0.5975 & 20.94 TFLOPs & 90\%  \\
    & \cellcolor{blue!10}\textbf{RoPeSLR (Ours)}
    & \cellcolor{blue!10}\textbf{0.9510}
    & \cellcolor{blue!10}\textbf{0.9898}
    & \cellcolor{blue!10}\textbf{0.9487}
    & \cellcolor{blue!10}{0.6440}
    & \cellcolor{blue!10}{0.6071}
    & \cellcolor{blue!10}{20.17 TFLOPs}
    & \cellcolor{blue!10}\textbf{90\%} \\
    \midrule

    \multirow{5}{*}{HunyuanVideo-13B}
    & Full & \textbf{0.9717} & \textbf{0.9957} & \textbf{0.9744} & \textbf{0.6771} & \textbf{0.6232} & 10.41 PFLOPs & 0\% \\
    & SVG2 & 0.9411 & 0.9653 & 0.9537 & 0.6328 & 0.6032 & 2.66 PFLOPs & 75\% \\
    & VSA  & 0.9442 & 0.9762 & 0.9625 & 0.6088 & 0.6142 & \textbf{1.04 PFLOPs} & 90\% \\
    & SLA  & 0.9592 & 0.9896 & 0.9665 & 0.6216 & 0.6043 & 1.06 PFLOPs & 90\% \\
    & \cellcolor{blue!10}\textbf{RoPeSLR (Ours)}
    & \cellcolor{blue!10} 0.9697
    & \cellcolor{blue!10} 0.9930
    & \cellcolor{blue!10} 0.9706
    & \cellcolor{blue!10} 0.6597
    & \cellcolor{blue!10} 0.6179
    & \cellcolor{blue!10} 1.05 PFLOPs
    & \cellcolor{blue!10}\textbf{90\%} \\
    \bottomrule
    \end{tabular}
  }
\end{table*}

\textbf{Quality Evaluation.}
As shown in Table~\ref{tab:main_benchmark}, even at the extreme sparsity of 90\%, RoPeSLR substantially reduces computational overhead compared with dense full attention. Averaged across all VBench dimensions, our method incurs a performance drop of less than 1.3\% against full attention, with negligible degradation on individual metrics and well-preserved generation fidelity. This strongly validates its advantages in the \textbf{quality-efficiency trade-off}. Quantitative results for Wan2.1-14B are presented in Table \ref{tab:wan14b_benchmark}.

\textbf{Latency Characteristics and Sparse Branch Flexibility.}
For compatibility, we adopt VMoBA as the default sparse branch, though its implementation is not fully optimized, leading to lower end-to-end latency reduction than the theoretical FLOPs reduction. Notably, RoPeSLR is sparse-backbone-agnostic and supports seamless substitution of VMoBA with other advanced sparse attention designs. Detailed latency and efficiency evaluations are provided in Figure \ref{fig:lowrank_efficiency}, Table \ref{tab:long_sequence_latency}, and Section \ref{sec:complexity_analysis}..

\textbf{Sparsity-Aligned Comparison.}
Further sparsity-aligned evaluation on SVG2 (see Table~\ref{tab:svg2_vsa_sparsity_align}) demonstrates that baseline perform significantly worse than RoPeSLR under identical sparsity constraints.

\textbf{Supplementary Analyses.}
We provide comprehensive supplementary validations: 
(i) visualize the adaptive token-wise gate distributions in Appendix \ref{subsec:gate_vis}; 
(ii) verify that diffusion Transformers (DiTs) exhibit a more prominent sparse-low-rank structural property than LLMs in Appendix \ref{subsec:llm_attn_char}; 
(iii) list key training hyperparameters in Appendix \ref{subsec:hyperparams};
(iv) validate compatibility with step-distilled models for compounded acceleration in Appendix \ref{app:distillation}.

\subsection{Ablation Study}
\textbf{Experimental Settings.}
All ablation experiments are conducted on the Wan2.1-1.3B model with identical settings to the main experiments. Core ablations are consolidated in Table~\ref{tab:ablation_all}.

\textbf{Low-Rank Compensator Validation.}
Lines 1–4 compare different context modeling strategies, alongside an equal-parameter MLP baseline\footnote{The equal-parameter MLP baseline is parameter-matched with our low-rank branch, trained with random initialization and without Stage I pre-alignment or positional encoding (PE). This controls confounding variables to isolate the performance gains from our theoretically derived structural design.}. This setup distinguishes performance gains brought by our structural design from simple parameter expansion or non-linear enhancement. Results demonstrate that our low-rank compensator outperforms linear attention and the equal-parameter MLP, verifying the advantages of our theoretically derived sparse low-rank design.

\textbf{Positional Encoding Necessity and Design.}
Lines 5–7 evaluate the influence of positional priors on spatiotemporal context decoding. Incorporating 3D positional encoding steadily improves generation performance, confirming its indispensable role in global context aggregation. 

\textbf{Additional Ablation Studies.}
Supplementary analyses in the appendix include sparsity optimization (see Table~\ref{tab:sparsity_ablation}), which identifies 90\% sparsity as the optimal balance point, and activation function ablation (see Table~\ref{tab:activation_functions_low_rank}), which validates Sigmoid as the most suitable choice for the low-rank compensator. Furthermore, we empirically ablate the low-rank dimension $r$ (See Appendix \ref{sec:ablation_low_rank_dimension}).

\begin{table}[!htbp]
\centering
\fontsize{7}{8}\selectfont    
\setlength{\tabcolsep}{2.5pt}   
\renewcommand{\arraystretch}{1.2} 
\caption{Component Ablation Studies on RoPeSLR. }
\label{tab:ablation_all}
\begin{tabular}{lccccc}
\toprule
Method & SC & BC & MS & AQ & IQ \\
\midrule
\multicolumn{6}{l}{\textit{Context Modeling and Structural Baselines}} \\
1. VMoBA only & 0.8496 & 0.8909 & 0.9580 & 0.4740 & 0.5294 \\
2. VMoBA + LinearAttention (without PE) & 0.9310 & 0.9267 & 0.9776 & 0.5621 & 0.6104 \\
3. VMoBA + Equal-Param MLP (without PE) & 0.9002 & 0.9149 & 0.9789 & 0.4835 & 0.5497 \\
4. VMoBA + Low-Rank Compensator (without PE) & \textbf{0.9452} & \textbf{0.9336} & \textbf{0.9882} & \textbf{0.5756} & \textbf{0.6332} \\
\midrule
\multicolumn{6}{l}{\textit{Positional Encoding Design Comparison}} \\
5. VMoBA + Low-Rank Compensator (with 1D PE) & 0.9454 & 0.9342 & 0.9886 & 0.5865 & 0.6338 \\
6. VMoBA + LinearAttention (with 3D PE) & 0.9476 & 0.9421 & 0.9869 & 0.5808 & 0.6380 \\
7. VMoBA + Low-Rank Compensator (with 3D PE) & \textbf{0.9510} & \textbf{0.9487} & \textbf{0.9898} & \textbf{0.6071} & \textbf{0.6440} \\
\bottomrule
\end{tabular}
\end{table}

\section{Conclusion and Limitation}
\label{sec:Conclusion and Limitation}

\textbf{Conclusion.} We introduce RoPeSLR, a theoretically grounded attention framework that resolves the $\mathcal{O}(L^2)$ bottleneck in DiTs. We prove that 3D RoPE decouples the attention manifold into a sparse-low-rank structure. Extensive evaluations confirm that RoPeSLR accelerates inference at extreme sparsity while strictly preserving high-fidelity generation quality.

\noindent\textbf{Future Work.} While our default sparse branch (VMoBA) reduces theoretical FLOPs, its sub-optimal engineering limits end-to-end latency gains. Because RoPeSLR is backbone-agnostic, future work will integrate hardware-optimized sparse kernels to fully translate our theoretical efficiency into real-world speedups.
{
\small
\bibliography{reference}
\bibliographystyle{abbrv}
}

\appendix
\newpage
\begin{center}
\LARGE
    \textbf{Appendix}
\end{center}
\vspace{5mm}
\tableofcontents

\vspace{5mm}

\newpage

\newpage

\section{Theory}
\label{sec:theory}
\subsection{Proof Overview}
We provide an analytical existence proof demonstrating that the ground-truth post-softmax attention matrix in 3D RoPE-equipped and pre-trained video diffusion transformers s(DiTs) intrinsically admits a clean decomposition into a highly sparse branch and a provably low-rank compensator. 
Crucially, our theoretical framework is designed to prove the \textbf{structural existence} of this mathematical decomposition, revealing the geometric properties of the attention matrix itself, \textbf{independent of the forward-pass computational algorithms.}

\begin{enumerate}[leftmargin=*, itemsep=0.2em]
    \item We first define an energy threshold $\tau$ to split the full attention matrix $A$ into two disjoint components: a high-energy spike set $\OmegaSpike$ (captured by the sparse branch) and a low-energy smooth background set $\OmegaBg$ (approximated by the low-rank compensator). We prove the sparse branch has a deterministic sparsity guarantee: at most $\lfloor 1/\tau \rfloor$ non-zero entries per row.
    \item Leveraging the mathematical structure of 3D RoPE, we prove the pre-softmax QK logit matrix can be exactly expanded as a Fourier series over spatiotemporal frequencies, where each frequency term corresponds to a matrix of rank at most 2.
    \item Under a empirically validated spectral concentration assumption, high-frequency interaction coefficients in the background set decay exponentially, allowing us to truncate the Fourier series to get a low-rank approximation of the pre-softmax background matrix with controllable error.
    \item We use positive random features (FAVOR+)\cite{choromanski2021rethinking} to linearize the softmax exponential kernel while preserving the low-rank structure, and perform error propagation to bound the final approximation error of the background branch.
    \item Our final main theorem establishes the existence of this decomposition: the sparse branch has guaranteed high sparsity, and the low rank branch has a rank bound that is linear in head dimension $d_h$ and only logarithmic in sequence length $L$, confirming extreme low-rankness for long video sequences.
\end{enumerate}

\subsection{Preliminaries and Notation}
\label{sec:prelim}
We first formalize all symbols and foundational definitions with zero ambiguity, consistent with standard video diffusion transformer literature.

\textbf{Notation Conventions}
\begin{itemize}[leftmargin=*, itemsep=0.3em]
    \item \textbf{Video Tokenization}: A video is tokenized into a sequence of spatiotemporal tokens indexed by $p = (t, x, y)$, where $t$ is the temporal index, $x,y$ are spatial indices. The total sequence length is $L = T \cdot H \cdot W$, with $T$ temporal frames, $H \times W$ spatial resolution per frame.
    \item \textbf{Relative Position}: For tokens $p,q$, the per-axis relative position is $\Delta_k = p_k - q_k$ for $k \in \{t,x,y\}$, where $p_k$ denotes the $k$-th coordinate of token $p$.
    
    \item \textbf{3D Rotary Position Embedding (3D RoPE)}: The head dimension is split as $d_h = d_t + d_x + d_y$, with $d_k$ the dimension allocated to axis $k$. For axis $k$, the $m$-th frequency pair uses the standard exponential schedule:
    \[
    \theta_m^k = 10000^{-2(m-1)/d_k}, \quad m = 1,2,\dots,d_k/2
    \]
    The 2D rotation matrix for axis $k$, frequency $m$, and position $p_k$ is:
    \[
    R_k(p_k, m) = \Rot{\theta_m^k p_k}
    \]
    The full RoPE rotation matrix $R(p) \in \R^{d_h \times d_h}$ is block-diagonal, with each block corresponding to $R_k(p_k, m)$ for each axis and frequency. By construction, $R(p)$ is orthogonal: $R(p)^\top R(p) = I$.
    \item \textbf{Frequency Subspace Projection}: For token $p$, axis $k$, frequency $m$, we define the 2D projections of query/key vectors onto the corresponding frequency subspace:
    \[
    Q_p^{(k,m)} = \begin{pmatrix} q_{p, 2(m-1)+1}^{(k)} \\ q_{p, 2m}^{(k)} \end{pmatrix}, \quad K_q^{(k,m)} = \begin{pmatrix} k_{q, 2(m-1)+1}^{(k)} \\ k_{q, 2m}^{(k)} \end{pmatrix}
    \]
    where $q_p^{(k)} \in \R^{d_k}$ is the part of $q_p$ allocated to axis $k$.
    \item \textbf{Attention Matrices}:

    \textbf{Pre-softmax QK logit matrix}: $s_{p,q} = \inner{R(p)q_p}{R(q)k_q}$ (we omit the standard $1/\sqrt{d_h}$ scaling for clarity; it can be reintroduced as a linear scaling factor without affecting any bounds).
    
    \textbf{Post-softmax attention matrix:} $A_{p,q} = \frac{\exp(s_{p,q})}{Z_p}$, where $Z_p = \sum_{r=1}^L \exp(s_{p,r})$ is the row partition function. 

    \textbf{Spike/Background Sets}: For a given energy threshold $\tau > 0$, we define:
      High-energy spike set: $\OmegaSpike = \{(p,q) \mid A_{p,q} > \tau\}$; 
      Low-energy background set: $\OmegaBg = \{(p,q) \mid A_{p,q} \leq \tau\}$.
    
    \textbf{Global Error Tolerance}: Fixed constant $\mathcal{E} > 0$, the maximum allowable absolute error per element for the low-rank approximation on the background set.
\end{itemize}

\textbf{Foundational Facts.} 
We list standard mathematical facts used throughout the proofs, to eliminate any implicit assumptions:
\begin{itemize}
    \item \textbf{Cauchy-Schwarz Inequality.} For any vectors $u,v \in \R^d$, $\abs{\inner{u}{v}} \leq \norm{u}_2 \norm{v}_2$.
    \item \textbf{Sylvester's Rank Inequality.} For matrices $A \in \R^{m \times n}$, $B \in \R^{n \times p}$, $\rank(AB) \leq \min(\rank(A), \rank(B))$.
    \item \textbf{Union Bound (Boole's Inequality).} For events $E_1, E_2, \dots, E_n$, $\PP\left(\bigcup_{i=1}^n E_i\right) \leq \sum_{i=1}^n \PP(E_i)$.
    \item \textbf{Exponential Function Inequalities.} For all $x \in \R$, $e^x \geq 1+x$. For $0 \leq x \leq 0.5$, $e^x \leq 1+2x$.
    \item \textbf{Invertible Matrix Preserves Rank.} For any matrix $A \in \R^{m \times n}$ and invertible matrix $B \in \R^{n \times n}$, $\rank(AB) = \rank(A)$.
\end{itemize}

\subsection{Energy-Based Sparse-Low-Rank Decomposition}
We first formalize the core decomposition of the attention matrix, which is the foundation of our framework.
\begin{definition}[Energy-Threshold Sparse-Low-Rank Decomposition]
\label{def:1}
For a given energy threshold $\tau \in (0, 1)$, we decompose the full post-softmax attention matrix $A$ into two disjoint, complementary components:
\[
A = A_{\text{sparse}} + A_{\text{bg}}
\]
where the sparse branch isolates high-energy semantic spikes:
\[
A_{\text{sparse}}(p,q) = A(p,q), if {(p,q) \in \OmegaSpike}
\]
and the background branch (low-rank approximation target) contains only the low-energy smooth residual:
\[
A_{\text{bg}}(p,q) = A(p,q) , if{(p,q) \in \OmegaBg} 
\]
\end{definition}

\begin{definition}[Residual Sparse Branch for Reconstruction]
\label{def:2}
Because the background target $A_{\text{bg}}$ contains zeros at the high-energy spike locations $\OmegaSpike$, directly fitting it with a low-rank matrix is highly inefficient. Instead, we introduce a globally dense, unmasked low-rank approximator $\hat{A}_{\text{lowrank}}$ (whose existence is proven in Corollary \ref{corollary: main}) to approximate $A_{\text{bg}}$ only on the background set $\OmegaBg$. 

To correct the inaccurate outputs produced by $\hat{A}_{\text{lowrank}}$ on the spike set $\OmegaSpike$, we define the residual sparse branch $\tilde{A}_{\text{sparse}}$ as an exact error compensator:
\[
\tilde{A}_{\text{sparse}}(p,q) = \begin{cases} A(p,q) - \hat{A}_{\text{lowrank}}(p,q) & \text{if } (p,q) \in \OmegaSpike \\ 0 & \text{otherwise} \end{cases}
\]
The final reconstructed attention matrix is then guaranteed as:
\[
\hat{A}_{\text{final}} = \tilde{A}_{\text{sparse}} + \hat{A}_{\text{lowrank}}
\]
\end{definition}

\subsection{Core Assumptions}
We state all non-trivial assumptions used in this work, all of which are empirically validated in video diffusion transformers.

\begin{assumption}[Bounded Query/Key Projection Weights]
\label{assumption 1}
The spectral norms of the trainable query and key projection matrices are uniformly bounded by a constant independent of sequence length $L$ and head dimension $d_h$:
\[
\|W_q\|_2 \leq W_{\text{max}}, \quad \|W_k\|_2 \leq W_{\text{max}}
\]
where $W_{\text{max}} > 0$ is a fixed constant.
\end{assumption}
\begin{assumption}[Spectral Concentration of Background Interaction Coefficients]
\label{assumption 2}
For the background set of a \textbf{pretrained DiT model} $\OmegaBg = \{(p,q) \mid A_{p,q} \leq \tau\}$, the maximum high-frequency interaction coefficients decay exponentially with frequency index for all three axes:
\[
\sum_{m=m_0}^{d_k/2} \max_{(p,q) \in \OmegaBg} \left( |a_{p,m}^k| + |b_{p,m}^k| \right) \leq C_\omega \cdot \rho_k^{m_0}, \quad \forall m_0 \geq 1
\]
where $\rho_k = 10000^{-\alpha/d_k} < 1$, $C_\omega > 0, \alpha > 0$ are constants independent of sequence length $L$ and head dimension $d_h$. Here, $a_{p,m}^k$ and $b_{p,m}^k$ are the Fourier coefficients from the 3D RoPE expansion (Lemma \ref{lemma 3D RoPE Induces Exact Fourier Series}).
\end{assumption}

\begin{remark}[Theoretical Grounding and Empirical Validity of Assumption \ref{assumption 2}]
\label{remark 1}
We note that Assumption \ref{assumption 2} functions as a grounded structural prior characterizing the \textit{macroscopic} background continuum, rather than a rigid universal law. Its formulation is motivated by two established theoretical properties: 
(1) \textbf{The RoPE Long-Term Decay Prior}: The exponential schedule of RoPE's base frequencies ($\theta_m^k = 10000^{-2(m-1)/d_k}$) intrinsically penalizes high-frequency interactions over extended distances \cite{su2021roformer}. 
(2) \textbf{The Low-Rank Spectrum of Softmax Attention}: The singular value spectrum of the residual background attention naturally exhibits exponential decay \cite{wang2020linformer}.

Crucially, the practical validity of this prior is structurally enforced by our decoupled architecture and extensively corroborated by our empirical analyses. Because well-optimized DiTs naturally decouple sharp semantic spikes from the smooth background, explicitly isolating these high-energy interactions into the sparse branch guarantees that the remaining background set $\OmegaBg$ strictly conforms to this low-frequency profile. We validate this phenomenon in \textbf{Appendix \ref{sec:empirical_validation} (Figure \ref{fig:denoising_evolution})}, which clearly demonstrates the consistent exponential spectral decay of QK interactions across multiple foundational DiTs. Furthermore, our stability analysis (\textbf{Figure \ref{fig:rank_scaling}}) confirms that the background continuum maintains bounded, sub-linear rank growth even as the sequence length scales significantly. Together, these structural and empirical insights perfectly justify the efficient $\mathcal{O}(d_h \log L)$ bottleneck rank bound established in Theorem \ref{thm:main_text_informal}.
\end{remark}

\begin{theorem}[Sparsity Bound and Background Energy Suppression]
\label{theorem 1}
Given the decomposition in Definition 1, the following two results hold deterministically for any $\tau \in (0,1)$:
\begin{itemize}
    \item \textbf{Sparsity Bound}: The sparse branch $A_{\text{sparse}}$ contains at most $\lfloor 1/\tau \rfloor$ non-zero entries per row. The total number of non-zero entries is bounded above by $L \cdot \lfloor 1/\tau \rfloor$.
    
    \textbf{Asymptotic Sparsity Analysis}: For any sequence length $L$, the total number of non-zero entries in the sparse branch is bounded by:
    \[
    \text{NNZ}(A_{\text{sparse}}) = O(L \cdot \left\lfloor \frac{1}{\tau} \right\rfloor)
    \].
    
    \item \textbf{Background Element-wise Bound}: The background branch $A_{\text{bg}}$ is bounded element-wise: $\|A_{\text{bg}}\|_\infty \leq \tau$.
\end{itemize}
\end{theorem}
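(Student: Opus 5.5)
The plan is to use only two facts about the post-softmax matrix $A$: every entry is strictly positive, and each row sums to one. For every $p$ we have $A_{p,q}=\exp(s_{p,q})/Z_p>0$ and $\sum_{q=1}^L A_{p,q}=Z_p/Z_p=1$. Both bullets of Theorem \ref{theorem 1} then follow from a counting (Markov-type) argument and from the definition of $\OmegaBg$. No assumption on the weights or on the spectrum (Assumptions \ref{assumption 1}, \ref{assumption 2}) is needed, which is why the statement holds deterministically for every $\tau\in(0,1)$.

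\textbf{Sparsity bound.} Fix a row $p$ and let $n_p=|\{q:(p,q)\in\OmegaSpike\}|$. Since all entries are positive and each spike entry exceeds $\tau$,
\[
1=\sum_{q}A_{p,q}\;\ge\;\sum_{q:(p,q)\in\OmegaSpike}A_{p,q}\;>\;n_p\,\tau ,
\]
so $n_p<1/\tau$. Because $n_p$ is an integer, this gives $n_p\le\lfloor 1/\tau\rfloor$.

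The only delicate point is the boundary case where $1/\tau$ is an integer. There the strict inequality actually gives $n_p\le 1/\tau-1$, which is stronger than needed. In every case $n_p\le\lfloor1/\tau\rfloor$ holds, and it is trivially true when $n_p=0$.

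By Definition \ref{def:1}, $A_{\text{sparse}}(p,q)$ is nonzero only on $\OmegaSpike$, where it equals $A_{p,q}>0$. So the nonzeros in row $p$ number exactly $n_p$. Summing over the $L$ rows gives $\mathrm{NNZ}(A_{\text{sparse}})\le L\lfloor1/\tau\rfloor$, which is the stated $O(L\lfloor1/\tau\rfloor)$ bound.

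\textbf{Background bound.} By Definition \ref{def:1}, $A_{\text{bg}}(p,q)=A_{p,q}$ for $(p,q)\in\OmegaBg$, and by definition of this set $A_{p,q}\le\tau$ there. Everywhere else $A_{\text{bg}}(p,q)=0\le\tau$. Taking the maximum over all entries gives $\|A_{\text{bg}}\|_\infty\le\tau$, where $\|\cdot\|_\infty$ is the entrywise maximum norm.

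No step is a genuine obstacle. The only care required is handling the strict versus non-strict inequality when passing to the floor.
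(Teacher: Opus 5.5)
Your proposal is correct and follows the paper's proof essentially step for step. Row-stochasticity plus nonnegativity gives $N_p\tau<1$ in each row, summing over the $L$ rows gives the NNZ bound, and the background bound is read off directly from Definition~\ref{def:1}. Your treatment of the case where $1/\tau$ is an integer is slightly more careful than the paper's, which claims the largest integer strictly below $1/\tau$ is $\lfloor 1/\tau\rfloor$; that is false in this case, although the bound $N_p\le\lfloor 1/\tau\rfloor$ still holds.
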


\begin{proof}
We prove each statement sequentially, with explicit justification for every step.

\paragraph{Proof of Statement 1 (Sparsity Bound)}
Fix an arbitrary row $p$ of the attention matrix. Let $N_p$ denote the number of entries in row $p$ with $A_{p,q} > \tau$. By definition of row-stochasticity, we have:
\[
\sum_{q=1}^L A_{p,q} = 1, \quad \text{and } A_{p,q} \geq 0 \text{ for all } q
\]
We split the sum into entries in the spike set and background set:
\[
\sum_{q=1}^L A_{p,q} = \sum_{q: A_{p,q} > \tau} A_{p,q} + \sum_{q: A_{p,q} \leq \tau} A_{p,q} = 1
\]
Since all terms are non-negative, we can drop the background sum to get a lower bound:
\[
\sum_{q: A_{p,q} > \tau} A_{p,q} \leq 1
\]
For every entry in the spike set, $A_{p,q} > \tau$ by definition. There are $N_p$ such entries, so:
\[
\sum_{q: A_{p,q} > \tau} A_{p,q} > \sum_{q: A_{p,q} > \tau} \tau = N_p \cdot \tau
\]
Combining the two inequalities:
\[
N_p \cdot \tau < 1 \implies N_p < \frac{1}{\tau}
\]
Since $N_p$ is a non-negative integer, the maximum possible value of $N_p$ is the largest integer less than $1/\tau$, which is $\lfloor 1/\tau \rfloor$. Thus:
\[
N_p \leq \left\lfloor \frac{1}{\tau} \right\rfloor \quad \text{for all rows } p
\]
Summing over all $L$ rows, the total number of non-zero entries in $A_{\text{sparse}}$ is bounded above by:
\[
\sum_{p=1}^L N_p \leq L \cdot \left\lfloor \frac{1}{\tau} \right\rfloor
\]
This completes the proof of Statement 1.

\paragraph{Proof of Statement 2 (Background Element-wise Bound)}
By definition of $A_{\text{bg}}$ (Definition \ref{def:1}), for any $(p,q) \in \OmegaBg$, we have $A_{\text{bg}}(p,q) = A_{p,q} \leq \tau$. For $(p,q) \in \OmegaSpike$, $A_{\text{bg}}(p,q) = 0 \leq \tau$. Thus, for all $p,q$, $A_{\text{bg}}(p,q) \leq \tau$. The infinity norm of a matrix is the maximum absolute value of its entries, so:
\[
\|A_{\text{bg}}\|_\infty = \max_{p,q} |A_{\text{bg}}(p,q)| \leq \tau
\]
This completes the proof of Statement 2.

\end{proof}

\subsection{Core Lemmas}

\begin{lemma}[Deterministic Q/K Boundedness]
\label{lemma Q/K Boundedness}
By Assumption \ref{assumption 1}, for any diffusion time step $s$, the query and key vectors are deterministically bounded across all tokens:
\[
\max_{p} \left( \| q_p \|_2, \| k_p \|_2 \right) \leq Q_{\text{max}} = W_{\text{max}} \sqrt{d_h}
\]
where $Q_{\text{max}}$ is a constant independent of sequence length $L$.
\end{lemma}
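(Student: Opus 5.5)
The bound follows from writing $q_p = W_q^\top x_p$ and $k_p = W_k^\top x_p$, where $x_p$ is the (normalized) hidden state of token $p$ entering the attention block, and applying the operator-norm inequality $\|W^\top x\|_2 \le \|W\|_2 \|x\|_2$ together with Assumption \ref{assumption 1}. The argument therefore reduces to controlling $\|x_p\|_2$ uniformly in $p$, in $L$, and in the diffusion step $s$.

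\textbf{Step 1: bound the input norm.} In DiTs the input to the Q/K projections is the output of a LayerNorm/RMSNorm. In the pre-norm architectures used here (Wan, HunyuanVideo) this is followed by an AdaLN modulation, and often by a per-head QK-RMSNorm. An RMSNorm output with unit gain satisfies $\|x_p\|_2 = \sqrt{d}$ exactly, where $d$ is the normalized dimension. With the standard convention that each head's slice is normalized and rescaled, the per-head coordinates have unit root-mean-square, so the relevant norm is at most $\sqrt{d_h}$. I would state this explicitly as the normalization convention underlying the lemma. Any bounded affine or modulation gain, being independent of $L$, can be absorbed into $W_{\max}$. Crucially, $x_p$ depends only on token $p$ after normalization, never on $L$, and the bound holds for every diffusion step $s$ because the normalization is applied at every step.

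\textbf{Step 2: combine.} For every token $p$,
\[
\|q_p\|_2 \le \|W_q\|_2\,\|x_p\|_2 \le W_{\max}\sqrt{d_h},
\]
and the same bound holds for $\|k_p\|_2$. Taking the maximum over $p$ gives $Q_{\max} = W_{\max}\sqrt{d_h}$. This constant is independent of $L$ because neither $W_{\max}$ nor the normalized norm depends on the sequence length.

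It is worth noting that RoPE does not affect the bound: $R(p)$ is orthogonal, so $\|R(p)q_p\|_2 = \|q_p\|_2$. This is the form in which the lemma will later be used through Cauchy--Schwarz, giving $|s_{p,q}| \le Q_{\max}^2$.

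\textbf{Main obstacle.} The only genuinely delicate point is Step 1. The lemma as stated rests on an implicit normalization assumption, namely that the pre-projection features have per-head RMS norm $O(1)$, so that their norm is $O(\sqrt{d_h})$. I would make this explicit as part of reading Assumption \ref{assumption 1}, and argue that it holds deterministically for normalized inputs, including AdaLN with bounded modulation. Without such a normalization, no $L$-independent deterministic bound is available.
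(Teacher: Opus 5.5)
Your proposal is correct and follows the paper's proof: the paper shows $\|\text{LN}(x_{s,p})\|_2 = \sqrt{d_h}$ exactly, then applies $\|q_p\|_2 \le \|W_q\|_2\,\|\text{LN}(x_{s,p})\|_2 \le W_{\max}\sqrt{d_h}$ (and likewise for $k_p$), and in a footnote absorbs the learned affine LayerNorm parameters into $W_q, W_k$. Your remark that this tacitly takes the normalized dimension to be $d_h$ rather than $d_{\text{model}}$ is apt, since the paper's Step 1 makes exactly that choice without comment.
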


\begin{proof}

\paragraph{Step 1: Deterministic Norm Bound for LayerNorm Output}
By definition of LayerNorm, for any token feature vector $x_{s,p}$, we have:
\[
\text{LN}(x_{s,p}) = \frac{x_{s,p} - \mu}{\sigma}, \quad \sigma^2 = \frac{1}{d_h}\sum_{i=1}^{d_h} (x_{s,p,i} - \mu)^2 
\]
The squared $\ell_2$ norm of the LayerNorm output is:
\[
\|\text{LN}(x_{s,p})\|_2^2 = \sum_{i=1}^{d_h} \left( \frac{x_{s,p,i} - \mu}{\sigma} \right)^2 = \frac{1}{\sigma^2} \sum_{i=1}^{d_h} (x_{s,p,i} - \mu)^2
\]
By definition of $\sigma^2$, $\sum_{i=1}^{d_h} (x_{s,p,i} - \mu)^2 = d_h \sigma^2$. Substituting this in, we have
\[
\|\text{LN}(x_{s,p})\|_2^2 = \frac{1}{\sigma^2} \cdot d_h \sigma^2 = d_h
\]
Taking the square root of both sides gives the deterministic norm bound:
\[
\|\text{LN}(x_{s,p})\|_2 = \sqrt{d_h}
\]

\paragraph{Step 2: Norm Bound for Query/Key Vectors}
By definition, the query vector is $q_p = W_q \cdot \text{LN}(x_{s,p})$. We use the submultiplicativity property of the spectral norm: for any matrix $W$ and vector $x$, $\|Wx\|_2 \leq \|W\|_2 \|x\|_2$. Applying this property:
\[
\|q_p\|_2 = \|W_q \cdot \text{LN}(x_{s,p})\|_2 \leq \|W_q\|_2 \cdot \|\text{LN}(x_{s,p})\|_2
\]
Substituting the bounds $\|W_q\|_2 \leq W_{\text{max}}$ and $\|\text{LN}(x_{s,p})\|_2 = \sqrt{d_h}$:
\[
\|q_p\|_2 \leq W_{\text{max}} \sqrt{d_h}
\]
The identical derivation applies to the key vector $k_p$, since $\|W_k\|_2 \leq W_{\text{max}}$. Since this bound holds for all tokens $p$, we have:
\[
\max_{p} \left( \| q_p \|_2, \| k_p \|_2 \right) \leq W_{\text{max}} \sqrt{d_h}
\]
This completes the proof.\footnote{In practice, LayerNorm applies a learned affine transformation $\gamma \odot \hat{x} + \beta$. We assume these learned scaling and shifting parameters are absorbed into the bounded query/key projection matrices $W_q$ and $W_k$ without loss of generality, maintaining the rigor of the deterministic norm bound.}
\end{proof}

\begin{lemma}[3D RoPE Induces Exact Fourier Series]
\label{lemma 3D RoPE Induces Exact Fourier Series}
With standard 3D RoPE, the QK inner product can be written exactly as:
\[
s_{p,q} = \sum_{k \in \{t,x,y\}} \sum_{m=1}^{d_k/2} \left( a_{p,m}^k \cos(\theta_m^k \Delta_k) + b_{p,m}^k \sin(\theta_m^k \Delta_k) \right)
\]
where $\Delta_k = p_k - q_k$, and the coefficients are defined as:
\[
a_{p,m}^k = Q_p^{(k,m),1} K_q^{(k,m),1} + Q_p^{(k,m),2} K_q^{(k,m),2}, \quad b_{p,m}^k = Q_p^{(k,m),1} K_q^{(k,m),2} - Q_p^{(k,m),2} K_q^{(k,m),1}
\]
and satisfy the bounds:
\[
|a_{p,m}^k| \leq \| Q_p^{(k,m)} \|_2 \| K_q^{(k,m)} \|_2, \quad |b_{p,m}^k| \leq \| Q_p^{(k,m)} \|_2 \| K_q^{(k,m)} \|_2
\]
\end{lemma}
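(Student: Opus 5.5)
The plan is a direct computation that exploits the block-diagonal structure of $R(p)$, with no approximation involved. First I will decompose the inner product over the rotation blocks. Since $R(p)$ is block-diagonal with $2\times 2$ blocks $R_k(p_k,m)=\Rot{\theta_m^k p_k}$, indexed by axis $k\in\{t,x,y\}$ and frequency $m\in[1,d_k/2]$, and the coordinates of $q_p,k_q$ are grouped into the matching 2D slices $Q_p^{(k,m)},K_q^{(k,m)}$, the inner product splits as
\[
s_{p,q}=\sum_{k\in\{t,x,y\}}\sum_{m=1}^{d_k/2}\inner{\Rot{\theta_m^k p_k}Q_p^{(k,m)}}{\Rot{\theta_m^k q_k}K_q^{(k,m)}}.
\]
This reduces the lemma to a single $2\times 2$ block.

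Second, within one block I will use the group property of planar rotations, $\Rot{\phi}^\top=\Rot{-\phi}$ and $\Rot{\phi}\Rot{\psi}=\Rot{\phi+\psi}$. This rewrites each summand as $(Q_p^{(k,m)})^\top \Rot{\theta_m^k(q_k-p_k)}K_q^{(k,m)}$, i.e. $(Q_p^{(k,m)})^\top\Rot{-\theta_m^k\Delta_k}K_q^{(k,m)}$, which is the per-block version of the shift-invariance $R(p)^\top R(q)=R(q-p)$. I will then write $\Rot{\phi}=\begin{pmatrix}\cos\phi&-\sin\phi\\ \sin\phi&\cos\phi\end{pmatrix}$, substitute $\phi=-\theta_m^k\Delta_k$, and use $\cos(-x)=\cos x$, $\sin(-x)=-\sin x$. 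Expanding the resulting bilinear form gives $\cos(\theta_m^k\Delta_k)\,(Q^1K^1+Q^2K^2)+\sin(\theta_m^k\Delta_k)\,(Q^1K^2-Q^2K^1)$. These are exactly $a_{p,m}^k$ and $b_{p,m}^k$ as stated, and summing over $(k,m)$ yields the claimed expansion.

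Third, for the bounds I will observe that $a_{p,m}^k=\inner{Q_p^{(k,m)}}{K_q^{(k,m)}}$ and $b_{p,m}^k=\inner{Q_p^{(k,m)}}{JK_q^{(k,m)}}$ with $J=\begin{pmatrix}0&1\\-1&0\end{pmatrix}$. Since $J$ is orthogonal, $\|JK_q^{(k,m)}\|_2=\|K_q^{(k,m)}\|_2$, and Cauchy--Schwarz gives both inequalities immediately.

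No step here is genuinely hard. The only real obstacle is bookkeeping of sign conventions: whether $\Rot{\cdot}$ rotates counterclockwise, and whether $\Delta_k$ is $p_k-q_k$ or $q_k-p_k$. These determine the sign of $b_{p,m}^k$, so I will fix the counterclockwise convention above and verify that it reproduces the stated sign. I will also remark that $a_{p,m}^k,b_{p,m}^k$ depend on the pair $(p,q)$ through $K_q^{(k,m)}$, despite the subscript notation. This matters for how they are later maximized over $\OmegaBg$ in Assumption \ref{assumption 2}, but it does not affect the exactness of the identity.
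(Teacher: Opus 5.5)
Your proposal is correct and follows essentially the same route as the paper: split the inner product over the $2\times 2$ RoPE blocks, use the shift identity $R(p)^\top R(q)=R(q-p)$, expand the planar rotation explicitly to read off $a_{p,m}^k$ and $b_{p,m}^k$, and finish with Cauchy--Schwarz (your orthogonal-$J$ argument simply makes explicit what the paper calls ``the same method'' for $b_{p,m}^k$). Your sign bookkeeping with the angle $-\theta_m^k\Delta_k$ is cleaner than the paper's derivation, which silently replaces $R(q-p)$ by $R(p-q)$ in Step 2 and then flips the sign of the sine coefficient when grouping terms in Step 3; the two slips cancel, so the stated lemma is still correct.
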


\begin{proof}
\paragraph{Step 1: Simplify the RoPE QK Inner Product}
By definition, the pre-softmax logit is:
\[
s_{p,q} = \inner{R(p)q_p}{R(q)k_q}
\]
Since $R(p)$ is orthogonal, $R(p)^\top R(p) = I$. We use the standard RoPE identity for relative position encoding:
\[
\inner{R(p)q_p}{R(q)k_q} = \inner{q_p}{R(p)^\top R(q) k_q} = \inner{q_p}{R(q-p) k_q}
\]
This follows directly from the orthogonality of RoPE rotation matrices.

\paragraph{Step 2: Block-Diagonal Expansion of the Rotation Matrix}
The RoPE rotation matrix $R(\Delta)$ (where $\Delta = p-q$) is block-diagonal, with each block corresponding to a 2D rotation matrix for a specific axis and frequency. Thus, the inner product decomposes into a sum over axes and frequencies:
\[
\inner{q_p}{R(\Delta) k_q} = \sum_{k \in \{t,x,y\}} \sum_{m=1}^{d_k/2} \inner{Q_p^{(k,m)}}{R_k(\Delta_k, m) K_q^{(k,m)}}
\]
where $Q_p^{(k,m)}, K_q^{(k,m)}$ are the 2D frequency subspace projections defined in Section 2.1, and $R_k(\Delta_k, m) = \Rot{\theta_m^k}{\Delta_k}$ is the 2D rotation matrix for axis $k$, frequency $m$, relative position $\Delta_k$.

\paragraph{Step 3: Explicit 2D Rotation and Fourier Series Form}
We expand the 2D inner product explicitly. Let $Q_p^{(k,m)} = (u_1, u_2)^\top$ and $K_q^{(k,m)} = (v_1, v_2)^\top$. The 2D rotation matrix is:
\[
R_k(\Delta_k, m) = \begin{pmatrix} \cos(\theta_m^k \Delta_k) & -\sin(\theta_m^k \Delta_k) \\ \sin(\theta_m^k \Delta_k) & \cos(\theta_m^k \Delta_k) \end{pmatrix}
\]
Multiplying the rotation matrix by $K_q^{(k,m)}$:
\[
R_k(\Delta_k, m) K_q^{(k,m)} = \begin{pmatrix} v_1 \cos(\theta_m^k \Delta_k) - v_2 \sin(\theta_m^k \Delta_k) \\ v_1 \sin(\theta_m^k \Delta_k) + v_2 \cos(\theta_m^k \Delta_k) \end{pmatrix}
\]
Taking the inner product with $Q_p^{(k,m)}$:
\[
\begin{aligned}
\inner{Q_p^{(k,m)}}{R_k(\Delta_k, m) K_q^{(k,m)}} &= u_1 \left( v_1 \cos(\theta_m^k \Delta_k) - v_2 \sin(\theta_m^k \Delta_k) \right) \\
&\quad + u_2 \left( v_1 \sin(\theta_m^k \Delta_k) + v_2 \cos(\theta_m^k \Delta_k) \right)
\end{aligned}
\]
Grouping terms by $\cos(\theta_m^k \Delta_k)$ and $\sin(\theta_m^k \Delta_k)$:
\[
= \underbrace{(u_1 v_1 + u_2 v_2)}_{a_{p,m}^k} \cos(\theta_m^k \Delta_k) + \underbrace{(u_1 v_2 - u_2 v_1)}_{b_{p,m}^k} \sin(\theta_m^k \Delta_k)
\]
This gives the exact Fourier series form.

\paragraph{Step 4: Coefficient Bounds}
We prove the bounds for $a_{p,m}^k$ and $b_{p,m}^k$ using the Cauchy-Schwarz inequality. For $a_{p,m}^k$:
\[
|a_{p,m}^k| = |u_1 v_1 + u_2 v_2| = |\inner{Q_p^{(k,m)}}{K_q^{(k,m)}}| \leq \|Q_p^{(k,m)}\|_2 \|K_q^{(k,m)}\|_2
\]
For $b_{p,m}^k$, using the same method, we can obtain:
\[
|b_{p,m}^k| \leq \|Q_p^{(k,m)}\|_2 \|K_q^{(k,m)}\|_2
\]
This completes the proof.
\end{proof}

\begin{lemma}[Each Frequency Term Corresponds to a   $\le 2$ Matrix]
\label{lemma:Each Frequency Term Each Frequency Term}
For any fixed axis $k \in \{t,x,y\}$ and frequency $m$, the matrix $T_{k,m} \in \R^{L \times L}$ representing the $m$-th frequency interaction is defined by:
\[
T_{k,m}(p,q) = \inner{R_k(p_k, m) Q_p^{(k,m)}}{R_k(q_k, m) K_q^{(k,m)}}
\]
This matrix has rank at most 2.
\end{lemma}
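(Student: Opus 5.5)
The plan is to exhibit $T_{k,m}$ explicitly as a product of an $L\times 2$ matrix and a $2\times L$ matrix. The bound $\operatorname{rank}(T_{k,m})\le 2$ then follows from Sylvester's rank inequality as listed among the Foundational Facts.

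First, for each token $p$ define the rotated query vector $u_p := R_k(p_k,m)\,Q_p^{(k,m)} \in \R^2$. This vector depends only on the row index $p$, through both its coordinate $p_k$ and its query projection. Symmetrically, define the rotated key vector $w_q := R_k(q_k,m)\,K_q^{(k,m)} \in \R^2$, which depends only on the column index $q$. By the definition of $T_{k,m}$ in the statement,
\[
T_{k,m}(p,q) = \inner{u_p}{w_q} = u_p^\top w_q .
\]

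Next, stack these vectors into the matrices $U \in \R^{L\times 2}$, whose $p$-th row is $u_p^\top$, and $W \in \R^{L\times 2}$, whose $q$-th row is $w_q^\top$. Then $T_{k,m} = U W^\top$ entrywise. Sylvester's rank inequality gives
\[
\operatorname{rank}(T_{k,m}) \le \min(\operatorname{rank} U, \operatorname{rank} W^\top) \le 2 .
\]

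It is worth connecting this to the Fourier picture of Lemma~\ref{lemma 3D RoPE Induces Exact Fourier Series}. One can equally write $u_p^\top w_q = Q_p^{(k,m)\top} R_k(q_k-p_k,m) K_q^{(k,m)}$ and expand via $\cos(\theta(p_k-q_k)) = \cos\theta p_k \cos\theta q_k + \sin\theta p_k \sin\theta q_k$. This exhibits the term as a sum of two separable (rank-one) pieces. The only subtlety is that in Lemma~\ref{lemma 3D RoPE Induces Exact Fourier Series} the coefficients $a^k_{p,m}, b^k_{p,m}$ depend on both $p$ and $q$. A naive reading of them as row-only coefficients would wrongly suggest a different factorization. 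I would therefore avoid that route and use the absolute-rotation form given in the statement, where separability in $(p,q)$ is immediate.

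There is no real obstacle here. The step needing the most care is simply checking that $u_p$ truly depends only on $p$ and $w_q$ only on $q$, which holds because RoPE rotates each token by its own absolute coordinate before the inner product.
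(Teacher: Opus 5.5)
Your proposal is correct and follows the paper's proof essentially step for step. The paper also defines the rotated vectors $R_k(p_k,m)Q_p^{(k,m)}$ and $R_k(q_k,m)K_q^{(k,m)}$, stacks them into $L\times 2$ factors with $T_{k,m}=U_{k,m}V_{k,m}^\top$, and concludes $\operatorname{rank}(T_{k,m})\le 2$ from the product-rank inequality. Your side remark that the coefficients $a^k_{p,m},b^k_{p,m}$ actually depend on both $p$ and $q$ is accurate and justifies avoiding the Fourier route, though that route is not needed here.
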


\begin{proof}

\paragraph{Step 1: Define Rotated Query/Key Vectors}
For each token $p$, define the rotated 2D query vector:
\[
\hat{q}_{p,k,m} = R_k(p_k, m) Q_p^{(k,m)} \in \R^2
\]
For each token $q$, define the rotated 2D key vector:
\[
\hat{k}_{q,k,m} = R_k(q_k, m) K_q^{(k,m)} \in \R^2
\]
By definition, the $(p,q)$-th entry of $T_{k,m}$ is exactly the Euclidean inner product of these two vectors:
\[
T_{k,m}(p,q) = \inner{\hat{q}_{p,k,m}}{\hat{k}_{q,k,m}} = \hat{q}_{p,k,m}^\top \hat{k}_{q,k,m}
\]

\paragraph{Step 2: Construct Matrix Factorization}
We construct two matrices from the rotated vectors:
$U_{k,m} \in \R^{L \times 2}$: the $p$-th row of $U_{k,m}$ is $\hat{q}_{p,k,m}^\top$ (the transpose of the rotated query vector for token $p$).
$V_{k,m} \in \R^{L \times 2}$: the $q$-th row of $V_{k,m}$ is $\hat{k}_{q,k,m}^\top$ (the transpose of the rotated key vector for token $q$).

By construction, the transpose of $V_{k,m}$ is a $2 \times L$ matrix, where the $q$-th column is $\hat{k}_{q,k,m}$.

\paragraph{Step 3: Verify Matrix Product and Rank Bound}
We now compute the product $U_{k,m} V_{k,m}^\top$. The $(p,q)$-th entry of this product is the dot product of the $p$-th row of $U_{k,m}$ and the $q$-th column of $V_{k,m}^\top$, which is exactly:
\[
\left(U_{k,m} V_{k,m}^\top\right)_{p,q} = \hat{q}_{p,k,m}^\top \hat{k}_{q,k,m} = T_{k,m}(p,q)
\]
Thus, we have the exact matrix factorization:
\[
T_{k,m} = U_{k,m} V_{k,m}^\top
\]
We now apply Sylvester's Rank Inequality : for matrices $A \in \R^{m \times n}$ and $B \in \R^{n \times p}$, $\rank(AB) \leq \min(\rank(A), \rank(B))$. Here, $U_{k,m} \in \R^{L \times 2}$ and $V_{k,m}^\top \in \R^{2 \times L}$, so:
\[
\rank(T_{k,m}) = \rank(U_{k,m} V_{k,m}^\top) \leq \min(\rank(U_{k,m}), \rank(V_{k,m}^\top)) \leq 2
\]
This completes the proof.
\end{proof}

\begin{lemma}[Low-Rank Approximation of Background Pre-Softmax Matrix]
\label{lemma:Pre-Softmax Matrix rank}
Under Assumption \ref{assumption 1} and \ref{assumption 2}, for any error tolerance $\delta > 0$, there exists a truncated matrix $\tilde{S} \in \R^{L \times L}$ of rank at most:
\[
r = O\left(d_h \log\left(\frac{1}{\delta}\right)\right)
\]
such that the approximation error is bounded uniformly over the background set:
\[
\max_{(p,q) \in \OmegaBg} |s_{p,q} - \tilde{s}_{p,q}| \leq \delta
\]
\end{lemma}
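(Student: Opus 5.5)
The idea is to truncate the exact Fourier expansion of Lemma \ref{lemma 3D RoPE Induces Exact Fourier Series} at a cutoff frequency on each axis. The retained low frequencies give a low-rank matrix via Lemma \ref{lemma:Each Frequency Term Each Frequency Term}, and the discarded high-frequency tail is controlled uniformly on $\OmegaBg$ by Assumption \ref{assumption 2}.

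\textbf{Construction.} For each axis $k\in\{t,x,y\}$ I choose a cutoff $M_k\le d_k/2$ and define
\[
\tilde{S} \;=\; \sum_{k\in\{t,x,y\}} \sum_{m=1}^{M_k-1} T_{k,m}.
\]
Lemma \ref{lemma:Each Frequency Term Each Frequency Term} shows that each $T_{k,m}$ has rank at most $2$. Subadditivity of rank then gives
\[
\rank(\tilde{S}) \le 2\sum_k (M_k-1).
\]
By Lemma \ref{lemma 3D RoPE Induces Exact Fourier Series}, the entry $T_{k,m}(p,q)$ is exactly the $(k,m)$ summand $a^k_{p,m}\cos(\theta^k_m\Delta_k)+b^k_{p,m}\sin(\theta^k_m\Delta_k)$, so the residual is precisely the tail
\[
s_{p,q}-\tilde{s}_{p,q} \;=\; \sum_k\sum_{m\ge M_k}\bigl(a^k_{p,m}\cos(\theta^k_m\Delta_k)+b^k_{p,m}\sin(\theta^k_m\Delta_k)\bigr).
\]

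\textbf{Tail bound.} For $(p,q)\in\OmegaBg$, I bound $|\cos|,|\sin|\le 1$ and replace each coefficient by its maximum over $\OmegaBg$. Assumption \ref{assumption 2} with $m_0=M_k$ then gives
\[
|s_{p,q}-\tilde{s}_{p,q}|\le C_\omega\sum_k\rho_k^{M_k}.
\]
It suffices to make each term at most $\delta/3$, which requires
\[
M_k \ge \frac{\log(3C_\omega/\delta)}{\log(1/\rho_k)} = \frac{d_k\,\log(3C_\omega/\delta)}{\alpha\log 10000}.
\]
Taking $M_k=\min\{d_k/2,\lceil\cdot\rceil\}$ gives $\sum_k M_k = O\bigl(d_h\log(1/\delta)\bigr)$, using $\sum_k d_k=d_h$ and the fact that $C_\omega,\alpha$ are independent of $L$ and $d_h$. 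If $M_k$ is capped at $d_k/2$, the whole axis is kept and that axis contributes no error.

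\textbf{Main obstacle.} The delicate point is the interpretation of Assumption \ref{assumption 2}. The coefficients $a^k_{p,m},b^k_{p,m}$ depend on both $p$ and $q$, since $K_q$ enters them, so the maximum over $(p,q)\in\OmegaBg$ must be read pairwise. Under that reading the uniform tail bound goes through as above.

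A second subtlety is that $\tilde S$ is defined on all of $L\times L$, while the error guarantee holds only on $\OmegaBg$. This is intended: on spike entries the error can be large, and that is absorbed later by the residual sparse branch of Definition \ref{def:2}. Finally, Assumption \ref{assumption 1} is not needed for this bound itself. Through Lemma \ref{lemma Q/K Boundedness} it only ensures the coefficients are finite and that $C_\omega$ can be taken independent of $L$. I will note this so the constants in the later error propagation remain $L$-independent.
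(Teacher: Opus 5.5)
Your proposal is correct and follows the paper's proof essentially step for step. It truncates the exact RoPE Fourier expansion axis by axis, bounds each axis tail by $C_\omega\rho_k^{M_k}\le\delta/3$ via Assumption~\ref{assumption 2} so that $M_k=O(d_k\log(1/\delta))$, and bounds the rank by subadditivity over the rank-$\le 2$ terms $T_{k,m}$.

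There is one small indexing slip. You keep only $m\le M_k-1$, so capping $M_k$ at $d_k/2$ still discards the $m=d_k/2$ term, whose size $C_\omega\rho_k^{d_k/2}$ need not be below $\delta/3$ in exactly that case. Your claim that the whole axis is kept is therefore false as written. Cap at $d_k/2+1$ instead, or keep $m\le M_k$ with $M_k=\min(m_0,d_k/2)$ as the paper does.
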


\begin{proof}
\paragraph{Step 1: Decompose the Full Pre-Softmax Matrix}
From Lemma \ref{lemma 3D RoPE Induces Exact Fourier Series}, the full pre-softmax matrix $S$ (with entries $s_{p,q}$) can be written as a sum over axes and frequency terms:
\[
S = \sum_{k \in \{t,x,y\}} \sum_{m=1}^{d_k/2} T_{k,m}
\]
where each $T_{k,m}$ is the rank-$\leq 2$ matrix from Lemma \ref{lemma:Each Frequency Term Each Frequency Term}.

\paragraph{Step 2: Choose Truncation Threshold for Each Axis}
We aim to truncate the high-frequency terms such that the total tail error is bounded by $\delta$. We split the error equally across the three axes: we require the tail error for each axis to be at most $\delta/3$, so the total error is at most $\delta$.

Fix an axis $k$. From Assumption \ref{assumption 1}, the tail sum of coefficients decays exponentially:
\[
\sum_{m=m_0}^{d_k/2} \max_{(p,q) \in \OmegaBg} \left( |a_{p,m}^k| + |b_{p,m}^k| \right) \leq C_\omega \cdot \rho_k^{m_0}
\]
From Lemma \ref{lemma:Each Frequency Term Each Frequency Term}, each frequency term's entry is bounded by:
\[
|T_{k,m}(p,q)| = |a_{p,m}^k \cos(\theta_m^k \Delta_k) + b_{p,m}^k \sin(\theta_m^k \Delta_k)| \leq |a_{p,m}^k| + |b_{p,m}^k|
\]
since $|\cos(\cdot)| \leq 1$ and $|\sin(\cdot)| \leq 1$. Thus, the maximum entry of the tail sum for axis $k$ is bounded by:
\[
\max_{(p,q) \in \OmegaBg} \left| \sum_{m=m_0}^{d_k/2} T_{k,m}(p,q) \right| \leq \sum_{m=m_0}^{d_k/2} \max_{(p,q) \in \OmegaBg} |T_{k,m}(p,q)| \leq C_\omega \cdot \rho_k^{m_0}
\]
We require this tail error to be at most $\delta/3$:
\[
C_\omega \cdot \rho_k^{m_0} \leq \frac{\delta}{3}
\]
We solve for $m_0$. Since $\rho_k < 1$, $\log(\rho_k) < 0$. Rearranging the inequality:
\[
\rho_k^{m_0} \leq \frac{\delta}{3 C_\omega} \implies m_0 \log(\rho_k) \leq \log\left( \frac{\delta}{3 C_\omega} \right) \implies m_0 \geq \frac{\log\left( \frac{3 C_\omega}{\delta} \right)}{-\log(\rho_k)}
\]
Recall that $\rho_k = 10000^{-\alpha/d_k}$, so $-\log(\rho_k) = \frac{\alpha}{d_k} \log(10000)$. Substituting this in:
\[
m_0 \geq \frac{d_k \log\left( \frac{3 C_\omega}{\delta} \right)}{\alpha \log(10000)} = O\left( d_k \log\left( \frac{1}{\delta} \right) \right)
\]
We choose $M_k = \min\left( m_0, d_k/2 \right)$, which is the truncation index for axis $k$.

\paragraph{Step 3: Construct the Truncated Matrix}
We define the truncated pre-softmax matrix $\tilde{S}$ as the sum of the low-frequency terms:
\[
\tilde{S} = \sum_{k \in \{t,x,y\}} \sum_{m=1}^{M_k} T_{k,m}
\]
By construction, the approximation error is bounded by:
\[
\max_{(p,q) \in \OmegaBg} |s_{p,q} - \tilde{s}_{p,q}| = \max_{(p,q) \in \OmegaBg} \left| \sum_{k \in \{t,x,y\}} \sum_{m=M_k+1}^{d_k/2} T_{k,m}(p,q) \right| \leq \sum_{k \in \{t,x,y\}} \frac{\delta}{3} = \delta
\]
This satisfies the error bound requirement.

\paragraph{Step 4: Bound the Rank of the Truncated Matrix}
\label{lemma:FAVOR}
From Lemma \ref{lemma:Each Frequency Term Each Frequency Term}, each $T_{k,m}$ has rank at most 2. Using the rank of sum property, the rank of the truncated matrix is bounded by:
\[
\rank(\tilde{S}) \leq \sum_{k \in \{t,x,y\}} \sum_{m=1}^{M_k} \rank(T_{k,m}) \leq \sum_{k \in \{t,x,y\}} 2 M_k
\]
Substituting $M_k = O\left( d_k \log\left( \frac{1}{\delta} \right) \right)$ and $d_t + d_x + d_y = d_h$:
\[
\rank(\tilde{S}) \leq 2 \log\left( \frac{1}{\delta} \right) \sum_{k \in \{t,x,y\}} O(d_k) = O\left( d_h \log\left( \frac{1}{\delta} \right) \right)
\]
This completes the proof.
\end{proof}

\begin{lemma}[Positive Random Features for Exponential Kernel (FAVOR+)\cite{choromanski2021rethinking}, High-Probability Bound]
Let $\tilde{q}_i, \tilde{k}_j \in \R^r$ be vectors uniformly bounded by $\|\tilde{q}_i\|_2, \|\tilde{k}_j\|_2 \leq B$ for some constant $B>0$ independent of $L$. The exponential kernel $\exp(\tilde{q}_i^T \tilde{k}_j)$ can be approximated using a positive random feature map $\Phi: \R^r \to \R^R_+$. For any relative error tolerance $\delta_{\text{rel}} > 0$ and failure probability $\delta_{\text{fail}} > 0$, the required feature dimension is:
\[
R = O\left( \frac{r}{\delta_{\text{rel}}^2} \log\left( \frac{L^2}{\delta_{\text{fail}}} \right) \right)
\]
With probability at least $1 - \delta_{\text{fail}}$, provided $\exp(\tilde{q}_i^T\tilde{k}_j) \geq c > 0$ for all $i,j$ (this holds across the entire attention matrix, as the original query/key vectors are deterministically bounded by Lemma \ref{lemma Q/K Boundedness} and the low-rank truncation error is bounded by Lemma \ref{lemma:Pre-Softmax Matrix rank}, ensuring all approximated pre-softmax logits $\tilde{s}_{i,j}$ have a finite global minimum), the approximation satisfies the uniform relative error bound over all $L^2$ pairs:
\[
|\Phi(\tilde{q}_i)^T \Phi(\tilde{k}_j) - \exp(\tilde{q}_i^T \tilde{k}_j)| \leq \delta_{\text{rel}} \exp(\tilde{q}_i^T \tilde{k}_j)
\]
\end{lemma}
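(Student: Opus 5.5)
The argument follows the standard FAVOR+ route: an unbiased positive random-feature estimator, a per-pair concentration bound, and a union bound over all $L^2$ pairs.

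\textbf{Estimator.} Draw $\omega_1,\dots,\omega_R \stackrel{\text{iid}}{\sim}\mathcal{N}(0,I_r)$ and set
\[
\Phi(x)=\frac{1}{\sqrt{R}}\exp\!\left(-\tfrac{\|x\|_2^2}{2}\right)\bigl(e^{\omega_1^\top x},\dots,e^{\omega_R^\top x}\bigr)\in\R^R_+ .
\]
By the Gaussian moment generating function, $\EE_\omega\bigl[e^{\omega^\top(\tilde q_i+\tilde k_j)}\bigr]=e^{\|\tilde q_i+\tilde k_j\|^2/2}$. Hence each summand
\[
Y_\ell=\exp\!\left(\omega_\ell^\top(\tilde q_i+\tilde k_j)-\tfrac{\|\tilde q_i\|^2+\|\tilde k_j\|^2}{2}\right)
\]
is a positive, unbiased estimator of $\exp(\tilde q_i^\top\tilde k_j)$. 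Therefore $\Phi(\tilde q_i)^\top\Phi(\tilde k_j)$ is the empirical mean of $R$ i.i.d.\ copies, and the problem reduces to a relative concentration statement for one fixed pair $(i,j)$.

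\textbf{Per-pair concentration.} Normalize by the target and set $Z_\ell=Y_\ell/\exp(\tilde q_i^\top\tilde k_j)=\exp\!\bigl(\omega_\ell^\top u-\|u\|^2/2\bigr)$ with $u=\tilde q_i+\tilde k_j$. Then $\EE Z_\ell=1$, $\mathrm{Var}(Z_\ell)=e^{\|u\|^2}-1\le e^{4B^2}-1$, and $\|u\|\le 2B$. The relative-error requirement thus becomes $|\bar Z-1|\le\delta_{\text{rel}}$ for a mean of i.i.d.\ lognormal variables whose parameter is bounded by the constant $2B$. This is where the hypothesis $\exp(\tilde q_i^\top\tilde k_j)\ge c$ enters: the bounded norms make the target bounded below, so relative and absolute errors are comparable up to constants depending only on $B$.

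\textbf{Main obstacle.} The lognormal $Z_\ell$ is not sub-Gaussian, so Hoeffding's inequality does not apply directly. I would first try truncation. The scalar $\omega_\ell^\top u\sim\mathcal{N}(0,\|u\|^2)$ is one-dimensional Gaussian, so with probability $1-\delta_{\text{fail}}/(2L^2R)$ one has $|\omega_\ell^\top u|\le 2B\sqrt{2\log(4L^2R/\delta_{\text{fail}})}$. On that event $Z_\ell$ is bounded, and Bernstein's inequality, using the variance bound rather than the range, gives
\[
\PP\bigl(|\bar Z-1|>\delta_{\text{rel}}\bigr)\le 2\exp\!\left(-\frac{cR\,\delta_{\text{rel}}^2}{e^{4B^2}}\right).
\]
The bias introduced by truncation is exponentially small and is absorbed into the tolerance. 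If the truncated range term dominates through a polylogarithmic factor, I would fall back on a median-of-means estimator; this changes only constants. For $\bar Z$ to lie within $\delta_{\text{rel}}$ with failure probability $\delta_{\text{fail}}/L^2$ per pair, it suffices that $R=O\bigl(\delta_{\text{rel}}^{-2}\log(L^2/\delta_{\text{fail}})\bigr)$, with the constant depending on $B$.

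\textbf{Union bound.} The union bound over all $L^2$ pairs produces the $\log(L^2/\delta_{\text{fail}})$ factor. The additional factor of $r$ in the stated bound is a safe over-provisioning: it is what one gets when a uniform argument over $u$ in the ball of radius $2B$ is used instead of a union over the $L^2$ fixed pairs, via an $\epsilon$-net of size $(C/\epsilon)^r$. Either route is consistent with $R=O\bigl(\tfrac{r}{\delta_{\text{rel}}^2}\log\tfrac{L^2}{\delta_{\text{fail}}}\bigr)$, which completes the plan.
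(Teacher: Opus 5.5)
There is a genuine gap, and it is in the per-pair concentration step. You are right that the summands $Z_\ell=\exp(\omega_\ell^\top u-\|u\|^2/2)$ are lognormal, so Hoeffding does not apply. This is exactly where the paper's own proof is unsound: it invokes Hoeffding after asserting that the FAVOR+ features are bounded by $M=O(\sqrt r)$, which is false for Gaussian $\omega$.

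Your repair, however, does not deliver the claimed rate. After truncating at $T=2B\sqrt{2\log(4L^2R/\delta_{\text{fail}})}$, the summands are bounded only by $M\approx e^{T}=\exp\bigl(\Theta(\sqrt{\log(L^2R/\delta_{\text{fail}})})\bigr)$. This outgrows every power of $\log(L^2/\delta_{\text{fail}})$, so the excess is not a polylogarithmic factor. Bernstein's bound, with variance proxy $\sigma^2\le e^{4B^2}$, is $2\exp\bigl(-\tfrac{R\delta_{\text{rel}}^2/2}{\sigma^2+M\delta_{\text{rel}}/3}\bigr)$. The range term cannot be dropped, and keeping it forces $R\gtrsim M\delta_{\text{rel}}^{-1}\log(L^2/\delta_{\text{fail}})$.

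No sharper analysis can rescue the plain Gaussian estimator. Because every $Z_\ell\ge 0$, one has $\{Z_1>R(1+\delta_{\text{rel}})\}\subseteq\{\bar Z>1+\delta_{\text{rel}}\}$. So a single pair fails with probability at least $\PP\bigl(\omega_1^\top u>\log(R(1+\delta_{\text{rel}}))+\|u\|^2/2\bigr)=\exp\bigl(-(1+o(1))(\log R)^2/(2\|u\|^2)\bigr)$. For any pair with $\|u\|$ bounded away from $0$, pushing this below $\delta_{\text{fail}}$ already requires $\log R\gtrsim\|u\|\sqrt{2\log(1/\delta_{\text{fail}})}$. Hence $R=O(\log(L^2/\delta_{\text{fail}}))$ is unattainable with these features, in particular for $\delta_{\text{fail}}=O(L^{-c})$ as used downstream.

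The median-of-means fallback does not close the gap either. An entrywise median of group means is not of the form $\Phi(\tilde q_i)^\top\Phi(\tilde k_j)$. It therefore proves a statement about a different object, and it destroys the factorization $\hat E=\Phi(\tilde Q)\Phi(\tilde K)^\top$ that the corollary relies on for $\operatorname{rank}(\hat E)\le R$.

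What is missing is a positive feature distribution whose range is independent of $L$ and $\delta_{\text{fail}}$. For example, take $\omega_\ell\sim\mathcal N(0,I_r)$ conditioned on $\|\omega_\ell\|\le T$, with $T=2B+\sqrt r+\sqrt{2\log(4/\delta_{\text{rel}})}$.
\begin{itemize}
\item The features stay positive and the estimator is still an inner product of feature maps.
\item For $\omega\sim\mathcal N(0,I_r)$, the change-of-measure identity $\EE\bigl[Z\,\mathbf{1}\{\|\omega\|\le T\}\bigr]=\PP(\|\omega+u\|\le T)$ shows the relative bias is deterministic, uniform over $\|u\|\le 2B$, and at most $\delta_{\text{rel}}/2$.
\item The summands are bounded by $e^{2BT}$, while their variance stays $O(e^{4B^2})$.
\item Bernstein plus your union bound then give $R=O\bigl((e^{4B^2}\delta_{\text{rel}}^{-2}+e^{2BT}\delta_{\text{rel}}^{-1})\log(L^2/\delta_{\text{fail}})\bigr)$.
\end{itemize}
This recovers the dependence on $L$ and $\delta_{\text{fail}}$. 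The price is constants depending on $B$ and $r$ in place of the clean factor $r$ in the statement.
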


\begin{proof}

\paragraph{Step 1: Per-Pair FAVOR+ Guarantee via Hoeffding's Bound}
Based on the foundational properties of positive random features (FAVOR+) established in \cite{choromanski2021rethinking} (specifically Lemma 1 and Eq. 4), the random feature map $\Phi: \R^r \to \R^R_+$ provides an unbiased Monte Carlo estimator of the exponential kernel. For any fixed pair $(\tilde{q}_i, \tilde{k}_j)$, let the estimator be $\hat{E}_{i,j} = \Phi(\tilde{q}_i)^T \Phi(\tilde{k}_j)$. We have:
\[
\EE\left[ \hat{E}_{i,j} \right] = \exp(\tilde{q}_i^T \tilde{k}_j)
\]
Since the inputs are uniformly bounded ($\|\tilde{q}_i\|_2, \|\tilde{k}_j\|_2 \leq B$), the positive random features drawn from the FAVOR+ distribution are bounded by a constant $M = O(\sqrt{r})$. Therefore, $\hat{E}_{i,j}$ is an average of $R$ independent, bounded random variables. 

We apply Hoeffding's inequality to bound the \textbf{absolute error}. For any $\epsilon > 0$:
\[
\PP\left( |\hat{E}_{i,j} - \exp(\tilde{q}_i^T \tilde{k}_j)| > \epsilon \right) \leq 2 \exp\left( - \frac{2 R \epsilon^2}{M^2} \right)
\]
To obtain a \textbf{relative error} bound, we set the absolute error tolerance to $\epsilon = \delta_{\text{rel}} \exp(\tilde{q}_i^T \tilde{k}_j)$. Substituting this into the bound yields:
\[
\PP\left( |\hat{E}_{i,j} - \exp(\tilde{q}_i^T \tilde{k}_j)| > \delta_{\text{rel}} \exp(\tilde{q}_i^T \tilde{k}_j) \right) \leq 2 \exp\left( - \frac{2 R \delta_{\text{rel}}^2 \exp(\tilde{q}_i^T \tilde{k}_j)^2}{M^2} \right)
\]

Crucially, we must establish a positive lower bound for the squared exponential kernel $\exp(\tilde{q}_i^T \tilde{k}_j)^2 = \exp(2\tilde{s}_{i,j})$ to bound the failure probability. 
By Lemma \ref{lemma Q/K Boundedness}, the original pre-softmax logits are lower-bounded: $s_{i,j} \geq -Q_{\text{max}}^2$. 
By Lemma \ref{lemma:Pre-Softmax Matrix rank}, the truncation error is bounded by $\delta$, giving a global minimum for the approximated logits: $\tilde{s}_{i,j} \geq -Q_{\text{max}}^2 - \delta$. 

Because the exponential function is positive and monotonically increasing, the true exponential kernel is lower-bounded by a global positive constant $c$:
\[
\exp(\tilde{s}_{i,j}) \geq \exp(-Q_{\text{max}}^2 - \delta) \equiv c > 0
\]
Consequently, its square is also lower-bounded by a positive constant:
\[
\exp(\tilde{s}_{i,j})^2 \geq c^2 > 0
\]
We can therefore upper-bound the failure probability by replacing the squared exponential term with its minimal value $c^2$:
\[
\PP\left( \text{Relative Error} > \delta_{\text{rel}} \right) \leq 2 \exp\left( - \frac{2 R \delta_{\text{rel}}^2 c^2}{M^2} \right)
\]
To ensure this per-pair failure probability is at most $\frac{\delta_{\text{fail}}}{L^2}$ (which is required for the subsequent union bound), we enforce:
\[
2 \exp\left( - \frac{2 R \delta_{\text{rel}}^2 c^2}{M^2} \right) \leq \frac{\delta_{\text{fail}}}{L^2}
\]
Solving for the required feature dimension $R$:
\[
R \geq \frac{M^2}{2 c^2 \delta_{\text{rel}}^2} \log\left( \frac{2 L^2}{\delta_{\text{fail}}} \right)
\]
Since $M^2 = O(r)$ and $c$ is a positive constant independent of $L$, the required dimension to guarantee the relative error bound for a single pair with probability $1 - \frac{\delta_{\text{fail}}}{L^2}$ is:
\[
R = O\left( \frac{r}{\delta_{\text{rel}}^2} \log\left( \frac{L^2}{\delta_{\text{fail}}} \right) \right)
\]

\paragraph{Step 2: Uniform Guarantee via Union Bound}
We now extend the per-pair bound to a uniform bound over all $L^2$ query-key pairs. Let $E_{i,j}$ denote the event that the relative error for pair $(i,j)$ exceeds $\delta_{\text{rel}}$. From Step 1, $\PP(E_{i,j}) \leq \frac{\delta_{\text{fail}}}{L^2}$ for all $i,j$.

We apply the Union Bound to the union of all $L^2$ events:
\[
\PP\left( \bigcup_{i,j=1}^L E_{i,j} \right) \leq \sum_{i,j=1}^L \PP(E_{i,j}) \leq L^2 \cdot \frac{\delta_{\text{fail}}}{L^2} = \delta_{\text{fail}}
\]
Taking the complement of this event, we get:
\[
\PP\left( \bigcap_{i,j=1}^L E_{i,j}^c \right) \geq 1 - \delta_{\text{fail}}
\]
In other words, with probability at least $1 - \delta_{\text{fail}}$, the relative error is bounded by $\delta_{\text{rel}}$ for all $L^2$ pairs:
\[
|\Phi(\tilde{q}_i)^T \Phi(\tilde{k}_j) - \exp(\tilde{q}_i^T \tilde{k}_j)| \leq \delta_{\text{rel}} \exp(\tilde{q}_i^T \tilde{k}_j) \quad \forall i,j
\]
The total required feature dimension to satisfy this uniform bound across all pairs is $R = R_0 \cdot O\left( \log\left( \frac{L^2}{\delta_{\text{fail}}} \right) \right) = O\left( \frac{r}{\delta_{\text{rel}}^2} \log\left( \frac{L^2}{\delta_{\text{fail}}} \right) \right)$. This completes the proof.
\end{proof}

\begin{lemma}[Exact Diagonal Normalization and Rank Preservation]
\label{lemma:Exact Diagonal Normalization and Rank Preservation}
Let $Z_p = \sum_{r=1}^L \exp(s_{p,r})$ be the true row partition function, and define $D = \diag(1/Z_1, \dots, 1/Z_L)$. For any approximated exponentiated kernel matrix $\hat{E} \in \R^{L \times L}$, the normalized matrix $\hat{A} = D \hat{E}$ satisfies:
\[
\rank(\hat{A}) = \rank(\hat{E})
\]
\end{lemma}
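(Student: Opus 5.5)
The plan is to show that $D$ is an invertible diagonal matrix and then invoke the rank-preservation fact recorded among the Foundational Facts. The one difference is that the fact is stated for right multiplication by an invertible matrix, while here $D$ multiplies from the left. I will handle this with a transposition, or equivalently by arguing directly that left multiplication by an invertible matrix is an injective linear map on column spaces.

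\textbf{Step 1 (invertibility of $D$).} Each partition function $Z_p = \sum_{r=1}^L \exp(s_{p,r})$ is a finite sum of strictly positive terms, because $\exp(\cdot) > 0$ on $\R$. Hence $Z_p > 0$ and $Z_p < \infty$. Lemma \ref{lemma Q/K Boundedness} gives the explicit bounds $L\exp(-Q_{\text{max}}^2) \le Z_p \le L \exp(Q_{\text{max}}^2)$, although mere positivity suffices for this step. Every diagonal entry $1/Z_p$ is therefore finite and nonzero. It follows that $D$ is invertible with $D^{-1} = \diag(Z_1,\dots,Z_L)$.

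\textbf{Step 2 (rank equality).} I will write $\rank(D\hat{E}) = \rank\big((D\hat{E})^\top\big) = \rank(\hat{E}^\top D^\top)$. Since $D^\top = D$ is invertible, the stated fact gives $\rank(\hat{E}^\top D) = \rank(\hat{E}^\top) = \rank(\hat{E})$, using that row rank equals column rank.

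As a self-contained alternative, I can apply Sylvester's inequality in both directions:
\[
\rank(D\hat{E}) \le \rank(\hat{E}), \qquad \rank(\hat{E}) = \rank(D^{-1} D\hat{E}) \le \rank(D\hat{E}).
\]
Together these give equality.

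No step here is a genuine obstacle. The only point needing care is that $D$ uses the \emph{true} partition functions rather than partition functions computed from $\hat{E}$. This is what keeps $D$ independent of the approximation and guarantees its invertibility. Had we normalized by the row sums of $\hat{E}$ instead, those sums could vanish or be ill-defined, whereas the true $Z_p$ are strictly positive by construction. I will note this explicitly, since it justifies why the low-rank structure of $\hat{E} = \Phi(\tilde{Q})\Phi(\tilde{K})^\top$ (rank $\le R$) transfers verbatim to $\hat{A}$.
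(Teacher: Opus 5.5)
Your proof is correct and follows the paper's argument: the paper also shows $D$ is invertible because every $Z_p>0$, and then concludes $\rank(D\hat{E})=\rank(\hat{E})$. Your transposition step, or equivalently the two-sided product-rank inequality, is a small but useful tightening, since the paper applies the right-multiplication rank fact to a left multiplication without comment.
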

\begin{proof}

\paragraph{Step 1: Prove $D$ is Invertible}
By definition, the partition function $Z_p = \sum_{r=1}^L \exp(s_{p,r})$. Since the exponential function is positive ($\exp(x) > 0$ for all $x \in \R$), every term in the sum is positive, so $Z_p > 0$ for all rows $p$.

The diagonal matrix $D$ has entries $1/Z_p$ on its diagonal, all of which are positive. A diagonal matrix is invertible if and only if all its diagonal entries are non-zero. Thus, $D$ is full-rank and invertible, with inverse $D^{-1} = \diag(Z_1, Z_2, \dots, Z_L)$.

\paragraph{Step 2: Prove Rank Preservation}
For any matrix $\hat{E} \in \R^{L \times L}$ and invertible matrix $D \in \R^{L \times L}$, left-multiplying by $D$ preserves the rank:
\[
\rank(D \hat{E}) = \rank(\hat{E})
\]
Since $\hat{A} = D \hat{E}$, we have:
\[
\rank(\hat{A}) = \rank(\hat{E})
\]
This completes the proof.
\end{proof}

\begin{corollary}
[Constructive Sparse-Low-Rank Reconstruction of Attention Matrix]
\label{corollary: main}
Let $\mathcal{E} > 0$ be a fixed global absolute error tolerance, and let $\delta_{\text{fail}} = O(L^{-c})$ for some constant $c>0$. For any energy threshold $\tau \in [2\mathcal{E}, 1)$, we can construct an approximate reconstructed attention matrix $\hat{A}_{\text{final}} = \tilde{A}_{\text{sparse}} + \hat{A}_{\text{lowrank}}$, where:
\begin{enumerate}
    \item The \textbf{residual sparse branch} $\tilde{A}_{\text{sparse}}$ is sparse, with exactly the same support as the original spike set $\OmegaSpike$, and thus has at most $\lfloor 1/\tau \rfloor$ non-zero entries per row;
    \item The \textbf{low-rank branch} $\hat{A}_{\text{lowrank}}$ is a globally defined, unmasked matrix that acts as a low-rank approximator for the background target $A_{\text{bg}}$. It operates with a provable bottleneck rank bound:
    \[
    {R = O\left( d_h \cdot \left(\frac{\tau}{\mathcal{E}}\right)^2 \cdot \log\left( \frac{\tau}{\mathcal{E}} \right) \cdot \log L \right)}
    \]
\end{enumerate}
With probability at least $1 - \delta_{\text{fail}}$, the low-rank approximator tightly bounds the background target on $\OmegaBg$:
\[
\max_{(p,q) \in \OmegaBg} |\hat{A}_{\text{lowrank}}(p,q) - A_{\text{bg}}(p,q)| \leq \mathcal{E}
\]
Consequently, the final reconstructed matrix $\hat{A}_{\text{final}}$ satisfies the global uniform error guarantee across all $L^2$ pairs:
\[
\max_{p,q} |\hat{A}_{\text{final}}(p,q) - A(p,q)| \leq \mathcal{E}
\]
\end{corollary}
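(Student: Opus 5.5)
The construction composes the earlier lemmas. First I build $\hat{A}_{\text{lowrank}}$. Then I bound its error on $\OmegaBg$. Finally, the residual sparse branch of Definition \ref{def:2} handles $\OmegaSpike$ by construction.

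\textbf{Step 1 (truncated logits).} Apply Lemma \ref{lemma:Pre-Softmax Matrix rank} with a tolerance $\delta$ to be fixed later. This yields $\tilde{S}=\sum_k\sum_{m\le M_k}T_{k,m}$ of rank $r=O(d_h\log(1/\delta))$ with $|s_{p,q}-\tilde{s}_{p,q}|\le\delta$ on $\OmegaBg$. By Lemma \ref{lemma:Each Frequency Term Each Frequency Term} it factors as $\tilde{s}_{p,q}=\tilde{q}_p^\top\tilde{k}_q$ with $\tilde{q}_p,\tilde{k}_q\in\R^{r}$. These vectors are obtained by stacking the rotated 2D blocks. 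Their norms are at most $Q_{\max}$ by Lemma \ref{lemma Q/K Boundedness}, since rotations are orthogonal. This supplies the bounded inputs required by the FAVOR+ lemma.

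\textbf{Step 2 (linearize the exponential).} Use FAVOR+ with relative tolerance $\delta_{\text{rel}}$ and feature dimension $R=O(r\,\delta_{\text{rel}}^{-2}\log(L^2/\delta_{\text{fail}}))$. Set $\hat{E}_{p,q}=\Phi(\tilde{q}_p)^\top\Phi(\tilde{k}_q)$, so $\operatorname{rank}\hat{E}\le R$. Define $\hat{A}_{\text{lowrank}}=D\hat{E}$ using the exact partition functions $Z_p$. By Lemma \ref{lemma:Exact Diagonal Normalization and Rank Preservation} its rank also equals at most $R$.

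\textbf{Step 3 (error propagation on $\OmegaBg$).} For $(p,q)\in\OmegaBg$ I decompose the error as
\[
|\hat{A}_{\text{lowrank}}(p,q)-A_{p,q}|\le \tfrac{1}{Z_p}\bigl|\hat{E}_{p,q}-e^{\tilde{s}_{p,q}}\bigr|+\tfrac{1}{Z_p}\bigl|e^{\tilde{s}_{p,q}}-e^{s_{p,q}}\bigr|.
\]
Take $\delta\le 1/2$ and use the listed inequalities $e^{x}\ge1+x$ and $e^{x}\le1+2x$. Then $e^{\tilde{s}}\le e^{s}(1+2\delta)$ and $|e^{\tilde{s}}-e^{s}|\le 2\delta e^{s}$. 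The second term is therefore at most $2\delta A_{p,q}\le 2\delta\tau$. The first term is at most $\delta_{\text{rel}}e^{\tilde{s}}/Z_p\le\delta_{\text{rel}}(1+2\delta)\tau$. This is the key point: membership in $\OmegaBg$ converts relative errors into absolute errors scaled by $\tau$. Choosing $\delta=\Theta(\mathcal{E}/\tau)$ and $\delta_{\text{rel}}=\Theta(\mathcal{E}/\tau)$ with small constants makes the total at most $\mathcal{E}$. The hypothesis $\tau\ge2\mathcal{E}$ guarantees $\delta\le1/2$. Substituting gives $r=O(d_h\log(\tau/\mathcal{E}))$. Then $R=O(d_h(\tau/\mathcal{E})^2\log(\tau/\mathcal{E})\log(L^2/\delta_{\text{fail}}))$, and $\delta_{\text{fail}}=O(L^{-c})$ collapses the final log to $O(\log L)$, matching the stated rank.

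\textbf{Step 4 (spike set and global bound).} $\tilde{A}_{\text{sparse}}$ is defined exactly as in Definition \ref{def:2}, so its support lies in $\OmegaSpike$. Theorem \ref{theorem 1} then gives at most $\lfloor1/\tau\rfloor$ non-zeros per row. On $\OmegaSpike$ the reconstruction is exact, since $\hat{A}_{\text{final}}=A$ there. On $\OmegaBg$ we have $\hat{A}_{\text{final}}=\hat{A}_{\text{lowrank}}$, whose error is at most $\mathcal{E}$ by Step 3 with probability $1-\delta_{\text{fail}}$. The global uniform bound follows.

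The main obstacle is Step 3. Its delicate parts are converting the FAVOR+ relative guarantee into an absolute one via $A_{p,q}\le\tau$, and keeping the constants consistent in the choice $\delta,\delta_{\text{rel}}\propto\mathcal{E}/\tau$. A further subtlety is that Assumption \ref{assumption 2} controls the truncation only on $\OmegaBg$. Off that set we merely use the crude logit lower bound, which only enters the FAVOR+ constant $c$, and the exactness of the sparse branch there.
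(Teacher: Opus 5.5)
Your proposal is correct and follows essentially the same route as the paper. You truncate the logits via Lemma~\ref{lemma:Pre-Softmax Matrix rank}, linearize with FAVOR+ at $\delta=\delta_{\text{rel}}\propto\mathcal{E}/\tau$ (the paper takes $\mathcal{E}/(4\tau)$, where your additive split gives the same $13\mathcal{E}/16$ constant), normalize by the exact $Z_p$, convert relative to absolute error through $A_{p,q}\le\tau$ on $\OmegaBg$, and let the residual branch of Definition~\ref{def:2} be exact on $\OmegaSpike$. Your explicit factorization by stacking the rotated $2$D blocks is actually cleaner than the paper's generic rank factorization: it directly gives $\|\tilde{q}_p\|_2,\|\tilde{k}_q\|_2\le Q_{\max}$ and hence $\tilde{s}_{p,q}\ge -Q_{\max}^2$ for all pairs, whereas the paper derives this lower bound from a truncation error that is only controlled on $\OmegaBg$.
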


\begin{proof}
\paragraph{Step 1: Pre-Softmax Low-Rank Approximation}
We first choose the pre-softmax absolute error tolerance and FAVOR+ relative error tolerance to be equal: $\delta = \delta_{\text{rel}} = \frac{\mathcal{E}}{4\tau}$. This choice is derived from detailed error propagation analysis, ensuring that the compounded error after exponential kernel linearization and normalization results in the desired absolute error bound $\mathcal{E}$ on the background set.

Since $\tau \in [2\mathcal{E}, 1)$, we have $\frac{\mathcal{E}}{\tau} \leq 0.5$, so $\delta \leq 1/8$, which satisfies the validity condition for the exponential inequalities used in error propagation.
Since $\tau \in [2\mathcal{E}, 1)$, we have $\frac{\mathcal{E}}{\tau} \leq 0.5$, so $\delta \leq 1/8$, which is a valid small error tolerance.

By Lemma \ref{lemma:Pre-Softmax Matrix rank}, there exists a truncated pre-softmax matrix $\tilde{S}$ of rank:
\[
r = O\left( d_h \log\left( \frac{1}{\delta} \right) \right) = O\left( d_h \log\left( \frac{4\tau}{\mathcal{E}} \right) \right) = O\left( d_h \log\left( \frac{\tau}{\mathcal{E}} \right) \right)
\]
such that the approximation error is bounded uniformly over the background set:
\[
\max_{(p,q) \in \OmegaBg} |s_{p,q} - \tilde{s}_{p,q}| \leq \delta = \frac{\mathcal{E}}{4\tau}
\]
Since $\tilde{S}$ has rank $r$, by the definition of matrix rank, we can factor it as:
\[
\tilde{S} = \tilde{Q} \tilde{K}^\top, \quad \text{where } \tilde{Q}, \tilde{K} \in \R^{L \times r}
\]

\paragraph{Step 2: Set FAVOR+ Relative Error Tolerance}
We choose the FAVOR+ relative error tolerance to match the pre-softmax error:
\[
\delta_{\text{rel}} = \frac{\mathcal{E}}{4\tau}
\]
and set the failure probability to $\delta_{\text{fail}} = O(L^{-c})$, as specified in the following theorem statement.

\paragraph{Step 3: FAVOR+ Linearization with Exact Rank Bound}
We apply Lemma \ref{lemma:FAVOR} to the low-dimensional embeddings $\tilde{q}_i$ (rows of $\tilde{Q}$) and $\tilde{k}_j$ (rows of $\tilde{K}$). From Lemma \ref{lemma Q/K Boundedness}, the original query/key vectors are bounded, so the low-dimensional embeddings are also uniformly bounded by a constant $B>0$ independent of $L$. For the background set, $\exp(\tilde{q}_i^T \tilde{k}_j) = \exp(\tilde{s}_{i,j})$ is bounded below by a constant $c>0$ (since $\tilde{s}_{i,j}$ is bounded for the smooth background).

The required feature dimension is:
\[
R = O\left( \frac{r}{\delta_{\text{rel}}^2} \log\left( \frac{L^2}{\delta_{\text{fail}}} \right) \right)
\]
We substitute the two core terms:
1.  Pre-softmax truncation rank: $r = O\left( d_h \log\left( \frac{\tau}{\mathcal{E}} \right) \right)$;
2.  Relative error tolerance: $\delta_{\text{rel}} = \frac{\mathcal{E}}{4\tau}$.

Substituting these into the rank bound:
\[
R = O\left( \frac{d_h \log\left( \frac{\tau}{\mathcal{E}} \right)}{\left( \frac{\mathcal{E}}{4\tau} \right)^2} \log\left( \frac{L^2}{\delta_{\text{fail}}} \right) \right)
\]
Critically, since $\delta_{\text{fail}} = O(L^{-c})$, we have:
\[
\log\left( \frac{L^2}{\delta_{\text{fail}}} \right) = O(\log L)
\]
We thus obtain the final, simplified rank bound:
\[
R = O\left( d_h \cdot \left(\frac{\tau}{\mathcal{E}}\right)^2 \cdot \log\left( \frac{\tau}{\mathcal{E}} \right) \cdot \log L \right)
\]
Let $\hat{E}_{i,j} = \Phi(\tilde{q}_i)^T \Phi(\tilde{k}_j)$ be the approximated exponential kernel matrix. By Lemma \ref{lemma:FAVOR}, with probability at least $1 - \delta_{\text{fail}}$, the approximation satisfies:
\[
|\hat{E}_{i,j} - \exp(\tilde{s}_{i,j})| \leq \delta_{\text{rel}} \exp(\tilde{s}_{i,j}) = \frac{\mathcal{E}}{4\tau} \exp(\tilde{s}_{i,j})
\]
By construction, $\rank(\hat{E}) \leq R$, since it is the product of two $L \times R$ matrices.

\paragraph{Step 4: Error Propagation via Exponential Inequalities}
We now propagate the pre-softmax approximation error and FAVOR+ error to the final post-softmax attention matrix. Since $\tau \in [2\mathcal{E}, 1)$, we have $\frac{\mathcal{E}}{\tau} \leq 0.5$, so all exponential inequalities apply.

First, we bound the error between $\exp(\tilde{s}_{p,q})$ and $\exp(s_{p,q})$. From Step 1, we have $|\tilde{s}_{p,q} - s_{p,q}| \leq \frac{\mathcal{E}}{4\tau} \leq 0.125$. Using the exponential inequalities:
\[
\exp\left(s_{p,q} - \frac{\mathcal{E}}{4\tau}\right) \leq \exp(\tilde{s}_{p,q}) \leq \exp\left(s_{p,q} + \frac{\mathcal{E}}{4\tau}\right)
\]
Applying the bounds $e^x \leq 1+2x$ and $e^{-x} \geq 1-x$ for $x \leq 0.5$:
\[
\exp(s_{p,q}) \left(1 - \frac{\mathcal{E}}{4\tau}\right) \leq \exp(\tilde{s}_{p,q}) \leq \exp(s_{p,q}) \left(1 + 2 \cdot \frac{\mathcal{E}}{4\tau}\right) = \exp(s_{p,q}) \left(1 + \frac{\mathcal{E}}{2\tau}\right)
\]

Next, we combine this with the FAVOR+ error bound from Step 3. The FAVOR+ bound can be rewritten as:
\[
\exp(\tilde{s}_{p,q}) \left(1 - \frac{\mathcal{E}}{4\tau}\right) \leq \hat{E}_{p,q} \leq \exp(\tilde{s}_{p,q}) \left(1 + \frac{\mathcal{E}}{4\tau}\right)
\]

Substituting the bounds for $\exp(\tilde{s}_{p,q})$ into this inequality, we get the compounded error bound for $\hat{E}_{p,q}$ relative to the true exponential kernel $\exp(s_{p,q})$:
\[
\exp(s_{p,q}) \left(1 - \frac{\mathcal{E}}{4\tau}\right)^2 \leq \hat{E}_{p,q} \leq \exp(s_{p,q}) \left(1 + \frac{\mathcal{E}}{2\tau}\right)\left(1 + \frac{\mathcal{E}}{4\tau}\right)
\]

We expand the upper bound product explicitly:
\[
\left(1 + \frac{\mathcal{E}}{2\tau}\right)\left(1 + \frac{\mathcal{E}}{4\tau}\right) = 1 + \frac{\mathcal{E}}{2\tau} + \frac{\mathcal{E}}{4\tau} + \frac{\mathcal{E}^2}{8\tau^2} = 1 + \frac{3\mathcal{E}}{4\tau} + \frac{\mathcal{E}^2}{8\tau^2}
\]
Since $\frac{\mathcal{E}}{\tau} \leq 0.5$, the quadratic term is bounded by:
\[
\frac{\mathcal{E}^2}{8\tau^2} = \frac{\mathcal{E}}{8\tau} \cdot \frac{\mathcal{E}}{\tau} \leq \frac{\mathcal{E}}{8\tau} \cdot 0.5 = \frac{\mathcal{E}}{16\tau}
\]
Substituting this back, the upper bound becomes:
\[
1 + \frac{3\mathcal{E}}{4\tau} + \frac{\mathcal{E}}{16\tau} = 1 + \frac{13\mathcal{E}}{16\tau} < 1 + \frac{\mathcal{E}}{\tau}
\]

For the lower bound, we expand the square:
\[
\left(1 - \frac{\mathcal{E}}{4\tau}\right)^2 = 1 - \frac{\mathcal{E}}{2\tau} + \frac{\mathcal{E}^2}{16\tau^2} \geq 1 - \frac{\mathcal{E}}{2\tau} > 1 - \frac{\mathcal{E}}{\tau}
\]

Combining the upper and lower bounds, we get the final relative error bound for the exponential kernel:
\[
\left| \hat{E}_{p,q} - \exp(s_{p,q}) \right| \leq \frac{\mathcal{E}}{\tau} \exp(s_{p,q})
\]

Finally, we normalize by the true row partition function $Z_p$ to get the attention matrix. Define the low-rank attention matrix:
\[
\hat{A}_{\text{lowrank}}(p,q) = \frac{\hat{E}_{p,q}}{Z_p}
\]
Dividing both sides of the relative error bound by $Z_p$, and using the definition of the true attention weight $A(p,q) = \frac{\exp(s_{p,q})}{Z_p}$, we get:
\[
| \hat{A}_{\text{lowrank}}(p,q) - A(p,q) | \leq \frac{\mathcal{E}}{\tau} A(p,q)
\]
For $(p,q) \in \OmegaBg$, we have $A(p,q) \leq \tau$ by definition. Substituting this yields the absolute error bound on the background set:
\[
| \hat{A}_{\text{lowrank}}(p,q) - A(p,q) | \leq \mathcal{E}
\]
Crucially, by Definition \ref{def:1}, the true background target $A_{\text{bg}}(p,q) = A(p,q)$ for all $(p,q) \in \OmegaBg$. Therefore, we have formally established that our globally unmasked low-rank branch tightly approximates the background target:
\[
\max_{(p,q) \in \OmegaBg} |\hat{A}_{\text{lowrank}}(p,q) - A_{\text{bg}}(p,q)| \leq \mathcal{E}
\]
with probability at least $1 - \delta_{\text{fail}}$.

\paragraph{Step 5: Residual Reconstruction}
We emphasize that $\hat{A}_{\text{lowrank}}$ is defined on the entire image. Formally:
\[
\hat{A}_{\text{lowrank}} = D \hat{E}, \quad \text{where } D = \diag(1/Z_1, \dots, 1/Z_L)
\]
From Lemma \ref{lemma:Exact Diagonal Normalization and Rank Preservation}, left-multiplying by the invertible diagonal matrix $D$ preserves rank:
\[
\rank(\hat{A}_{\text{lowrank}}) = \rank(\hat{E}) \leq R = O\left( d_h \left(\frac{\tau}{\mathcal{E}}\right)^2 \log\left( \frac{\tau}{\mathcal{E}} \right) \log L \right)
\]
We use the \textbf{residual reconstruction} defined in Definition \ref{def:2} to preserve low-rankness:
\[
\hat{A}_{\text{final}} = \tilde{A}_{\text{sparse}} + \hat{A}_{\text{lowrank}}
\]

We now prove the global uniform error guarantee by splitting the analysis into the two disjoint sets $\OmegaSpike$ and $\OmegaBg$:

\textbf{Case 1: $(p,q) \in \OmegaSpike$ (High-Energy Spike Set).} 
By Definition \ref{def:2}, the residual sparse branch acts as an exact compensator for the low-rank output: $\tilde{A}_{\text{sparse}}(p,q) = A(p,q) - \hat{A}_{\text{lowrank}}(p,q)$. Thus, the final reconstructed matrix guarantees zero error on the spike set:
\[
\hat{A}_{\text{final}}(p,q) = \left( A(p,q) - \hat{A}_{\text{lowrank}}(p,q) \right) + \hat{A}_{\text{lowrank}}(p,q) = A(p,q) \implies |\hat{A}_{\text{final}}(p,q) - A(p,q)| = 0
\]

\textbf{Case 2: $(p,q) \in \OmegaBg$ (Low-Energy Background Set).} 
By Definition \ref{def:2}, the sparse branch is masked out ($\tilde{A}_{\text{sparse}}(p,q) = 0$). Thus, the final reconstructed matrix relies entirely on the low-rank approximator:
\[
\hat{A}_{\text{final}}(p,q) = \hat{A}_{\text{lowrank}}(p,q)
\]
Since $A(p,q) = A_{\text{bg}}(p,q)$ on this set, we directly apply the bound established at the end of Step 4:
\[
|\hat{A}_{\text{final}}(p,q) - A(p,q)| = |\hat{A}_{\text{lowrank}}(p,q) - A_{\text{bg}}(p,q)| \leq \mathcal{E}
\]

\textbf{Global Error Bound.} 
Combining both cases, the maximum error is exactly bounded by $\mathcal{E}$ everywhere:
\[
\max_{p,q} |\hat{A}_{\text{final}}(p,q) - A(p,q)| \leq \mathcal{E}
\]
\paragraph{Step 6: Conditional Monotonicity and Asymptotic Scaling of the Rank Bound}
We now prove the conditional monotonicity of the required rank $R$ with respect to $\tau$, and analyze its asymptotic behavior. The rank bound is:
\[
R(\tau) = C \cdot d_h \left(\frac{\tau}{\mathcal{E}}\right)^2 \log\left( \frac{\tau}{\mathcal{E}} \right) \log L
\]
where $C>0$ is a constant independent of $\tau$. We analyze the behavior of the function $f(x) = x^2 \log x$ for $x > 0$. The derivative is:
\[
f'(x) = 2x \log x + x^2 \cdot \frac{1}{x} = x (2 \log x + 1)
\]
For $x \geq 2$ (since $\tau \geq 2\mathcal{E} \implies x = \tau/\mathcal{E} \geq 2$), $\log x \geq \log 2 > 0$, so $f'(x) > 0$. Thus, $f(x)$ is strictly increasing for $x \geq 2$.

\end{proof}
\subsection{Main theorem and result analysis}

\begin{limbox}
\begin{theorem}
\label{theorem: main} 
    Under Assumption \ref{assumption 1} and \ref{assumption 2} (empirically validated bounded projection weights and exponential spectral concentration of 3D RoPE), for any sequence length $L$, choosing the energy threshold $\tau$ and error tolerance $\mathcal{E}$ as:
\[
\tau = \frac{c}{ L^{\frac{1}{2}}}, \quad \mathcal{E} = \frac{\tau}{2} = \frac{c}{ 2L^{\frac{1}{2}}}
\]
where $c > 0$ is a constant independent of $L$, the post-softmax attention matrix $A$ inherently decomposes into a high-energy sparse target ($A_{\text{sparse}}$) and a smooth background target ($A_{\text{bg}}$). Consequently, there exists a sparse-low-rank reconstructed matrix $\hat{A}_{\text{final}} = \tilde{A}_{\text{sparse}} + \hat{A}_{\text{lowrank}}$ satisfying the following properties:
\begin{enumerate}
    \item \textbf{Sub-quadratic Sparsity:} The residual sparse branch $\tilde{A}_{\text{sparse}}$, which acts as an exact error compensator on the spike set ($\tilde{A}_{\text{sparse}}(p,q) = A(p,q) - \hat{A}_{\text{lowrank}}(p,q)$ for $(p,q) \in \Omega_\tau$), has total non-zero entries bounded by $\text{NNZ}(\tilde{A}_{\text{sparse}}) = \mathcal{O}(L^{\frac{3}{2}})$.
    \item \textbf{Sub-linear Rank:} The globally unmasked low-rank branch $\hat{A}_{\text{lowrank}}$, which acts as a dense approximator for the background target $A_{\text{bg}}$, requires a bottleneck rank of $R = \mathcal{O}\left( d_h \cdot \log L \right)$.
    \item \textbf{Asymptotic Error Bound:} With high probability $1 - \delta_{\text{fail}}$, the global reconstruction error is uniformly bounded by $\max_{p,q} |\hat{A}_{\text{final}}(p,q) - A(p,q)| = \mathcal{O}\left( L^{-\frac{1}{2}} \right)$.
\end{enumerate}
\end{theorem}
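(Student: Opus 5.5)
The theorem is a specialization of Corollary \ref{corollary: main} combined with Theorem \ref{theorem 1}. The plan is to instantiate the corollary at the prescribed parameters $\tau = c L^{-1/2}$ and $\mathcal{E} = \tau/2$, and then read off the three asymptotic claims.

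First I check admissibility. The corollary requires $\tau \in [2\mathcal{E}, 1)$. With $\mathcal{E} = \tau/2$ we have $\tau = 2\mathcal{E}$ exactly, so the lower endpoint is met. The condition $\tau < 1$ holds once $L > c^2$. For the finitely many smaller $L$ all quantities are bounded constants, so they can be absorbed into the $\mathcal{O}(\cdot)$ constants. I also fix $\delta_{\text{fail}} = O(L^{-c'})$ for some $c' > 0$, as the corollary requires, which gives the ``high probability'' statement.

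\textbf{Sparsity.} By Definition \ref{def:2}, $\tilde{A}_{\text{sparse}}$ is supported on $\OmegaSpike$. By Theorem \ref{theorem 1}, each row has at most $\lfloor 1/\tau \rfloor \le L^{1/2}/c$ entries in $\OmegaSpike$. Hence
\[
\text{NNZ}(\tilde{A}_{\text{sparse}}) \le L\lfloor 1/\tau\rfloor = \mathcal{O}(L^{3/2}).
\]
One subtlety is that the compensator entries $A - \hat{A}_{\text{lowrank}}$ could happen to vanish on $\OmegaSpike$. That only lowers the count, so the support bound is an upper bound regardless.

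\textbf{Rank.} Substituting into the corollary's bound gives $\tau/\mathcal{E} = 2$. Both $(\tau/\mathcal{E})^2 = 4$ and $\log(\tau/\mathcal{E}) = \log 2$ are then absolute constants, so
\[
R = \mathcal{O}(d_h \log L).
\]
This is the step I expect to be the main obstacle. The $L$-independence of the constants hidden inside the corollary must be rechecked under the choice $\tau \to 0$. These constants are the truncation constant $C_\omega$ and $\rho_k$ from Assumption \ref{assumption 2}, which are $L$-independent by hypothesis, and the FAVOR+ constants $M$ and the kernel lower bound $c$. The latter depend only on $Q_{\max}$ via Lemma \ref{lemma Q/K Boundedness} and on $\delta = \mathcal{E}/(4\tau) = 1/8$, which is fixed. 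The corollary's error propagation uses only the ratio $\mathcal{E}/\tau$, not $\tau$ itself, and the row normalization uses the true $Z_p$ via Lemma \ref{lemma:Exact Diagonal Normalization and Rank Preservation}. So nothing degrades as $\tau$ shrinks. The one place I would be careful is the background set $\OmegaBg$ itself: it grows as $\tau$ decreases. However, Assumption \ref{assumption 2} is stated for the background set at the chosen $\tau$ with $L$-independent constants, so the truncation rank $r = \mathcal{O}(d_h)$ stays bounded. Multiplying this by the union-bound factor $\log(L^2/\delta_{\text{fail}}) = \mathcal{O}(\log L)$ yields the stated rank.

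\textbf{Error.} The corollary gives, with probability at least $1 - \delta_{\text{fail}}$,
\[
\max_{p,q}|\hat{A}_{\text{final}}(p,q) - A(p,q)| \le \mathcal{E}.
\]
Here the error is zero on $\OmegaSpike$ by the exact compensation, and at most $\mathcal{E}$ on $\OmegaBg$. Since $\mathcal{E} = c/(2L^{1/2})$, the global error is $\mathcal{O}(L^{-1/2})$. Putting the three items together completes the proof.
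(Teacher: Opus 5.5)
Your proposal is correct and follows the paper's proof: instantiate Corollary~\ref{corollary: main} with $\tau/\mathcal{E}=2$, take $\text{NNZ}(\tilde{A}_{\text{sparse}})\le L\lfloor 1/\tau\rfloor=\mathcal{O}(L^{3/2})$ from Theorem~\ref{theorem 1}, read $R=\mathcal{O}(d_h\log L)$ off the corollary's rank bound, and bound the global error by $\mathcal{E}=c/(2L^{1/2})$. Your admissibility check ($\tau<1$ needs $L>c^2$) and your audit of the $L$-independence of the hidden constants are extra care the paper skips (for fixed $L$, lowering $\tau$ actually shrinks $\{A\le\tau\}$, so the uniformity issue you flag is really one in $L$, which Assumption~\ref{assumption 2} grants by hypothesis).
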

\end{limbox}

\begin{proof}
We prove each statement sequentially:
\paragraph{Proof of Statement 1 (Sparse Branch Complexity)}
Given $\tau = \frac{c}{ L^{\frac{1}{2}}}$, by Theorem \ref{theorem 1}, the total number of non-zero entries is:
\[
\text{NNZ}(\tilde{A}_{\text{sparse}}) \leq L \cdot \left\lfloor \frac{1}{\tau} \right\rfloor = L \cdot \left\lfloor \frac{L^{\frac{1}{2}}}{c} \right\rfloor = O(L^{\frac{3}{2}})
\]

\paragraph{Proof of Statement 2 (Low-Rank Branch Rank)}
Given $\mathcal{E} = \frac{\tau}{2}$, we have $\frac{\tau}{\mathcal{E}} = 2$. Substituting into the rank bound from the main theorem:
\[
R = O\left( d_h \cdot \left(2\right)^2 \cdot \log\left( 2 \right) \cdot \log L \right)
\]
Thus:
\[
R = O\left( d_h\cdot logL \right)
\]
The rank decreases as sequence length increases.

\paragraph{Proof of Statement 3 (Global Reconstruction Error)}
By theorem \ref{theorem: main}, the global reconstruction error is bounded by the tolerance $\mathcal{E}$. Substituting $\mathcal{E}= \frac{c}{ 2L^{\frac{1}{2}}}$, we obtain the high-probability upper bound:
\begin{equation}
    \max_{p,q} |\hat{A}_{\text{final}}(p,q) - A(p,q)| \leq \frac{c}{ 2L^{\frac{1}{2}}} = O\left( L^{-\frac{1}{2}}\right)
\end{equation}
\end{proof}

\begin{remark}[Relation to Stable Rank]
\label{remark:stable rank}
Our theorem guarantees $A_{\text{bg}} = \hat{A}_{\text{lowrank}} + E$, where $\rank(\hat{A}_{\text{lowrank}}) \leq R$ and $\|E\|_\infty \leq \mathcal{E}$. Because the residual error $E$ is uniformly bounded element-wise by a small tolerance $\mathcal{E}$, the energy of the background matrix is overwhelmingly concentrated in the top $R$ singular components. This structural forces the stable rank of $A_{\text{bg}}$ to be heavily dominated by $R$.
\end{remark}

\begin{remark}[Asymptotic Superiority for Long-Sequence Inference]
\label{remark:long_sequence_scaling}
Theorem \ref{theorem: main} provides the fundamental mathematical justification for why RoPeSLR is exceptionally suited for long-sequence video generation. Consider the asymptotic behavior as sequence length $L \to \infty$:
\begin{enumerate}[leftmargin=*, itemsep=0em]
    \item \textbf{Vanishing Sparse Overhead}: The ratio of preserved high-energy sparse computations relative to full attention scales as $\frac{\mathcal{O}(L^{3/2})}{\mathcal{O}(L^2)} = \mathcal{O}(L^{-1/2})$. As $L$ approaches infinity, this ratio converges to 0. This guarantees that longer sequences naturally permit exponentially higher sparsity ratios without sacrificing vital semantic spikes.
    \item \textbf{Sub-linear Rank Bottleneck}: The rank required to closely capture the global background continuum grows only logarithmically ($R = \mathcal{O}(\log L)$). Compared to the rapid scaling of the sequence itself, this dictates that the low-rank branch becomes increasingly compact and heavily compressed as video length expands.
\end{enumerate}
Together, these bounds prove that RoPeSLR actively benefits from sequence scaling, bypassing the $\mathcal{O}(L^2)$ curse and establishing an asymptotically optimal theoretical foundation for ultra-long video synthesis.
\end{remark}

\begin{remark}[The Indispensable Role of 3D RoPE in This Proof]
\label{remark:3D ROPE}
The tight sequence-length-independent rank bound we derive is \textbf{exclusively enabled by the mathematical structure of 3D RoPE}, with two non-substitutable theoretical contributions directly grounded in our proofs:
\begin{enumerate}
    \item It enables an \textbf{exact Fourier series expansion} of the pre-softmax QK inner product over relative positions (\textbf{Lemma \ref{lemma 3D RoPE Induces Exact Fourier Series}}), which decomposes the full attention matrix into a sum of rank-$\le 2$ frequency-specific matrices (\textbf{Lemma \ref{lemma:Each Frequency Term Each Frequency Term}}).
    \item Its standard exponential frequency schedule induces \textbf{exponential decay of high-frequency interaction coefficients} (empirically validated in Section \ref{sec:empirical_validation}). Leveraging this property, \textbf{Lemma \ref{lemma:Pre-Softmax Matrix rank}} proves that the background matrix can be approximated with a rank bound of $O(d_h \log(1/\delta))$, which is independent of sequence length $L$.
\end{enumerate}
\end{remark}

\begin{remark}[Understanding the Token-Wise Context Compensator: Core Insights]
\label{remark:token_wise_derivation}
To bridge the logical gap between matrix-level low-rankness (Theorem \ref{theorem: main}) and point-wise implementation, we frame the compensator as an \textbf{amortized implicit proxy}.

\textbf{1. Algebraic Factorization of Global Aggregation:}
While trigonometric identities explicitly decouple the pre-softmax 3D RoPE logits into absolute spatial bases, the non-linear softmax operation typically breaks this pairwise separation. However, because Theorem \ref{theorem: main} guarantees that the background continuum $A_{\text{bg}}$ is exceptionally low-rank, it admits a low-dimensional matrix factorization. Following the principles of kernelized linear attention \cite{choromanski2021rethinking, wang2020linformer}, this low-rank post-softmax matrix can be decomposed into $A_{\text{bg}}(p,q) \approx \sum_{k=1}^r \phi_k(p) \psi_k(q)$, where $\phi_k$ and $\psi_k$ denote the $k$-th dimension of the decoupled spatial embeddings. Consequently, the global aggregation $(A_{\text{bg}}V)_p$ at position $p$ factorizes as:
\begin{equation}
    (A_{\text{bg}}V)_p = \sum_{q=1}^L A_{\text{bg}}(p,q) V_q \approx \sum_{k=1}^r \phi_k(p) \underbrace{\left( \sum_{q=1}^L \psi_k(q) V_q \right)}_{\text{Global Context } C_k}
\end{equation}
This derivation formally proves that the smooth background context is essentially a linear combination of spatial bases $\phi_k(p)$, weighted by global coefficients $C_k$ that are entirely invariant to the query position $p$.

\textbf{2. Amortized Decoding via Dense State Encapsulation:}
While $C_k$ represents a global sequence-level sum, it does not require explicit re-computation if the token $X_p$ already "perceives" the macroscopic state. In deep DiTs, $X_p$ is a dense semantic capsule: it has accumulated global context via preceding self-attention layers and is strictly modulated by global timestep and prompt embeddings. Thus, $X_p$ inherently encapsulates the descriptors $C_k$. By combining $X_p$ with explicit 3D positional bases ($\text{PE}_{\text{3D}}$), the MLP functions as a conditional decoder, evaluating the spatial field point-wise. This is supported by \textbf{spectral bias} theory \cite{rahaman2019spectral}, which suggests MLPs are naturally biased toward learning such low-frequency global manifolds.

\textbf{3. Empirical Verification:}
The structural validity of this proxy is confirmed by our Stage-I alignment objective $\mathcal{L}_{\text{align}} = \| O_{\text{total}} - A V \|_F^2$, which converges rapidly to a minimal floor (\textbf{Figure \ref{fig:training_loss}}). Mechanistically, our \textbf{Gram spectral analysis (Appendix \ref{subsec:svd_mechanistic_analysis})} proves that the MLP's output eigenvectors perfectly match the geometric standing waves of the ground-truth background, confirming that the learned proxy successfully captures the theoretically predicted manifold.
\end{remark}

\begin{remark}[Superiority of Sparse-Low-Rank Structure in DiTs and LLMs]
\label{remark: LLM and DIT}
Our framework proves that DiTs possess a more pronounced sparse-low-rank nature than LLMs, grounded in two mathematical inequalities:
\begin{enumerate}
    \item \textbf{Cubic Acceleration of Spectral Decay (Rank Inequality)}:
    As derived in Lemma \ref{lemma:Pre-Softmax Matrix rank}, the truncation index $M_k$ required to bound the error depends on the exponential decay constant $\rho$. 
    For LLMs using 1D RoPE with head dimension $d_h$, the decay rate is $\rho_{\text{LLM}} = 10000^{-\alpha/d_h}$.
    For DiTs using 3D RoPE, the dimension is partitioned ($d_t, d_x, d_y$). Assuming a uniform partition $d_k = d_h / 3$, the per-axis decay rate is $\rho_{\text{DiT}} = 10000^{-\alpha/(d_h/3)} = \rho_{\text{LLM}}^3$.
    
    Recall from the proof of Lemma \ref{lemma:Pre-Softmax Matrix rank} that the required rank $R$ to achieve an error $\delta$ scales inversely with $-\log(\rho)$. Therefore, we have the theoretical relationship:
    \begin{equation}
        R_{\text{DiT}} \propto \frac{1}{-\log(\rho_{\text{DiT}})} = \frac{1}{-3 \log(\rho_{\text{LLM}})} = \frac{1}{3} R_{\text{LLM}}
    \end{equation}
    This guarantees that \textbf{under the exact same error tolerance, the stable rank of the background matrix in DiTs is asymptotically one-third that of LLMs}, massively amplifying the efficiency of the low-rank branch.
    
   \item \textbf{Empirical Tightness of the Deterministic Sparsity Bound}: 
    While the theoretical $\mathcal{O}(L^{3/2})$ sparsity upper bound (derived via $\text{NNZ} \leq L \cdot \lfloor 1/\tau \rfloor$) is a deterministic property holding for any row-stochastic attention matrix, its practical utility heavily depends on the data domain. In LLMs, tasks like in-context learning scatter semantic spikes across the causal history, causing this universal bound to be vacuous in practice. Conversely, the inherent spatiotemporal locality of video naturally clusters high-energy semantic interactions. This physical structure ensures that the theoretical deterministic bound aligns tightly with the empirical attention manifold in DiTs, rendering the $\mathcal{O}(L^{3/2})$ extraction practically effective.
    \item \textbf{Empirical Evidence}: 
We provide experimental evidence to validate this remark, see figure \ref{fig:attn_decomp_hunyuan} and \ref{fig:attn_decomp_qwen} for more details.
\end{enumerate}
\end{remark}

\newpage
\section{Additional Experiments}
\label{sec:Additional Experiments}
\subsection{Empirical Validation of Spectral Concentration (Assumption \ref{assumption 2})}
\label{sec:empirical_validation}

\subsubsection{Spectral Evolution and Justification of Exponential Decay}
\label{subsec:spectral_evolution}

To verify the exponential decay $\rho_k^{m_0}$ ($\rho_k < 1$) proposed in Assumption \ref{assumption 2}, we measure the contribution of each frequency index $m$ for axis $k \in \{t, x, y\}$. We define the \textbf{Maximum Interaction Magnitude} $\mathcal{M}(k, m)$ using the $99^{\text{th}}$ percentile:
\begin{equation}
    \mathcal{M}(k, m) = \text{Percentile}_{99\%} \left( \left| \langle Q_p^{(k,m)}, R_k(\Delta) K_q^{(k,m)} \rangle \right| \right)
\end{equation}
Using a strict maximum over all $L^2$ pairs is vulnerable to isolated sparse spikes. Filtering the top 1\% extreme interactions provides a statistically robust proxy for the true supremum of the smooth background continuum \cite{wang2020linformer}. We simultaneously compute the \textbf{Cumulative Tail Energy}: $\text{Tail}(m_0) = \sum_{m=m_0}^{d_k/2} \mathcal{M}(k, m)$.

\textbf{Understanding the Visualizations.} We evaluate these metrics on Wan2.1-T2V-1.3B, Wan2.1-T2V-14B, and HunyuanVideo-13B across multiple denoising stages. In Figure \ref{fig:denoising_evolution}, the horizontal axes denote the RoPE dimension index $m$ (larger $m$ corresponds to a lower rotational frequency). The vertical axes are on a \textbf{logarithmic scale} ($\log_{10}$). An exponential decay $f(m) = C \cdot \rho^m$ translates to a linear downward trend on a semi-log plot.

\textbf{Key Observations \& Generalization.} Figure \ref{fig:denoising_evolution} shows that across \textbf{all three models}, the interaction energy along the $t, h,$ and $w$ dimensions consistently exhibits a clear decline on the Log-Y axes as $m$ increases. This confirms that the exponential spectral concentration is a universal structural property of 3D RoPE-equipped DiTs, rooted in RoPE's long-term decay prior \cite{su2021roformer}. Furthermore, as denoising progresses from Gaussian noise to semantic generation, this decay steepens (i.e., $\rho_k$ decreases). This comprehensively justifies Assumption \ref{assumption 2}.

\begin{figure}[htbp]
    \centering
    \includegraphics[width=1\textwidth]{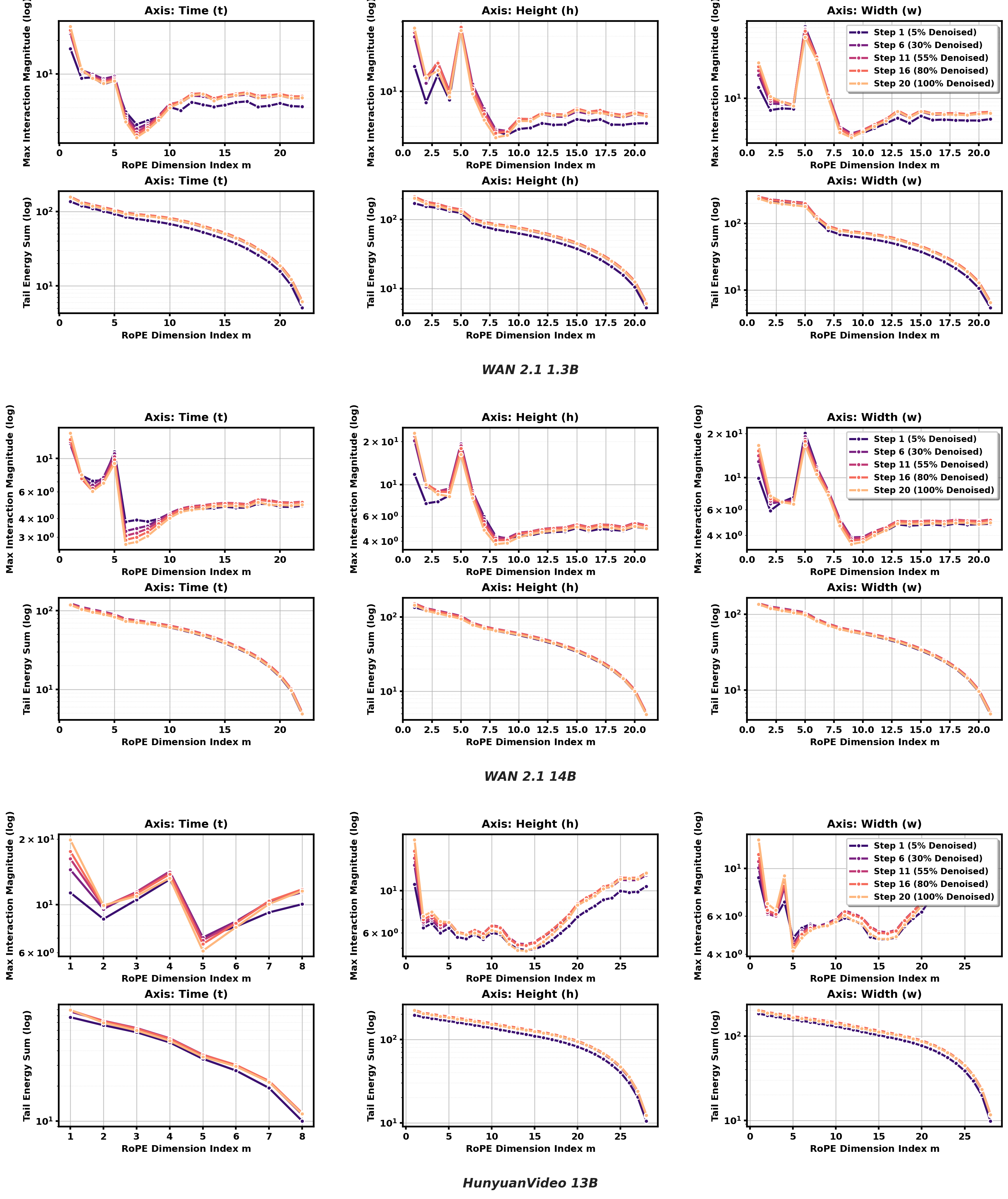}
    \caption{\textbf{Evolution of QK Spectral Concentration Driven by 3D RoPE.} We evaluate three pre-trained DiTs (Wan2.1-1.3B, Wan2.1-14B, and HunyuanVideo-13B) using 3 representative text prompts at $480 \times 832$ resolution, 49 frames. The X-axis is the RoPE dimension index $m \in [1, d_k/2]$. We display the maximum interaction magnitude (approximated via the $99^{\text{th}}$ percentile to filter sparse outliers and evaluate the background) and cumulative tail energy on a \textbf{Log-Y scale}. The consistent downward linear trends across models validate the structural exponential decay formalized in Assumption \ref{assumption 2}.}
    \label{fig:denoising_evolution}
\end{figure}

\subsection{Ablation Study}

\subsubsection{Ablation Study on Sparsity }
\label{subsec:sparsity_ablation}
We perform an ablation study on sparsity. Since the threshold scheme of VMoBA cannot precisely control the exact sparsity ratio, all sparsity values in the experiments are approximate results with an error within 1\%. Results in Table~\ref{tab:sparsity_ablation} show that increasing sparsity from 85\% to 90\% induces nearly no quality loss, while a further rise from 90\% to 95\% causes obvious performance degradation. The sparsity of 90\% is therefore a favorable sweet spot.

\begin{table}[!htbp]
\centering
\fontsize{9}{10}\selectfont   
\setlength{\tabcolsep}{2.5pt}
\caption{Ablation Study on Sparsity.}
\label{tab:sparsity_ablation}
\begin{tabular}{lccccc}
\hline
Sparsity & sc & bc & ms & aq & iq \\
\hline
80\% & 0.9607 & 0.9502 & 0.9899 & 0.6075 & 0.6475 \\
90\% & 0.9510 & 0.9487 & 0.9898 & 0.6071 & 0.6440 \\
95\% & 0.9387 & 0.9127 & 0.9763 & 0.5870& 0.6193 \\
\hline
\end{tabular}
\end{table}





\subsubsection{Ablation Study on Low-Rank Dimension}
\label{sec:ablation_low_rank_dimension}

Figure~\ref{fig:ablation_low_rank_dimension} illustrates the Stage-I alignment dynamics across varying low-rank dimensions on Wan2.1-T2V-1.3B. While all configurations converge reliably, aggressively compressed bottlenecks (e.g., $r=8$) exhibit higher loss floors and optimization variance, indicating constrained representational capacity. Conversely, expanding the bottleneck dimension monotonically improves approximation fidelity and convergence stability. Specifically, \textbf{$r=64$ achieves the lowest MSE loss and optimal optimization behavior, demonstrating a superior capacity to decode the continuous background manifold}. Consequently, we adopt $r=64$ as our default structural prior, balancing extreme parameter efficiency with high-fidelity context restoration.

\begin{figure}[t]
    \centering
    \includegraphics[width=0.82\linewidth]{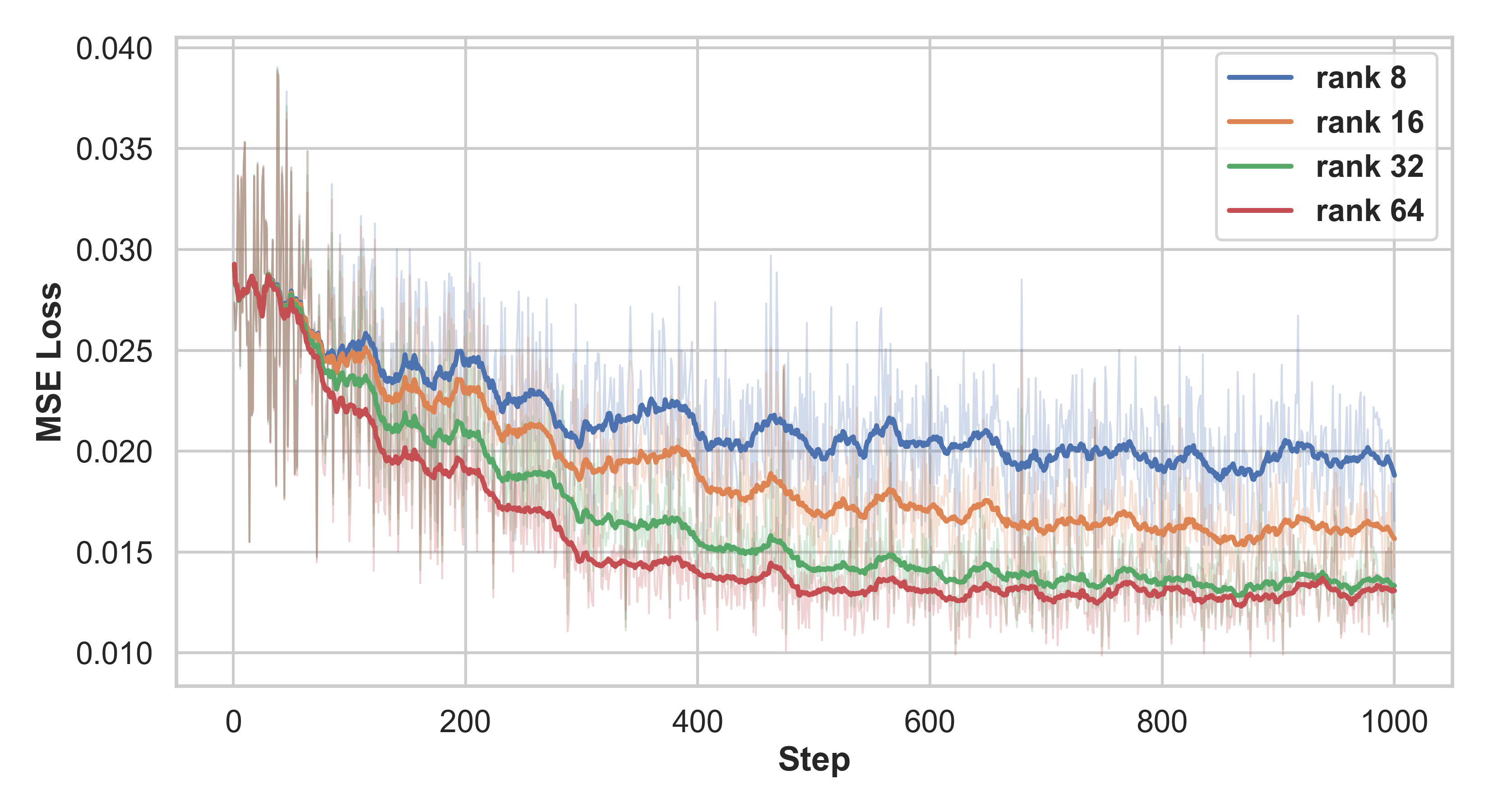}
    \caption{Training MSE loss under different low-rank dimensions.}
    \label{fig:ablation_low_rank_dimension}
\end{figure}

\subsubsection{Ablation Study on Activation Function}
\label{subsec:act_ablation}
We further conduct an ablation study on the activation function used in the low-rank compensator. In this experiment, we keep the sparse branch fixed as VMoBA and only replace the non-linear activation in the low-rank compensator, so that the effect of different activation choices can be isolated. 

As shown in Table~\ref{tab:activation_functions_low_rank}, Sigmoid achieves the best overall performance among the tested activation functions. It obtains the highest scores on SC, BC, MS, and AQ, while maintaining a competitive IQ score. This indicates that a bounded and smooth activation is more suitable for the low-rank compensator, helping it provide stable complementary information to the sparse attention branch.

\begin{table}[H]
\centering
\caption{Comparison of Activation Functions in the low-rank compensator}
\label{tab:activation_functions_low_rank}
\begin{tabular}{lccccc}
\toprule
\textbf{Activation Function} & \textbf{SC$\uparrow$} & \textbf{BC$\uparrow$} & \textbf{MS$\uparrow$} & \textbf{AQ$\uparrow$} & \textbf{IQ$\uparrow$} \\
\midrule
ReLU    & 0.9353 & 0.9346 & 0.9871 & 0.5689 & 0.6458 \\
GELU    & 0.9371 & 0.9395 & 0.9884 & 0.5906 & 0.6291 \\
Tanh    & 0.9433 & 0.9414 & 0.9854 & 0.5421 & \textbf{0.6465} \\
\textbf{Sigmoid} & \textbf{0.9510} & \textbf{0.9487} & \textbf{0.9898} & \textbf{0.6071} & 0.6440 \\
\bottomrule
\end{tabular}
\end{table}

\subsubsection{Ablation Study on Training Schedule Robustness}
\label{sec:Ablation Study on Training Schedule}
To validate the stability of our two-stage post-training pipeline, we analyze the loss dynamics under varying Stage-I alignment durations (ranging from 400 to 1000 steps). As illustrated in Figures \ref{fig:stage1_loss_curve} and \ref{fig:total_loss_curve}, the Stage-I MSE loss reliably converges to a shared minimum regardless of the scheduled length, \textbf{ensuring sufficient structural alignment of the newly introduced low-rank compensator}. Furthermore, upon transitioning to Stage-II full fine-tuning, the optimization trajectories swiftly stabilize and converge to comparable optimal levels across all settings. \textbf{These observations conclusively demonstrate that our decoupled training strategy is robust and insensitive to the specific transition threshold}.

\begin{figure}[t]
    \centering
    \includegraphics[width=0.7\linewidth]{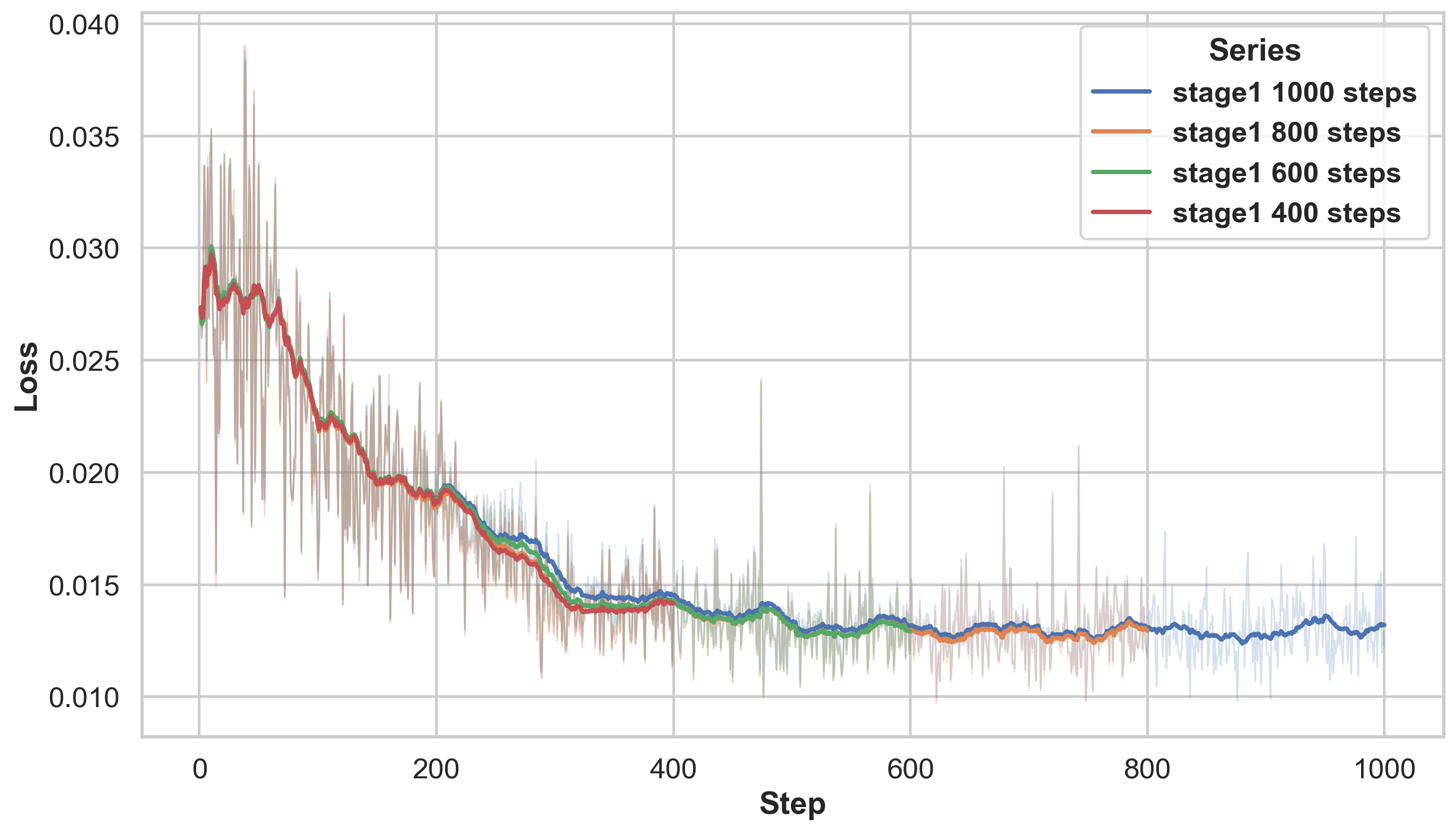}
    \caption{Stage-I training loss on Wan2.1-T2V-1.3B under different stage-I lengths (400, 600, 800, and 1000 steps).}
    \label{fig:stage1_loss_curve}
\end{figure}

\begin{figure}[t]
    \centering
    \includegraphics[width=0.7\linewidth]{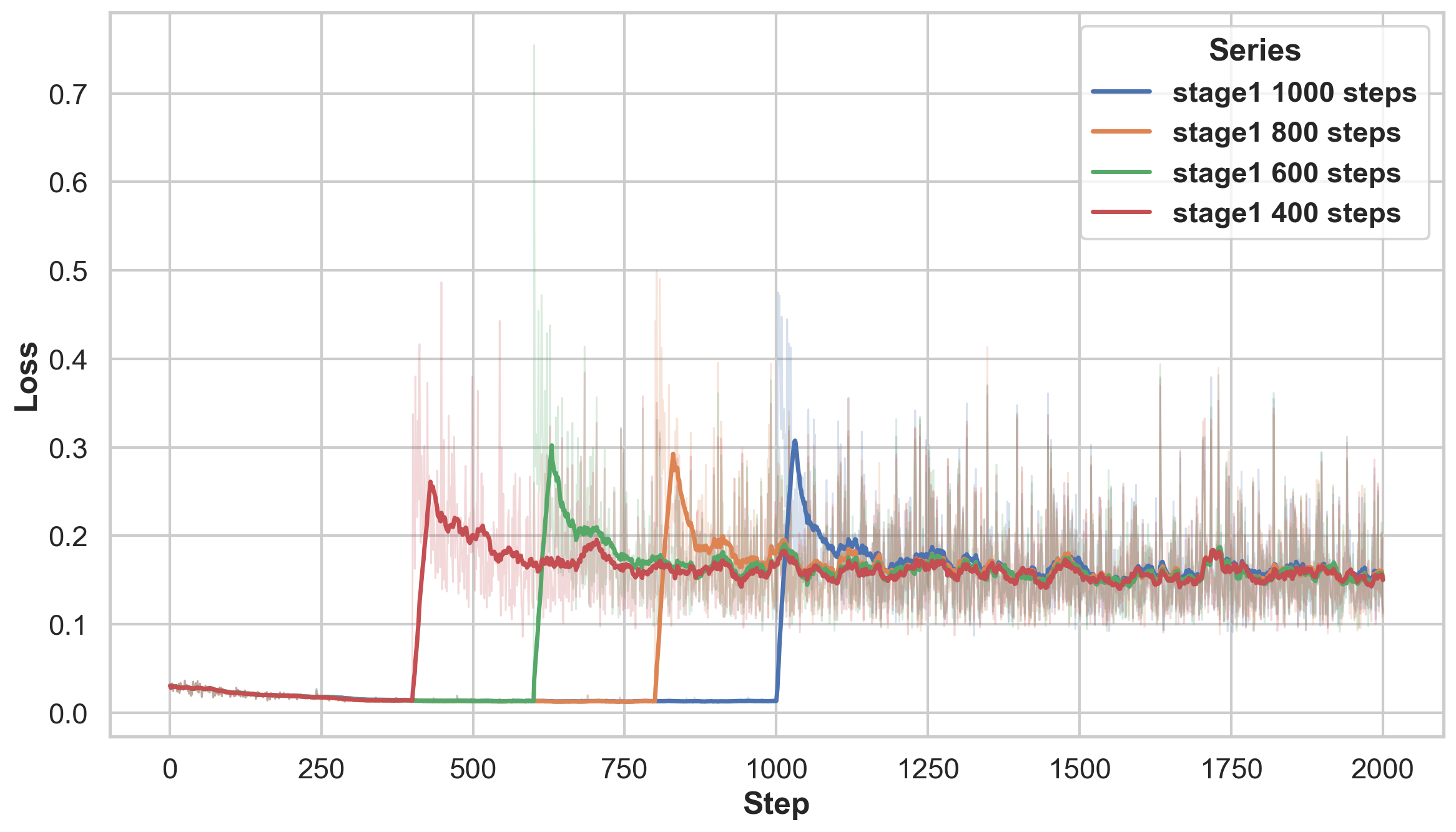}
    \caption{Overall training loss on Wan2.1-T2V-1.3B under different stage-I lengths (400, 600, 800, and 1000 steps). The loss increase near the transition point is caused by the change of training objective between stage I and stage II.}
    \label{fig:total_loss_curve}
\end{figure}

\subsection{Sparsity-Aligned Evaluation of SVG2}
We perform a sparsity-aligned evaluation on SVG2. Since the original implementations of SVG2 cannot precisely control the exact sparsity ratio, all reported results in this section are approximate values with a sparsity error no larger than 1\%. To further demonstrate the superiority of our RoPeSLR, the evaluation is conducted by aligning the sparsity of SVG2 to 90\%, consistent with the sparsity setting of RoPeSLR. Results in Table~\ref{tab:svg2_vsa_sparsity_align} show that even after sparsity alignment, our RoPeSLR still comprehensively outperforms SVG2 in core VBench quality metrics and efficiency, further validating the superiority of our proposed method.

\begin{table}[H]
\centering
\fontsize{9}{10}\selectfont   
\setlength{\tabcolsep}{2.5pt}
\caption{Sparsity-Aligned Evaluation of SVG2.}
\label{tab:svg2_vsa_sparsity_align}
\begin{tabular}{lcccccc}
\hline
Method & Sparsity & SC & MS & BC & IQ & AQ \\
\hline
SVG2 & 90\% & 0.8980 & 0.9701 & 0.9265 & 0.5877 & 0.4988 \\
RoPeSLR (Ours) & 90\% & \textbf{0.9510} & \textbf{0.9898} & \textbf{0.9487} & \textbf{0.6440} & \textbf{0.6071} \\
\hline
\end{tabular}
\end{table}

\subsection{Quantitative results of Wan2.1-T2V-14B}

Table~\ref{tab:wan14b_benchmark} reports the quantitative results on the Wan2.1-T2V-14B model. 
We conduct inference using 81-frame videos at a resolution of $720 \times 1280$, and evaluate both generation quality and attention FLOPs under the same sampling setting. 
Compared with Wan2.1-T2V-1.3B, this setting further examines whether RoPeSLR can scale to a substantially larger video generation backbone and higher-resolution generation scenario.

The quantitative results show that RoPeSLR achieves a favorable quality--efficiency trade-off on Wan2.1-T2V-14B. Compared with existing sparse baselines, RoPeSLR is expected to better preserve generation quality while maintaining high computational efficiency. This suggests that the low-rank compensator effectively restores global context under extreme sparsity, enabling RoPeSLR to retain strong visual fidelity and semantic consistency on large-scale video generation models.

\vspace{-0.5cm}
\begin{table*}[htbp]
  \centering
  \fontsize{7}{8}\selectfont  
  \setlength{\tabcolsep}{2.1pt} 
  \caption{Quantitative comparison on Wan2.1-14B benchmark.}
  \label{tab:wan14b_benchmark}
  \renewcommand{\arraystretch}{1.0}
  \begin{tabular}{llccccccc}
    \toprule
    \multirow{2}{*}{\textbf{Model}} &
    \multirow{2}{*}{\textbf{Method}} &
    \multicolumn{5}{c}{\textbf{VBench Quality}} &
    \multicolumn{2}{c}{\textbf{Efficiency}} \\
    \cmidrule(lr){3-7} \cmidrule(lr){8-9}
    & &
    \textbf{SC↑} &
    \textbf{MS↑} &
    \textbf{BC↑} &
    \textbf{IQ↑} &
    \textbf{AQ↑} &
    \textbf{FLOPs↓} &
    \textbf{Sparsity} \\
    \midrule

    \multirow{5}{*}{Wan2.1-14B}
    & Full  & \textbf{0.9697} & \textbf{0.9884} & \textbf{0.9668} & \textbf{0.7006} & \textbf{0.6246} & 4682.02 TFLOPs & 0\%  \\
    & SVG2   & 0.9511 & 0.9833 & 0.9548 & 0.5978 & 0.5932 & 1655.77 TFLOPs & 64\% \\
    & VSA    & 0.8595 & 0.9849 & 0.9470 & 0.4959 & 0.5954 & \textbf{468.60 TFLOPs} & 90\%  \\
    & SLA    & 0.9479 & 0.9860 & 0.9489 & 0.6468 & 0.6021 & 480.09 TFLOPs & 90\%  \\
    & \cellcolor{blue!10}\textbf{RoPeSLR (Ours)}
    & \cellcolor{blue!10} 0.9615 
    & \cellcolor{blue!10} 0.9868                   
    & \cellcolor{blue!10} 0.9546 
    & \cellcolor{blue!10} 0.6624                   
    & \cellcolor{blue!10} 0.6139 
    & \cellcolor{blue!10} 472.19 TFLOPs
    & \cellcolor{blue!10}\textbf{90\%} \\ 
    \bottomrule
  \end{tabular}
\end{table*}

\subsection{Efficiency Analysis}
\label{subsec:efficiency}
\subsubsection{Efficiency Analysis of the Low-Rank compensator}
We analyze the computational overhead of different approximation methods for the lowrank compensator (bottom-90\%), comparing our low-rank auxiliary compensator against two baseline implementations: full softmax attention and linear attention.

\begin{figure}[H]
  \centering
  \includegraphics[width=0.7\linewidth]{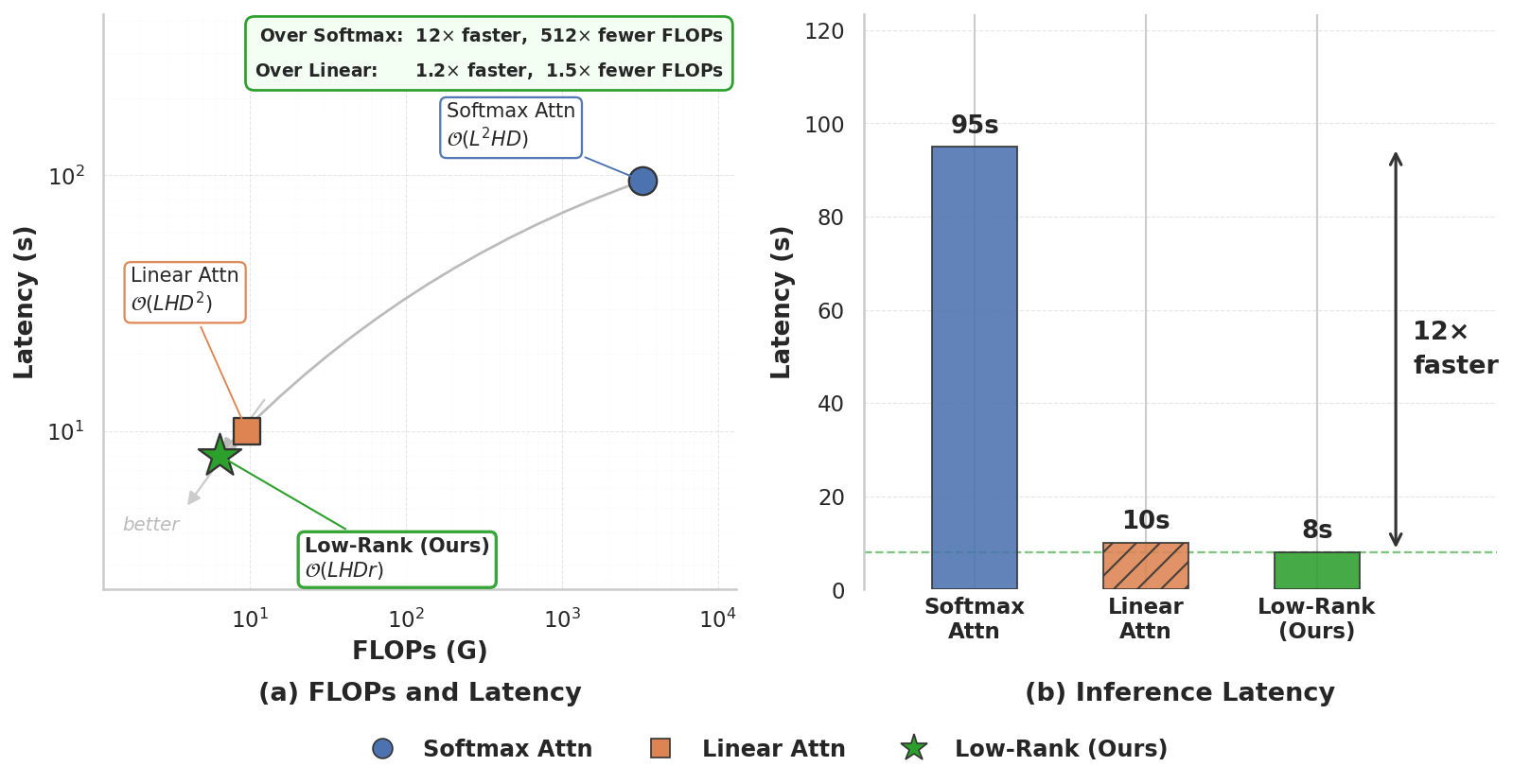}
  \caption{FLOPs and latency comparison of attention approximation methods for the non-top-k branch on Wan2.1-1.3B. (a) FLOPs-latency trade-off on a log-log scale; (b) absolute inference latency per forward pass.}
  \label{fig:lowrank_efficiency}
\end{figure}

As shown in Figure~\ref{fig:lowrank_efficiency},  RoPESLR delivers the best overall efficiency among the three methods. Compared to full softmax attention, it reduces FLOPs by $256\times$ and end-to-end inference latency by $12\times$. It also outperforms linear attention, delivering $1.25\times$ faster inference with lower computational overhead.

\subsubsection{Efficiency Analysis of Latency for Long-Sequence Video Tokens}

We further evaluate the latency of RoPeSLR on long-sequence video generation using HunyuanVideo-13B. The inference is conducted with 129 frames at $720 \times 1280$ resolution, corresponding to $33 \times 45 \times 80 = 118{,}800$ video tokens after VAE compression and patchification. We compare the end-to-end inference latency of the full-attention baseline and RoPeSLR under the same inference setting, as shown in Table~\ref{tab:long_sequence_latency}.

\begin{table*}[htbp]
\centering
\caption{End-to-end latency comparison on HunyuanVideo-13B with 118{,}800 long-sequence video tokens.}
\label{tab:long_sequence_latency}
\begin{tabular}{lcc}
\toprule
\textbf{Method} &
\textbf{Latency (s)$\downarrow$} &
\textbf{Speedup$\uparrow$} \\
\midrule
Full Attention & 3354s & -- \\
RoPeSLR (Ours) & \textbf{1482s} & \textbf{2.26$\times$} \\
\bottomrule
\end{tabular}
\end{table*}

\subsection{Sparse and Low-Rank Characteristics of LLM Attention}
\label{subsec:llm_attn_char}

We analyze the causal self-attention weight matrices from Qwen2-7B-Instruct (28 GQA layers, 28 query heads per layer). For each row of the attention matrix, we retain the entries with the top 90\% cumulative energy and exclude them, then analyze the remaining low-energy residual part. Rows with fewer than 128 visible keys are excluded, resulting in 784 valid (layer, head) combinations.

As illustrated in Figure~\ref{fig:attn_decomp_qwen}, the left energy concentration curve presents a steeper rising trend at the initial segment compared with video diffusion Transformers, which is mainly attributed to the inherent attention sink effect in LLMs. The right subfigure presents the stable rank distribution of low-energy residuals, with a median stable rank around 40. Compared with video DiT, the substantially higher stable rank of LLM residuals indicates that \textbf{LLMs do not exhibit the prominent sparse and low-rank structural patterns observed in video diffusion models} (see Figure~\ref{fig:attn_decomp_hunyuan}).

\begin{figure}[htbp]
\centering
\includegraphics[width=0.8\textwidth]{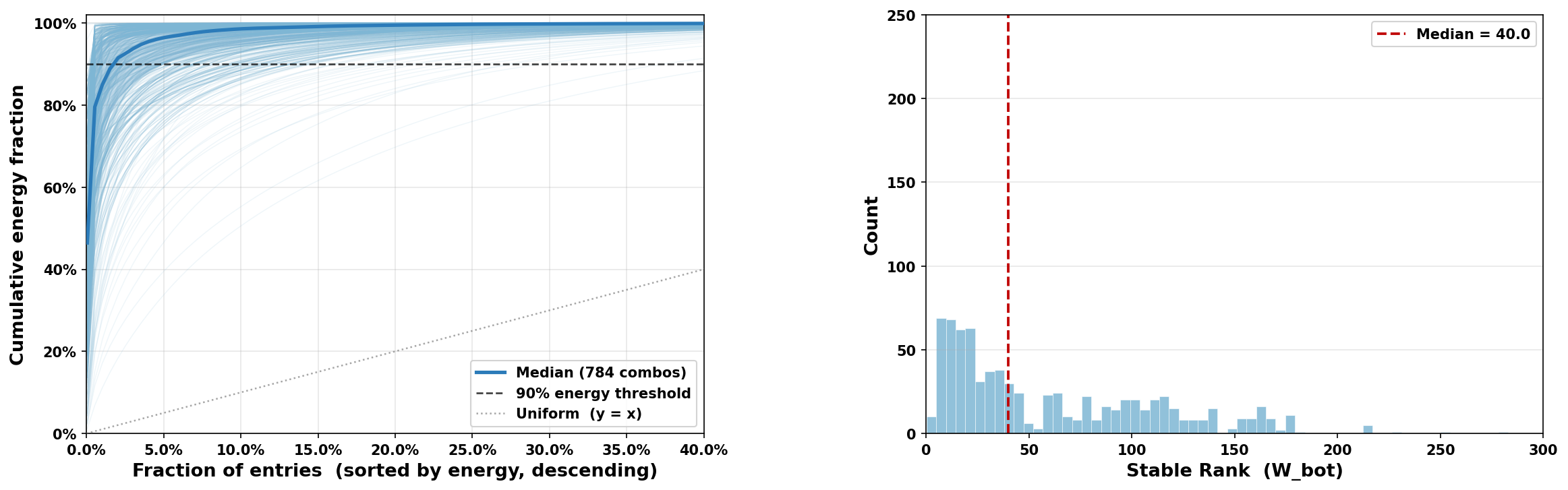}
\caption{
Analysis of causal self-attention matrices from Qwen2-7B-Instruct under a 90\% energy threshold over 784 valid combinations.
}
\label{fig:attn_decomp_qwen}
\end{figure}

\subsection{Visualization of Learned Gate Distributions}
\label{subsec:gate_vis}

We visualize the gate activation intensity across denoising timesteps to examine
whether the gating mechanism captures content-dependent spatial structure.
As shown in Figure~\ref{fig:gate_spatial}, we present the gate activation maps
for five representative layers at ten uniformly sampled denoising timesteps,
alongside a reference frame from the generated video.
Two consistent patterns can be observed.

\textbf{Spatial structure in gate activation aligns with scene content.}
The gate activation maps are not spatially uniform but exhibit structural contours
that correspond to the scene layout visible in the reference frame:
the airplane body, the sky--ground boundary, and the grass--runway edge are all
visibly delineated in the gate intensity maps across all five layers.
This indicates that the gate responds to the semantic content of the video
rather than assigning uniform weight across spatial positions.

\textbf{Spatial structure becomes more pronounced in later denoising steps.}
In early denoising steps ($t \leq 11$), gate activation is relatively uniform
across spatial positions.
As denoising progresses ($t \geq 22$), the spatial contours become increasingly
distinct, with higher gate activation concentrated around semantically meaningful
regions such as the airplane body and scene boundaries.
This evolution suggests that the gate progressively adapts to the emerging
content structure as the video is refined through the denoising process.

\begin{figure}[h]
    \centering
    \begin{minipage}[c]{0.12\linewidth}
        \centering
        \includegraphics[width=\linewidth]{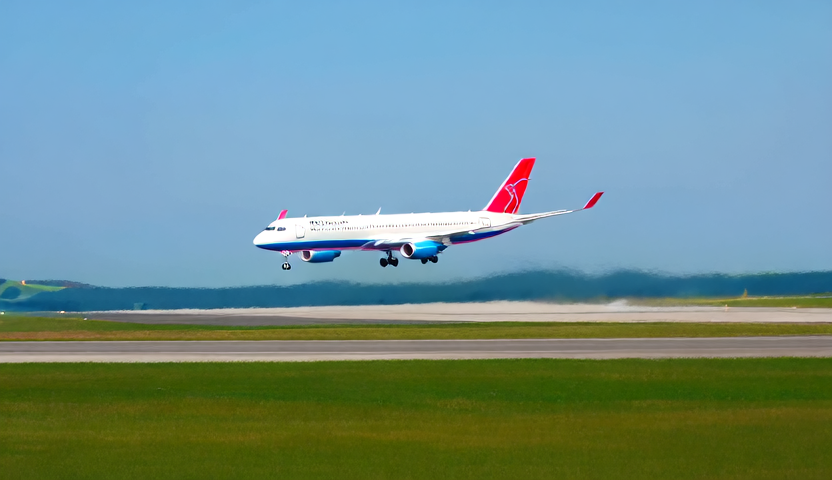}
       \vspace{0.5em} 
        \small\textit{Reference frame}
    \end{minipage}
    \hfill
    \begin{minipage}[c]{0.86\linewidth}
        \centering
        \includegraphics[width=0.8\linewidth]{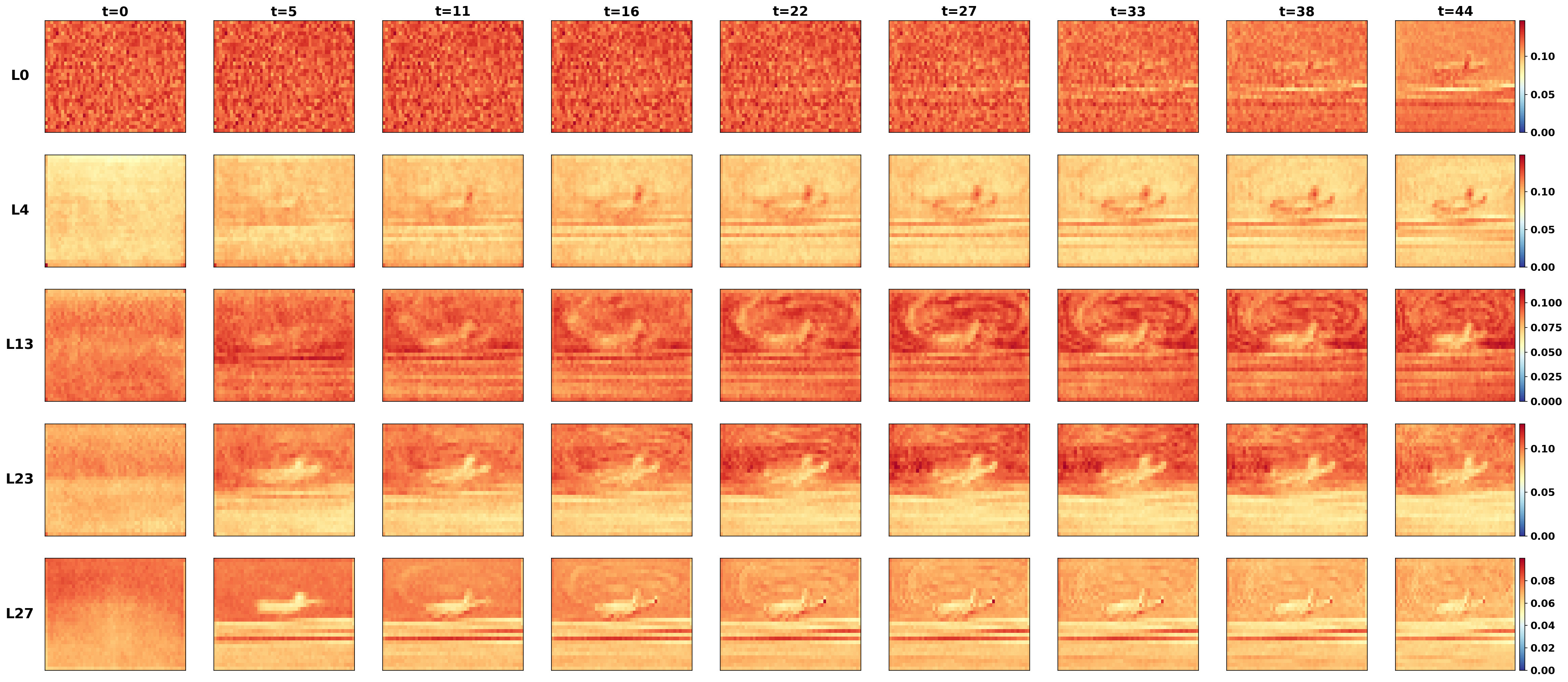}
    \end{minipage}
    \caption{
        Gate activation intensity maps of our RoPeSLR on Wan2.1-T2V-1.3B (480P),
        shown for five representative layers (L0, L4, L13, L23, L27) alongside a
        reference frame from the generated video
        (prompt: \textit{``an airplane taking off''}).
        Each row corresponds to one transformer layer; each column is a uniformly
        sampled denoising timestep ($t{=}0$ to $t{=}49$, 10 steps).
        Each cell shows the mean gate value over all attention heads, averaged over
        $T{=}21$ latent frames and projected onto the $30{\times}52$ spatial latent grid.
        Each row uses an independent color scale.
    }
    \label{fig:gate_spatial}
\end{figure}

\subsection{Mechanistic Interpretability via Gram Spectral Analysis}
\label{subsec:svd_mechanistic_analysis}

To mechanistically validate the algebraic insights in \textbf{Remark \ref{remark:token_wise_derivation}}, we perform Singular Value Decomposition (SVD) on the unnormalized sequence-level outputs of the rank-64 compensator, $O_{\text{lr}} \in \mathbb{R}^{L \times d_h}$. By factoring $O_{\text{lr}} = U \Sigma V^\top$, we extract the singular values $\sigma_i$ (which quantify the energy or variance captured by each mode) and the left singular vectors $\mathbf{u}_i \in \mathbb{R}^L$ (which represent the spatial principal components). Reshaping $\mathbf{u}_i$ back to the native spatiotemporal grid allows us to directly visualize the learned coordinate mappings.

\textbf{Eigenspectrum Analysis: Spectral Collapse and Structural Alignment.}
The normalized eigenspectrum (\textbf{Figure \ref{fig:svd_spectrum}, Top}) plots the energy distribution across the principal components. Without 3D positional priors, the compensator suffers from severe \textit{spectral collapse}: its spectrum degenerates into a flat noise floor, demonstrating an inability to express meaningful spatial variations. Conversely, the explicit injection of 3D PE causally restores this spatial variance. The resulting spectrum tracks the Full Attention oracle, exhibiting a exponential decay. Notably, the sharp energy drop near the truncation index ($r=64$) provides strong empirical validation for the rank choice of RoPeSLR.

\textbf{Spatial Mode Analysis: Pure Decoupling and Low-pass Filtering.}
Visualizing the reshaped dominant spatial modes (\textbf{Figure \ref{fig:svd_spectrum}, Bottom}) exposes the underlying manifold decoupling mechanism. The principal components of exact Full Attention are visibly entangled with high-frequency, localized semantic spikes (e.g., sharp object outlines). In stark contrast, our 3D PE-equipped compensator completely filters out these local semantics, synthesizing highly pure, low-frequency geometric standing waves. This provides definitive evidence that the low-rank compensator acts as a \textit{spatial low-pass filter}. It flawlessly decodes the smooth background continuum, explicitly leaving all high-frequency semantic routing to the complementary sparse branch.

\begin{figure}[h]
    \centering
    \includegraphics[width=0.9\textwidth]{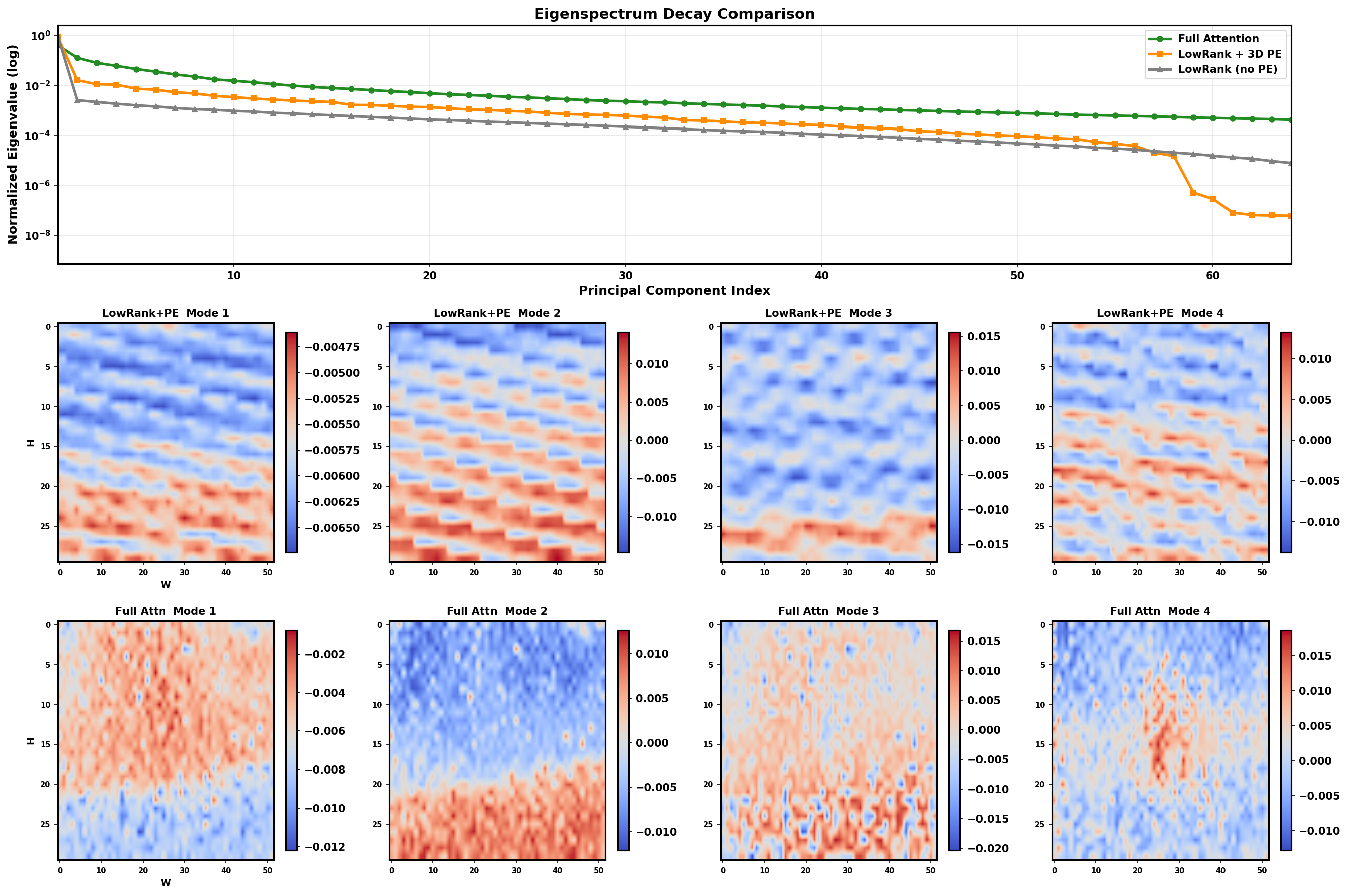}
    \caption{
    \textbf{Mechanistic Interpretability via Gram Spectral Analysis.} 
    \textit{Methodology:} Evaluated on Wan2.1-T2V-1.3B (Layer 15, Denoising Step 10) at $480 \times 832$ resolution. SVD is applied to the pre-activation output $O_{\text{lr}}$ over a latent grid of $18 \times 30 \times 52$ ($L=28,080$). 
    \textit{Analysis:} \textbf{(Top)} The eigenspectrum (energy distribution) shows that 3D PE induces an exponential decay aligning with Full Attention, preventing the spectral collapse seen in the no-PE baseline. The cliff-drop near $r=64$ confirms the theoretical rank bounds. \textbf{(Bottom)} Heatmaps of the top-4 spatial principal components ($\mathbf{u}_1$ to $\mathbf{u}_4$) at center frame $t=9$. Unlike the noisy, high-frequency semantics entangled in Full Attention, our compensator generates clean geometric standing waves, validating its role as a spatial low-pass filter.
  }
    \label{fig:svd_spectrum}
\end{figure}

\subsection{Compatibility with Step-Distilled Models}
\label{app:distillation}

Our method is orthogonal to various step distillation techniques and can be directly deployed on distilled models to achieve compounded acceleration. Step distillation reduces the number of diffusion denoising steps, while our method significantly lowers the attention computation cost per step.

\paragraph{Why Sparse Routing Remains Effective}
RoPeSLR leverages the inherent spatiotemporal sparsity of natural video signals through chunk-based sparse routing. Due to the strong redundancy across consecutive video frames, attention energy naturally concentrates in local spatiotemporal neighborhoods. This structural property is determined solely by input data characteristics, rather than the number of denoising steps.

Step-distilled models require a larger denoising jump in each inference step, which further concentrates attention energy onto informative regions. This improves the efficiency of sparse selection, requiring fewer chunks to reach the predefined energy threshold $\tau$.

\paragraph{Transferability of the Low-Rank Branch}
The low-rank compensator of RoPeSLR is formulated as $\sigma(\sigma(\hat{X} W_A) W_B)$, which learns a compact mapping within the hidden representation space. Step distillation generally preserves the original model architecture, including hidden dimension, number of attention heads, and per-head dimension. Since the representation space remains unchanged, the learned low-rank approximation stays valid.

\paragraph{Architectural Compatibility Analysis}
A key characteristic of step distillation is that it generally maintains the core architecture of the original model. It mainly optimizes the denoising step count without modifying the fundamental model structure. Specifically, step distillation does not alter the self-attention configuration, including network depth, number of attention heads, per-head dimension $d_\text{head}$, and the projection dimensions of Q/K/V, thus preserving the integrity and consistency of the self-attention pathway. Furthermore, RoPeSLR can also be fine-tuned on distilled models to achieve better performance.

\newpage

\section{Algorithm and Hyperparameters}
\label{subsec:hyperparams}
\begin{algorithm}[H]
\caption{RoPeSLR Attention Mechanism}
\label{alg:ropeslr}
\begin{algorithmic}[1]
    \Require Input sequence $X \in \mathbb{R}^{L \times d_{model}}$, 3D coordinates $(t, x, y)$, energy threshold $\tau$, low-rank dimension $r$, and learnable parameters $\Theta$.
    \Ensure Refined representation $O \in \mathbb{R}^{L \times d_{model}}$.
    
    \Statex \textit{// Step 1: Positional Context Recovery}
    \State Construct 3D Absolute PE using trigonometric bases: $\text{PE}_{3\text{D}} = f(t, x, y)$;
    \State Inject geometric priors: $\hat{X} = X + \alpha \odot \text{PE}_{3\text{D}}$; \Comment{Eq. \ref{eq:3D PE+X} in text}
    
    \Statex \textit{// Step 2: Dual-Branch Routing (Per Head $h$)}
    \For{each attention head $h \in \{1, \dots, H\}$}
        \State \textbf{Sparse Branch (Energy-Driven):}
        \State Extract localized high-energy spikes guided by $\tau$: $O_{sparse}^{(h)} = \text{SparseAttn}(X, \mathcal{N}(p); \tau)$; \Comment{via VMoBA\cite{wu2025vmoba}}
        
        \State \textbf{Low-Rank Compensator:}
        \State Perform implicit context decoding: $O_{lowrank}^{(h)} = \sigma\!\left(\sigma(\hat{X} W_A^{(h)}) W_B^{(h)}\right)$; \Comment{$W_A^{(h)} \!\in\! \mathbb{R}^{d_{head} \times r}$, $W_B^{(h)} \!\in\! \mathbb{R}^{r \times d_{head}}$}

    \EndFor
    
    \Statex \textit{// Step 3: Distribution Alignment and Gated Fusion}
    \State Stabilize outputs: $\tilde{O}_{sparse} = \text{RMSNorm}(O_{sparse})$, $\tilde{O}_{lowrank} = \text{RMSNorm}(O_{lowrank})$;
    \State Compute adaptive gating: $g = \sigma(\hat{X} W_g + b_g)$; \Comment{Token-wise gate \ref{eq:gate}}
    \State \textbf{Final Synthesis:} $O = \tilde{O}_{sparse} + g \odot \tilde{O}_{lowrank}$; \Comment{Eq. \ref{eq: output combination} in text}
    
    \State \Return $O$
\end{algorithmic}
\end{algorithm}

Table~\ref{tab:hyperparams} lists the main hyperparameters used for fine-tuning.

\begin{table}[htbp]
\centering
\caption{Training and Model Hyperparameters for our RoPeSLR on Wan2.1-T2V-1.B}
\label{tab:hyperparams}
\begin{tabular}{lc}
\toprule
\textbf{Hyperparameter} & \textbf{Value} \\
\midrule
Optimizer               & AdamW \\
$\beta_1$, $\beta_2$    & 0.9, 0.99 \\
Weight decay & $10^{-5}$ \\
Max gradient norm       & 1.0 \\
Precision               & BF16 \\
Low-rank dimension $r$  & 64 \\
\midrule
\multicolumn{2}{l}{\textit{Two-Stage Training Schedule}} \\
Stage I loss    & MSE  \\
Stage II loss    & Flow Matching Loss \\
Stage I LR schedule     & Constant \\
Stage II LR schedule     & Cosine \\
\midrule
\multicolumn{2}{l}{\textit{Learning Rates (Stage I / Stage II)}} \\
Backbone                & — / $10^{-5}$ \\
Low-rank projections (proj\_a/b) & $3\times10^{-4}$ / $10^{-4}$ \\
Gate projection         & $3\times10^{-4}$ / $10^{-4}$ \\
Gate bias               & $3\times10^{-4}$ / $10^{-4}$ \\
PE scale                & $3\times10^{-4}$ / $10^{-4}$ \\
\midrule
\bottomrule
\end{tabular}
\end{table}

\newpage
\section{Computational Complexity Analysis}
\label{sec:complexity_analysis}

To evaluate the efficiency of RoPeSLR, we provide a detailed theoretical analysis of the floating-point operations (FLOPs) required during the forward pass. Based on our network implementation, we prove two core conclusions:
\begin{itemize}[leftmargin=*, itemsep=0em]
    \item Under identical sparsity, RoPeSLR is strictly more computationally efficient than existing sparse-plus-linear attention frameworks;
    \item Compared to pure sparse attention, the extra computational overhead introduced by our low-rank compensator is asymptotically negligible for long sequences.
\end{itemize}

\subsection{Notation and Basic Definitions}
Let us define the architectural dimensions and hyperparameters as follows:
\begin{itemize}[leftmargin=*, itemsep=0em]
    \item $B$: Batch size.
    \item $L$: Spatiotemporal sequence length.
    \item $H$: Number of attention heads.
    \item $d_h$: Feature dimension per head, yielding a total hidden dimension of $d_{\text{model}} = H \times d_h$.
    \item $S$: Sparsity ratio of the sparse branch (the proportion of masked tokens). The active computation ratio is $(1-S)$.
    \item $r$: The uniform low-rank bottleneck dimension applied across all attention heads.
\end{itemize}
Note: For standard matrix multiplication between $X \in \mathbb{R}^{M \times N}$ and $Y \in \mathbb{R}^{N \times P}$, the required FLOPs are $2MNP$. Memory-bound operations (e.g., element-wise addition for 3D PE injection, RMSNorm, and Sigmoid activations) scale linearly with the tensor size $\mathcal{O}(BHLd_h)$ and are structurally negligible compared to the dominant multiply-accumulate (MAC) FLOPs.

\subsection{Complexity Anatomy of RoPeSLR}
The attention computation within a single RoPeSLR transformer block can be decoupled into three primary components:

\textbf{Adaptive Sparse Branch.}
For block-sparse attention (e.g., VMoBA) with a sparsity ratio of $S$, each query token attends to $L(1-S)$ key tokens. The computation involves the $QK^\top$ similarity and the subsequent $\text{Softmax}(\cdot)V$ aggregation. The total FLOPs are:
\begin{equation}
    \mathcal{C}_{\text{sparse}} = \sum_{h=1}^H \left( 2 \cdot B \cdot L \cdot [L(1-S)] \cdot d_h + 2 \cdot B \cdot L \cdot [L(1-S)] \cdot d_h \right) = 4 B H L^2 (1-S) d_h
\end{equation}

\textbf{Head-wise Low-Rank Compensator.}
Unlike standard linear attention, which computes a global sequence-level $K^\top V$ state explicitly, we parameterize the global context implicitly via a head-wise two-layer MLP: $\hat{X}^{(h)} \to W_A^{(h)} \to \sigma \to W_B^{(h)} \to \sigma$. For the input slice $\hat{X}^{(h)} \in \mathbb{R}^{B \times L \times d_h}$ corresponding to head $h$:
\begin{itemize}[leftmargin=*, itemsep=0em]
    \item Down-projection $\hat{X}^{(h)} W_A^{(h)}$, where $W_A^{(h)} \in \mathbb{R}^{d_h \times r}$. FLOPs: $2 B L d_h r$.
    \item Up-projection $\sigma(\cdot) W_B^{(h)}$, where $W_B^{(h)} \in \mathbb{R}^{r \times d_h}$. FLOPs: $2 B L r d_h$.
\end{itemize}
Summing over all $H$ heads, the computational cost of the uniform low-rank compensator is:
\begin{equation}
    \mathcal{C}_{\text{lowrank}} = \sum_{h=1}^H \left( 2 B L d_h r + 2 B L r d_h \right) = 4 B H L d_h r
\end{equation}

\textbf{Gating and Fusion.}
This includes the gating projection ($W_g \in \mathbb{R}^{d_{\text{model}} \times 1}$) and the final gated convex combination:
\begin{itemize}[leftmargin=*, itemsep=0em]
    \item Gating projection $\hat{X} W_g$: $2 B L d_{\text{model}} \cdot 1 = 2 B H L d_h$ FLOPs.
\end{itemize}
\begin{equation}
    \mathcal{C}_{\text{fusion}} = 2 B H L d_h 
\end{equation}

Thus, the total MAC complexity of RoPeSLR is:
\begin{equation}
    \mathcal{C}_{\text{RoPeSLR}} = 4 B H L^2 (1-S) d_h + 4 B H L d_h r + 2 B H L d_h
\end{equation}

\subsection{Theoretical Superiority over Sparse-Linear Frameworks}
State-of-the-art fusion methods (e.g., SLA) adopt a "Sparse + Linear Attention" paradigm. The linear attention branch utilizes a feature map $\phi(\cdot): \mathbb{R}^{d_h} \to \mathbb{R}^{d_h}$ and computes the attention via matrix associativity:
\begin{enumerate}[leftmargin=*, itemsep=0em]
    \item \textbf{Global Context Aggregation.} $KV_{\text{ctx}} = \phi(K)^\top V$. Here, $\phi(K), V \in \mathbb{R}^{L \times d_h}$. The FLOPs are $2 B H L d_h^2$.
    \item \textbf{Context Retrieval.} $O_{\text{lin}} = \phi(Q) KV_{\text{ctx}}$. Here, $\phi(Q) \in \mathbb{R}^{L \times d_h}$ and $KV_{\text{ctx}} \in \mathbb{R}^{d_h \times d_h}$. The FLOPs are $2 B H L d_h^2$.
\end{enumerate}
Therefore, the baseline linear attention branch fundamentally imposes an $\mathcal{O}(d_h^2)$ bottleneck per token:
\begin{equation}
    \mathcal{C}_{\text{linear\_branch}} = 4 B H L d_h^2
\end{equation}
Assuming identical sparse branches and gating mechanisms, the efficiency disparity between RoPeSLR and standard linear hybrids lies entirely in the global context approximation branches. The ratio of their computational costs is exactly:
\begin{equation}
    \frac{\mathcal{C}_{\text{lowrank}}}{\mathcal{C}_{\text{linear\_branch}}} = \frac{4 B H L d_h r}{4 B H L d_h^2} = \frac{r}{d_h}
\end{equation}

\textbf{Conclusion 1.} In modern video DiTs, the head dimension $d_h$ is typically large (e.g., $d_h=128$). As established in Theorem \ref{theorem: main}, the exponential spectral decay of 3D RoPE yields an extremely low stable rank for the background context. This structural prior allows us to employ a highly compact, uniform bottleneck rank $r \ll d_h$ (e.g., $r=64$) across all heads without sacrificing representation capacity. Consequently, $\mathcal{C}_{\text{lowrank}} \ll \mathcal{C}_{\text{linear\_branch}}$. This theoretically proves that \textbf{RoPeSLR is structurally more compute-efficient than existing sparse-plus-linear architectures at identical sparsity levels}.

\subsection{Negligible Overhead Compared to Pure Sparse Attention}
To address potential concerns regarding the computational overhead introduced by our low-rank compensator, we analyze the extra overhead ratio $\eta$ relative to the pure sparse attention baseline:
\begin{equation}
    \eta = \frac{\mathcal{C}_{\text{lowrank}} + \mathcal{C}_{\text{fusion}}}{\mathcal{C}_{\text{sparse}}} = \frac{4 B H L d_h r + 2 B H L d_h}{4 B H L^2 (1-S) d_h} = \frac{4 r + 2}{4 L (1-S)} = \frac{r + 0.5}{L (1-S)}
\end{equation}

\textbf{Conclusion 2.} In high-fidelity video generation, scaling up spatiotemporal resolution inherently leads to a cubic explosion in sequence length ($L \ge 10^4$). Even at extreme sparsity settings (e.g., $S=0.90$), the number of active tokens $L(1-S)$ remains firmly in the thousands. Crucially, whether the uniform rank $r$ scales strictly according to our logarithmic theoretical bound ($r = \mathcal{O}(\log L)$) or is set as a fixed small hyperparameter ($r = \mathcal{O}(1)$), its growth is overwhelmingly eclipsed by the linear denominator $L$. Taking the asymptotic limit as sequence length scales:
\begin{equation}
    \lim_{L \to \infty} \eta = 0
\end{equation}
This proves that \textbf{as the sequence length scales towards ultra-long videos, the additional computational overhead of the RoPeSLR compensator strictly converges to zero}. In empirical practice, this overhead accounts for less than 5\% of the total computation. Therefore, RoPeSLR successfully restores the critical global distance decay of 3D RoPE while perfectly preserving the extreme acceleration benefits of pure sparse attention.

\section{Broader Impacts}\label{sec: broader impacts}
This work introduces RoPeSLR, an efficient attention framework for Diffusion Transformers (DiTs). By significantly reducing the computational overhead and memory footprint of high-fidelity video generation, our method contributes to the development of "Green AI" by lowering energy consumption and carbon emissions associated with large-scale video inference. Furthermore, it democratizes access to most widely used video generation models, enabling researchers and practitioners with limited computational resources to deploy and build upon these powerful generative tools.

\newpage

\end{document}